\DeclareMathOperator*{\argmin}{arg\,min}
\DeclareMathOperator{\cumsum}{cumsum}
\DeclareMathOperator{\card}{card}
\theoremstyle{plain}
\newtheorem{theorem}{Theorem}[section]
\newtheorem{proposition}[theorem]{Proposition}
\newtheorem{lemma}[theorem]{Lemma}
\theoremstyle{definition}
\theoremstyle{remark}
\let\emptyset\varnothing
\begin{document}

% If your paper is accepted and the title of your paper is very long,
% the style will print as headings an error message. Use the following
% command to supply a shorter title of your paper so that it can be
% used as headings.
%
%\runningtitle{I use this title instead because the last one was very long}

% If your paper is accepted and the number of authors is large, the
% style will print as headings an error message. Use the following
% command to supply a shorter version of the authors names so that
% they can be used as headings (for example, use only the surnames)
%
%\runningauthor{Surname 1, Surname 2, Surname 3, ...., Surname n}

\twocolumn[

\aistatstitle{Strong Screening Rules for Group-based SLOPE Models}

\aistatsauthor{Fabio Feser \And Marina Evangelou}
\aistatsaddress{ Imperial College London \And  Imperial College London} ]

\begin{abstract}
Tuning the regularization parameter in penalized regression models is an expensive task, requiring multiple models to be fit along a path of parameters. Strong screening rules drastically reduce computational costs by lowering the dimensionality of the input prior to fitting. We develop strong screening rules for group-based Sorted L-One Penalized Estimation (SLOPE) models: Group SLOPE and Sparse-group SLOPE. The developed rules are applicable to the wider family of group-based OWL models, including OSCAR. Our experiments on both synthetic and real data show that the screening rules significantly accelerate the fitting process. The screening rules make it accessible for group SLOPE and sparse-group SLOPE to be applied to high-dimensional datasets, particularly those encountered in genetics.
\end{abstract}

\section{INTRODUCTION}
As the amount of data collected increases, the emergence of high-dimensional data, where the number of features ($p$) is much larger than the number of observations ($n$), is becoming increasingly common in fields ranging from genetics to finance. Performing regression and discovering relevant features on these datasets is a challenging task, as classical statistical methods tend to break down. The most popular approach to meeting this challenge is the lasso \citep{Tibshirani1996RegressionLasso}, which has given rise to the general penalized regression framework
\begin{equation}\label{eqn:general_problem}
    \hat\beta(\lambda) \in \argmin_{\beta \in \mathbb{R}^p} \left\{ f(\beta)+\lambda J(\beta;v)\right\},
\end{equation}
where $f$ is a differentiable and convex loss function, $J$ is a convex penalty norm, $v$ are penalty weights, and $\lambda>0$ is the regularization parameter. 

A key aspect of fitting a penalized model is to tune the value of $\lambda$ along an $l$-length path $\lambda_1\geq \ldots \geq \lambda_l\geq 0$. Several approaches exist for tuning this parameter, including cross-validation \citep{Homrighausen,lassocv} and exact solution path algorithms \citep{LARS}, but these can be computationally expensive. Screening rules help alleviate these costs by discarding variables that are inactive at the optimal solution, thus reducing input dimensionality prior to optimization. 

Denote the \textit{active set} of coefficients at path point $\lambda_{k+1}$, for $k\in [l-1]:=\{1,\ldots,l-1\}$, by $\mathcal{A}_v(\lambda_{k+1}) = \{i \in [p]:\hat\beta_i(\lambda_{k+1})\neq 0\}$. The goal of a (sequential) screening rule is to use the solution at $\lambda_k$ to recover a \textit{screened set} of features, $\mathcal{S}_v(\lambda_{k+1})$, which is a superset of $\mathcal{A}_v(\lambda_{k+1})$. The screened set is then used as input for calculating the fitted values, leading to significant computational savings.

There are two types of screening rules: \textit{safe} and \textit{heuristic}. Safe rules are guaranteed to only discard inactive variables and mostly follow the Safe Feature Elimination (SAFE) framework \citep{Ghaoui2010SafeProblems}, in which safe regions for variables are constructed. Other notable examples include Slores \citep{Wang2014ARegression}, the dome test \citep{Xiang2012FastCorrelations}, and Dual Polytope Projections (DPP) \citep{Wang2013LassoProjection}, as well as sample screening \citep{Shibagaki2016} and other examples given in \cite{pmlr-v28-ogawa13b, Fercoq2015MindLasso,pmlr-v119-atamturk20a}. 

Heuristic rules tend to follow the strong screening rule framework, proposed by \cite{tibshirani2010strong}, and discard considerably more variables than safe rules. However, they can incorrectly discard active variables, so are complemented by a check of the Karush–Kuhn–Tucker (KKT) \citep{Kuhn1950NonlinearProgramming} stationarity conditions. Other strong rules include Blitz \citep{pmlr-v37-johnson15}, SIS \citep{Fan2008}, and ExSIS \citep{AHMED201933}. Hybrid schemes exist, using both safe and heuristic rules \citep{Zeng2021, Wang2022}.
%There have been attempts to combine safe and heuristic rules, leading to hybrid screening regimes \cite{Zeng2021,Wang2022}.
%However, screening rules for adaptive extensions to the lasso are rarer. }
%Strong screening rules are broken down into sequential and basic rules. A sequential rule is applied to a path of regularization parameters. As a basic rule is a generalisation of a sequential rule, we only consider sequential rules in this paper. 

A strong screening rule is formulated through the KKT stationarity conditions for Equation \ref{eqn:general_problem}, given by
\begin{equation}\label{eqn:kkt_condition}
    \mathbf{0} \in \nabla f(\beta) + \lambda\partial J(\beta;v).
\end{equation}
If the gradient were available, the active set could be identified exactly by checking the subdifferential of the norm at zero:
\begin{equation}\label{eqn:subdiff_at_zero}
    \nabla f(\beta) \in \lambda \partial J(\mathbf{0};v) = \{x\in\mathbb{R}^p : J^*(x;\lambda v) \leq 1\},
\end{equation}
where $J^*$ is the dual norm of $J$ and $\partial J(\mathbf{0};v)$ is the unit ball of the dual norm. So, $J^*(\nabla f(\beta);\lambda v)\leq 1$ indicates that $\beta=0$. As the gradient at $\lambda_{k+1}$ is not available, a model-specific approximation of the gradient is derived to find a screened subset of the features, $\mathcal{S}_v(\lambda_{k+1})$, such that $\mathcal{A}_v(\lambda_{k+1})\subset\mathcal{S}_v(\lambda_{k+1})$. 

%Strong screening rules are usually time efficient as the extra cost of the screening algorithm and KKT checks tend to be outweighed by the significant savings of fitting on a reduced input space. Given the above outlined advantages of the strong screening rules, in this manuscript, we have focused on the development of strong screening rules for group-based models.

\subsection{Screening Approaches for SLOPE} 
As the lasso is inconsistent under certain scenarios \citep{Zou2006},  several adaptive extensions have been proposed, including the Sorted L-One Penalized Estimation (SLOPE) model \citep{Bogdan2015SLOPEAdaptiveOptimization}. SLOPE applies the sorted $\ell_1$ norm $J_\text{slope}(\beta;v) = \sum_{i=1}^p v_i|\beta|_{(i)}$, where $v_1 \geq \ldots \geq v_p \geq 0, |\beta|_{(1)}\geq \ldots \geq |\beta|_{(p)}$. One key advantage of SLOPE is its ability to control the variable false discovery rate (FDR) under orthogonal data. Additional powerful properties include: it clusters strongly correlated features, it finds the minimum total squared error loss across different sparsity levels, removing the need for prior knowledge of sparsity, and it is asymptotically minimax \citep{figueiredo2014sparseestimationstronglycorrelated,SuCandes}. All of these useful properties have meant that SLOPE has found widespread use in machine learning and genetics \citep{Gossmann2015,virouleau2017highdimensionalrobustregressionoutliers,KREMER2020105687,10.1371/journal.pone.0269369,riccobello2023sparsegraphicalmodellingsorted}.

Both safe \citep{Bao2020FastRules,Elvira2021SafeProblem} and strong \citep{Larsson2020a} rules have been proposed for SLOPE, as well as exact solution path algorithms \citep{Nomura2020,Dupuis2023}. As SLOPE is a non-separable penalty, safe screening requires repeated screening during optimization, which is expensive due to the repeated dual norm evaluations required for the safe regions \citep{Larsson2020a}.

\paragraph{Group-based SLOPE Models}
In genetics, the analysis of grouped features is frequently encountered, as genes are grouped into pathways for the completion of a specific biological task. To use this grouping information, SLOPE has been extended to group (gSLOPE) and sparse-group (SGS) regression.

For a set of $m$ non-overlapping groups, $\mathcal{G}_1,\dots,\mathcal{G}_m$  of sizes $p_1,\ldots, p_m$, Group SLOPE (gSLOPE) \citep{Gossmann2015,Brzyski2019GroupPredictors} is given by
\begin{equation}
    J_\text{gslope}(\beta;w) = \sum_{g=1}^m \sqrt{p_g} w_g \|\beta^{(g)}\|_2,
\end{equation}
such that $\beta^{(g)}\in\mathbb{R}^{p_g}$ is a vector of the group coefficients. The norm has ordered penalty weights $w_1\geq \ldots \geq w_g \geq 0$ (described in Appendix \ref{appendix:gslope_pen_seq}) which are matched to $\sqrt{p_1}\|\beta^{(1)}\|_2 \geq \ldots\geq \sqrt{p_m}\|\beta^{(m)}\|_2$.

Sparse-group SLOPE (SGS) \citep{Feser2023Sparse-groupFDR-control} was further proposed as a convex combination of SLOPE and gSLOPE for concurrent variable and group selection. For $\alpha \in [0,1]$, with weights $(v,w)$ (described in Appendix \ref{appendix:sgs_pen_seq}), the norm is given by
\begin{equation*}
    J_\text{sgs}(\beta;\alpha,v,w) = \alpha J_\text{slope}(\beta; v) +  (1-\alpha) J_\text{gslope}(\beta; w).
\end{equation*}
Both approaches control the FDR under orthogonal data: gSLOPE at the group-level \citep{Brzyski2019GroupPredictors} and SGS at both levels \citep{Feser2023Sparse-groupFDR-control}. SGS has been found to outperform other methods at selection and prediction \citep{Feser2023Sparse-groupFDR-control}.

\subsection{Contributions} 
No screening rules exist for group-based SLOPE models. The strong screening rule framework introduced by \cite{tibshirani2010strong} facilitated the extension from the lasso to the group lasso by requiring only knowledge of the dual norm. However, this framework is restricted to separable penalties, making it unsuitable for SLOPE. Instead, in \cite{Larsson2020a}, the subdifferential of SLOPE is used to derive strong rules.

Motivated by this, we propose a new strong screening framework for sparse-group norms (Section \ref{sec:framework}), used to develop screening for gSLOPE (Section \ref{sec:gSLOPE}) and SGS (Section \ref{sec:SGS}) (with proofs of the results provided in Appendices \ref{appendix:gslope_theory} and \ref{appendix:sgs_theory}). The framework applies two screening layers, drastically reducing dimensionality (Figure \ref{fig:bi-level-screening}). The screening requires knowledge of the subdifferentials, necessitating a general derivation for gSLOPE (Theorem \ref{thm:gslope_subdiff}).

The choice of strong screening over safe is motivated by two main reasons. First, strong rules discard significantly more variables than safe rules \citep{tibshirani2010strong}. Second, safe rules require calculating the dual norm set. For SGS, this is a sum of convex sets. Determining if a point lies in this set requires knowledge of the summation's decomposability, which is a difficult task \citep{Wang2014Two-LayerSets}. This challenge is addressed in Section \ref{section:sgs_group_screen}. %This is not fully resolved by strong screening and is addressed in Section \ref{section:sgs_group_screen}.

The analysis of synthetic and real datasets shows that our proposed screening rules considerably improve runtime (Section \ref{section:results}). The reduced input dimensionality from the screening also eases convergence issues with large datasets. These improvements are achieved without affecting solution optimality.

\begin{figure}[t]
    \centering
    \includegraphics[width=1\linewidth]{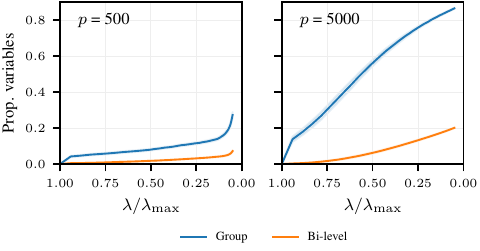}
    \caption{The proportion of variables in $\mathcal{S}_v$ relative to $p$ for group-only and bi-level screening applied to SGS, plotted along the regularization path with 95\% confidence intervals. Synthetic data was generated under a linear model for $p = 500, 5000$ (Section \ref{section:results_sim}), with results averaged over 100 repetitions.}
        \label{fig:bi-level-screening}
\end{figure} 
%Whilst our rules are designed to be used in a pathwise environment, any sequential screening rule is also a basic screening rule for $\lambda$, by applying the screening on the augmented path $(\lambda_\text{max},\lambda)$, where $\lambda_\text{max}$ is the point at which the first predictor enters the model.
%Strong screening rules, on the other hand, are usually more time efficient as the extra cost of the screening and KKT checks tend to be outweighed by the significant savings of fitting on a reduced input space \citep{tibshirani2010strong}. Motivated by this, we introduce strong (sequential) screening rules for two group-based extensions of SLOPE: Group SLOPE (gSLOPE) \citep{Brzyski2019GroupPredictors} and Sparse-group SLOPE (SGS) \citep{Feser2023Sparse-groupFDR-control}. Currently, no screening rules exist for either of these models. Whilst our rules are designed to be used in a pathwise environment, any sequential screening rule is also a basic screening rule for $\lambda$, by applying the screening on the augmented path $(\lambda_\text{max},\lambda)$, where $\lambda_\text{max}$ is the point at which the first predictor enters the model.

As the proposed screening rules only require that the penalty sequences ($v, w$) are ordered, they apply to the wider class of group-based Ordered Weighted $\ell_1$ (OWL) models. Popular special cases of OWL models include SLOPE and the Octagonal Shrinkage and Clustering Algorithm for Regression (OSCAR) model \citep{oscar} (the description of group-based OSCAR models is provided in Section \ref{section:owl}).

%For such situations, group and sparse-group regression models have proven useful. SLOPE has been extended to the group and sparse-group regression settings through \textit{Group SLOPE} (\textit{gSLOPE}) \cite{Brzyski2019GroupPredictors} and \textit{Sparse-Group SLOPE} (\textit{SGS}) \cite{Feser2023Sparse-groupFDR-control}. Both these approaches have been shown to inherit SLOPE's ability to control FDR under orthogonal data, such that gSLOPE is able to control group FDR \citep{Brzyski2019GroupPredictors} and SGS offers both variable and group FDR control \citep{Feser2023Sparse-groupFDR-control}.

%Both safe and strong rules have previously been proposed in the literature for SLOPE .  As SLOPE is a non-separable penalty, safe screening rules tend to require repeated screening during the fitting process, whereas strong screening rules do not and hence are preferred \cite{Larsson2020a}. 
%Safe screening rules for SLOPE have been proposed in \cite{Bao2020FastRules,Elvira2021SafeProblem}. A strong screening rule for SLOPE has been proposed in \cite{Larsson2020a}.

\paragraph{Notation}
The sets of active and inactive groups are given  by $\mathcal{A}_g = \{g \in [m]: \|\hat\beta^{(g)}\|_2\neq 0\}$ and $\mathcal{Z} = \{g \in [m]: \|\hat\beta^{(g)}\|_2= 0\}$. Their corresponding set of variable indices are denoted by $\mathcal{G}_\mathcal{A}$ and $\mathcal{G}_\mathcal{Z}$. The cardinality of a vector $x$ is denoted by $\card(x)$. The operators $(\cdot)_\downarrow$ and $(\cdot)_{|\downarrow|}$ sort a
vector into decreasing and decreasing absolute form. We use $\preceq$ to denote element-wise inequality signs. The operator $\mathcal{O}(\cdot)$ returns the index of a vector sorted into decreasing absolute form. The cumulative summation operator applied on a vector is denoted by $\text{cumsum}(x) = [x_1,x_1+x_2,\ldots, \sum_{i=1}^{\card(x)} x_i]$.

\section{SPARSE-GROUP STRONG SCREENING}\label{sec:framework}
Sparse-group models, such as SGS and the sparse-group lasso (SGL) \citep{Simon2013}, apply both variable and group penalization so that bi-level screening is possible. Safe rules that perform bi-level screening exist for SGL \citep{Wang2014Two-LayerSets,Ndiaye2016GAPLasso}, but there are no such strong rules. The strong screening framework \citep{tibshirani2010strong} does not extend to sparse-group or non-separable norms, and the strong rule derived for SGL in \citet{Liang2022Sparsegl:Lasso} applies only group-level screening. 

\paragraph{Framework} We introduce a new sparse-group framework (Algorithm \ref{alg:sgs_framework}), based on the strong framework by \citet{tibshirani2010strong}, for applying strong screening to sparse-group norms, allowing for bi-level screening. By applying bi-level screening for SGS, a substantially larger proportion of variables are discarded than with group screening alone (Figure \ref{fig:bi-level-screening}).

First, a screened set of groups is computed, $\mathcal{S}_g$.  An additional layer of screening is then performed to compute $\mathcal{S}_v$ using $\mathcal{S}_g$. This, combined with the previously active variables, forms the reduced input set for fitting, $\mathcal{E}_v$. KKT checks are performed on $\mathcal{E}_v$ to ensure no violations have occurred (Appendices \ref{appendix:gslope_kkt} and \ref{appendix:sgs_kkt}). 
Based on this framework, the SGS rules are derived in Section \ref{sec:SGS}. The screening for gSLOPE (Section \ref{sec:gSLOPE}) also uses this framework but does not perform the variable screening and instead takes $\mathcal{S}_v$ as all variables in the groups of $\mathcal{S}_g$. The KKT checks are then performed only on the groups. Appendix \ref{appendix:framework} describes the implementation of the framework for gSLOPE and SGS.

The main cost of the framework is calculating the fitted values. For $t$ iterations of ATOS (a proximal algorithm used to fit SGS, see Section \ref{section:results_sim}), the convergence rate is $O(1/t)$ \citep{pmlr-v80-pedregosa18a} and we expect a time complexity of $O(tp^2)$ for a proximal algorithm \citep{https://doi.org/10.1002/wics.1602}.
%Denoting the cardinality of a set or vector $x$ is denoted by $\card(x)$. 

%Safe screening rules for the sparse-group lasso are available \cite{Wang2014Two-LayerSets, Ndiaye2016GAPSparse-Group-Lasso}, but the strong screening rule framework proposed in \cite{tibshirani2010strong} does not fully extend to sparse-group penalties. We introduce a new sparse-group framework for applying strong screening rules, based on \cite{tibshirani2010strong}, which allows for bi-level screening. The aim of the framework is to identify a final subset of predictors, $\mathcal{E}_v$, as is done for regular strong rules. This is done by using the group penalty to firstly discard irrelevant groups. 

%The framework is described in Algorithm \ref{alg:sgs_framework}. The framework applied specifically to SGS is described fully in Appendix \ref{appendix:sgs_framework}. The screening for gSLOPE follows a similar framework, where $\mathcal{S}_v(\lambda_{k+1})$ is taken as all variables within the groups of $\mathcal{S}_g(\lambda_{k+1})$ and the KKT checks are performed on the groups (full details are provided in Appendix \ref{appendix:gslope_framework}).
\vspace{3.6mm}
\begin{algorithm}[H]
   \caption{Sparse-group screening framework}
   \label{alg:sgs_framework}
\begin{algorithmic}
   \STATE {\bfseries Input:}  $(\lambda_1,\ldots,\lambda_l)\in\mathbb{R}^l$, $\mathbf{X}\in \mathbb{R}^{n\times p}, y\in \mathbb{R}^n$
\FOR{$k=1$ {\bfseries to} $l-1$}
\STATE $\mathcal{S}_g(\lambda_{k+1})$ $\leftarrow$ group screening on full input
\STATE $\mathcal{S}_v(\lambda_{k+1})$ $\leftarrow$ variable screening on $g \in \mathcal{S}_g(\lambda_{k+1})$
\STATE $\mathcal{E}_v \leftarrow \mathcal{S}_v(\lambda_{k+1}) \cup \mathcal{A}_v(\lambda_{k})$
\STATE compute $\hat\beta_{\mathcal{E}_v}(\lambda_{k+1})$ 
\STATE $\mathcal{K}_v \leftarrow$ variable KKT violations on $\hat\beta(\lambda_{k+1})$
\WHILE{$ \card(\mathcal{K}_v)> 0$}
\STATE $\mathcal{E}_v \leftarrow \mathcal{E}_v\cup \mathcal{K}_v$
\STATE compute $\hat\beta_{\mathcal{E}_v}(\lambda_{k+1})$
\STATE $\mathcal{K}_v \leftarrow$ variable KKT violations on $\hat\beta(\lambda_{k+1})$
\ENDWHILE
\ENDFOR
 \STATE {\bfseries Output:} $\hat\beta(\lambda_1), \ldots, \hat\beta(\lambda_l) \in \mathbb{R}^{p}$
\end{algorithmic}
\end{algorithm}

\section{GROUP SLOPE}\label{section:gslope}
\label{sec:gSLOPE}
%Suppose there exists some group structure of the features,  where  $\mathcal{G}_1,\dots,\mathcal{G}_m$ denote the set of variable indices for each group. 
%Group SLOPE (gSLOPE) is an extension of SLOPE for selecting groups and its solution is defined as
%\begin{equation}\label{eqn:gslope_problem}
%    \hat\beta_\text{gslope}(\lambda) = \argmin_{\beta \in \mathbb{R}^p} \left\{f(\beta) + \lambda J_\text{gslope}(\beta;w)\right\},
%\end{equation}
%where $J_\text{gslope}(\beta;w) = \sum_{g=1}^m \sqrt{p_g} w_g \|\beta^{(g)}\|_2$ is the gSLOPE norm, such that $\beta^{(g)}\in\mathbb{R}^{p_g}$ is a vector of the group coefficients,  and $p_g$ denotes the group sizes \citep{Brzyski2019GroupPredictors}. The norm has ordered penalty weights $w_1\geq \ldots \geq w_g \geq 0$, described in detail in the Supplementary Material, is matched to $\sqrt{p_1}\|\beta^{(1)}\|_2 \geq \ldots\geq \sqrt{p_m}\|\beta^{(m)}\|_2$. %\textcolor{red}{These weights are described in Appendix \ref{appendix:gslope_pen_seq}.} 

The strong rule for gSLOPE is formulated by checking the zero condition of the subdifferential (as per Equation \ref{eqn:subdiff_at_zero}) derived in Theorem \ref{thm:gslope_subdiff}. To derive the subdifferential, we define the operator
\begin{equation*}
    [b]_{\mathcal{G},q} := (p_1^{q}\|b^{(1)}\|_2,\dots,p_m^{q}\|b^{(m)}\|_2)^\top. 
\end{equation*}
In particular, $[b]_{\mathcal{G}_\mathcal{Z},-0.5}$ is the operator applied only to the inactive groups using the quotient $q = -0.5$.

\begin{theorem}[gSLOPE subdifferential]\label{thm:gslope_subdiff}
The subdifferential for gSLOPE is given by
\begin{equation*}
        \partial J_\text{gslope}(\beta;w) =  
        \begin{dcases}
        \begin{array}{l}
         \big\{x \in \mathbb{R}^{\card\mathcal{G}_\mathcal{Z}}: \\
         {[x]}_{\mathcal{G}_\mathcal{Z},-0.5} \in \partial J_\text{slope}(0; w_{\mathcal{Z}})\big\},
        \end{array} \text{at 0.}& \\
         \left\{w_g \sqrt{p_g}\frac{\beta^{(g)}}{\|\beta^{(g)}\|_2} \right\}, \;\;\;\;\;\;\;\;\; \text{otherwise.}&
        \end{dcases}
\end{equation*}
%where $\partial J_{\text{slope}}$ is the subdifferential of the SLOPE norm \citep{Bogdan2015SLOPEAdaptiveOptimization}. 
\end{theorem}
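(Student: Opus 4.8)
The plan is to realise gSLOPE as the SLOPE norm composed with a group-norm map and then reduce everything to the subdifferential of SLOPE. Set $\phi(\beta)=[\beta]_{\mathcal{G},1/2}=(\sqrt{p_1}\|\beta^{(1)}\|_2,\dots,\sqrt{p_m}\|\beta^{(m)}\|_2)^\top\in\mathbb{R}^m_{\ge0}$, so that $J_\text{gslope}(\beta;w)=J_\text{slope}(\phi(\beta);w)$. The outer function $J_\text{slope}(\cdot;w)$ is a finite polyhedral convex function that is nondecreasing in each coordinate on $\mathbb{R}^m_{\ge0}$ (since $w\succeq0$, enlarging a nonnegative entry only enlarges the sorted sum), and each $\phi_g$ is a scaled Euclidean norm of a coordinate block, hence finite and convex. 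The subdifferential chain rule for a nondecreasing convex function composed with convex functions --- which needs no constraint qualification here because $\phi$ is finite-valued and $J_\text{slope}$ is polyhedral --- then gives: $u\in\partial J_\text{gslope}(\beta;w)$ iff there exists $\eta\in\partial J_\text{slope}(\phi(\beta);w)$, necessarily with $\eta\succeq0$ by monotonicity, such that $u^{(g)}\in\sqrt{p_g}\,\eta_g\,\partial\|\cdot\|_2(\beta^{(g)})$ for every $g\in[m]$.

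It remains to evaluate the two kinds of blocks and identify the admissible $\eta$. For an active group $g\in\mathcal{A}_g$, $\partial\|\cdot\|_2(\beta^{(g)})=\{\beta^{(g)}/\|\beta^{(g)}\|_2\}$, so $u^{(g)}=\sqrt{p_g}\,\eta_g\,\beta^{(g)}/\|\beta^{(g)}\|_2$; moreover, the coordinates of $\phi(\beta)$ on $\mathcal{A}_g$ are exactly the strictly positive ones and thus (under the convention matching the largest $\sqrt{p_g}\|\beta^{(g)}\|_2$ against $w_1$) occupy the leading sorted positions, so testing the subgradient inequality for $J_\text{slope}$ at $\phi(\beta)$ along $\pm e_g$ forces $\eta_g=w_g$, giving the ``otherwise'' line. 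For an inactive group $g\in\mathcal{Z}$, $\partial\|\cdot\|_2(0)$ is the Euclidean unit ball, so the condition reads $p_g^{-1/2}\|u^{(g)}\|_2\le\eta_g$; over the inactive block this is $[u]_{\mathcal{G}_\mathcal{Z},-1/2}\preceq\eta_\mathcal{Z}$. Since on $\mathcal{Z}$ the entries of $\phi(\beta)$ vanish while on $\mathcal{A}_g$ they are bounded away from $0$, perturbing only the $\mathcal{Z}$-coordinates keeps them the smallest entries, and the subgradient inequality together with positive homogeneity shows $\eta_\mathcal{Z}$ ranges exactly over $\partial J_\text{slope}(0;w_\mathcal{Z})$, with $w_\mathcal{Z}$ the tail of $w$ matched to the inactive groups. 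Finally, $\partial J_\text{slope}(0;w_\mathcal{Z})$ is the dual-norm unit ball, and its intersection with $\mathbb{R}^{\card\mathcal{G}_\mathcal{Z}}_{\ge0}$ is downward closed because the sorted prefix sums of a nonnegative vector are monotone under the coordinatewise order; hence the existence of an admissible $\eta_\mathcal{Z}\succeq[u]_{\mathcal{G}_\mathcal{Z},-1/2}$ in that set is equivalent to $[u]_{\mathcal{G}_\mathcal{Z},-1/2}$ itself lying in it (take $\eta_\mathcal{Z}=[u]_{\mathcal{G}_\mathcal{Z},-1/2}$ for one direction, downward closure for the other). Assembling the active and inactive blocks yields the stated formula; specialising to $\beta=0$ (all groups inactive, $w_\mathcal{Z}=w$) recovers the dual-ball description of $\partial J_\text{gslope}(0;w)$.

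The main obstacle is managing the two sources of nonsmoothness simultaneously --- the sorting inside $J_\text{slope}$ and the kink of $\|\cdot\|_2$ at $0$ --- cleanly enough to split $w=(w_\mathcal{A},w_\mathcal{Z})$ and to conclude that active blocks contribute a singleton rather than a range; this rests on the active group-norms $\sqrt{p_g}\|\beta^{(g)}\|_2$ being strictly larger than the inactive ones (all $0$), so the weight matching is forced. A related subtlety is ties among active groups (equal $\sqrt{p_g}\|\beta^{(g)}\|_2$): there the SLOPE subgradient pins only the average of the tied $\eta_g$, so the ``otherwise'' line should be read for the decreasing arrangement / the generic situation, with ties handled by the standard SLOPE averaging argument. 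I would also double-check the composition chain-rule statement and the monotonicity-of-prefix-sums fact, the two structural lemmas everything hinges on.
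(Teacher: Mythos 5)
Your proposal is essentially correct but follows a genuinely different route from the paper. You realise $J_\text{gslope}$ as the composition $J_\text{slope}(\phi(\beta);w)$ with $\phi(\beta)=[\beta]_{\mathcal{G},1/2}$ and invoke the monotone-composition chain rule, then evaluate the blocks: singleton gradient on active groups, scaled Euclidean balls on inactive ones, with the downward-closure of the SLOPE dual ball collapsing the existential over $\eta_\mathcal{Z}$ to the condition $[u]_{\mathcal{G}_\mathcal{Z},-1/2}\in\partial J_\text{slope}(0;w_\mathcal{Z})$. The paper instead treats the two cases by separate, more elementary means: at zero it uses only the identity ``subdifferential of a norm at $0$ = dual-norm unit ball'' together with the known formula $J^*_\text{gslope}(x;w)=J^*_\text{slope}([x]_{\mathcal{G},-0.5})$ from Brzyski et al.; on active groups it argues directly from the definition of the subdifferential, restricting to perturbations supported on a single group (the sets $D$, $D_g$) and invoking Corollary G.1 of Brzyski et al.\ to localise to $\partial(w_g\sqrt{p_g}\|\cdot\|_2)(\beta^{(g)})$, which is a singleton by differentiability. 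Your route buys a single unified statement covering mixed active/inactive configurations and makes the splitting of $w$ into head and tail weights ($w_\mathcal{Z}$) transparent, at the price of a heavier structural lemma; the paper's route avoids any chain rule and leans on results already available for gSLOPE. Two caveats on your sketch: $J_\text{slope}$ is nondecreasing only on $\mathbb{R}^m_{\geq 0}$, not globally, so the chain rule (and your claim that $\eta\succeq 0$ automatically) requires first replacing the outer function by $z\mapsto J_\text{slope}(z_+;w)$ or citing a version of the rule valid for absolute, orthant-monotone norms — since $\phi\geq 0$ this changes nothing downstream, and the symmetry of the Euclidean ball means possible sign ambiguity in $\eta_\mathcal{Z}$ is harmless, but the step should be stated carefully; and the forcing $\eta_g=w_g$ on active groups is valid only when the active group norms are distinct, a tie issue you correctly flag and which the paper's proof shares (its set $\{d^{(g)}:d\in D_g\}$ contains $0$ only in the tie-free case).
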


The choice of $q = -0.5$ leads to $J_\text{gslope}^*(x;w) = J_\text{slope}^*([x]_{\mathcal{G},-0.5})$, which allows the gSLOPE subdifferential to be written in terms of the SLOPE one \citep{Brzyski2019GroupPredictors}.
%\subsection{Screening Rule}\label{section:gslope_screen}
%Denote $h(\lambda) := ([\nabla f(\hat\beta(\lambda))]_{\mathcal{G},-0.5})_\downarrow$, where $(\cdot)_\downarrow$ sorts a vector into decreasing form.
Combining the KKT conditions at zero (Equation \ref{eqn:subdiff_at_zero}) with the gSLOPE subdifferential (Theorem \ref{thm:gslope_subdiff}) reveals that a group is inactive if $h(\lambda) := ([\nabla f(\hat\beta(\lambda))]_{\mathcal{G},-0.5})_\downarrow  \in \partial J_\text{slope}(\mathbf{0};\lambda w)$. Using the subdifferential of SLOPE (Appendix \ref{appendix:slope_subdiff}) \citep{Larsson2020a}, this is given by
\begin{equation}\label{eqn:gslope_sub_condition}
   \text{cumsum}(h(\lambda)- \lambda w) \preceq \mathbf{0}.
\end{equation}
 %where $\text{cumsum}(x) = [x_1,x_1+x_2,\ldots, \sum_{i=1}^{\card(x)} x_i]$ refers to the cumulative summation operator applied on a vector and $\preceq$ is the element-wise less than or equal to operator.
This condition is checked efficiently using the algorithm proposed for the SLOPE strong rule (Algorithm \ref{alg:slope_screen_alg}) leading to the strong rule for gSLOPE (Proposition \ref{propn:gslope_seq_strong}).
%\ref{alg:slope_screen_alg} in the Appendix, which is used in the SLOPE strong screening rule \citep{Larsson2020a}. 
The algorithm assumes that the indices for the inactive predictors will be ordered last in the input $c$ and the features $|\hat\beta|_\downarrow$ \citep{Larsson2020a}. %Using this algorithm, the strong rule for gSLOPE is formulated in Proposition \ref{propn:gslope_seq_strong}.
\begin{proposition}[Strong screening rule for gSLOPE]\label{propn:gslope_seq_strong}
Taking $c=h(\lambda_{k+1})$ and $\phi = \lambda_{k+1} w$ as inputs for Algorithm \ref{alg:slope_screen_alg} returns a superset $\mathcal{S}_g(\lambda_{k+1})$ of the active set $\mathcal{A}_g(\lambda_{k+1})$.
\end{proposition}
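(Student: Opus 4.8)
The plan is to reduce the group-level inactivity test for gSLOPE to the SLOPE inactivity test that Algorithm \ref{alg:slope_screen_alg} already implements, under the substitution $c=h(\lambda)$, $\phi=\lambda w$, and then to recover the superset property from the standard strong-rule treatment of the unavailable gradient. First I would record the exact characterisation of the zero groups: applying the zero-subdifferential KKT condition (Equation \ref{eqn:subdiff_at_zero}) to $J_\text{gslope}$ and invoking Theorem \ref{thm:gslope_subdiff}, a group is inactive at $\lambda_{k+1}$ exactly when $h(\lambda_{k+1})\in\partial J_\text{slope}(\mathbf 0;\lambda_{k+1}w)$, which by the SLOPE subdifferential at zero rewrites as the cumulative-sum inequality in Equation \ref{eqn:gslope_sub_condition}. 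The reason this collapses to a SLOPE-shaped condition is the identity $J_\text{gslope}^*(x;w)=J_\text{slope}^*([x]_{\mathcal{G},-0.5})$ noted after Theorem \ref{thm:gslope_subdiff}: membership of $\nabla f(\hat\beta(\lambda))$ in the gSLOPE dual ball is membership of its transformed, sorted version $h(\lambda)=([\nabla f(\hat\beta(\lambda))]_{\mathcal{G},-0.5})_\downarrow$ in the SLOPE dual ball, with the group-weight sequence $w$ in the role of the SLOPE penalty weights.

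Next I would check that $c=h(\lambda_{k+1})$ and $\phi=\lambda_{k+1}w$ satisfy the input conventions of Algorithm \ref{alg:slope_screen_alg}: applying $[\cdot]_{\mathcal{G},-0.5}$ and then $(\cdot)_\downarrow$ yields a nonnegative, decreasingly-ordered vector, $w$ is decreasing by construction, and inactive groups occupy the trailing positions by the same zero-pattern convention Algorithm \ref{alg:slope_screen_alg} assumes for SLOPE (for an active group the KKT condition pins its transformed gradient norm onto the boundary value $\lambda$ times the matched weight, so inactive groups, whose norms stay within the bound imposed by their weights, sort after them). Granting these, running Algorithm \ref{alg:slope_screen_alg} on $(c,\phi)$ returns exactly the coordinates at which Equation \ref{eqn:gslope_sub_condition} is violated, i.e.\ the candidate active groups; this is the content of the reduction, and it imports the correctness of the SLOPE screening algorithm wholesale.

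It then remains to obtain $\mathcal A_g(\lambda_{k+1})\subseteq\mathcal S_g(\lambda_{k+1})$. Since $h(\lambda_{k+1})$ depends on the not-yet-computed $\hat\beta(\lambda_{k+1})$, at screening time one supplies the strong-rule surrogate built from $h(\lambda_k)$ under the unit-slope assumption $|h_j(\lambda_{k+1})-h_j(\lambda_k)|\le\lambda_k-\lambda_{k+1}$ (the group analogue of the heuristic behind every strong rule; the precise construction is in Appendix \ref{appendix:framework}). Because this surrogate componentwise dominates $h(\lambda_{k+1})$ and the cumulative-sum map is monotone in its sorted argument, feeding it to Algorithm \ref{alg:slope_screen_alg} can only enlarge the set of violated coordinates, so the returned set contains the one the exact gradient would produce, namely $\mathcal A_g(\lambda_{k+1})$; the heuristic step is exactly why the inclusion can occasionally fail, which is why Algorithm \ref{alg:sgs_framework} pairs screening with KKT checks. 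I expect the main obstacle to be the partial-sparsity bookkeeping: Theorem \ref{thm:gslope_subdiff} gives the subdifferential on the inactive block $\mathcal{G}_\mathcal{Z}$ against the truncated weight sequence $w_{\mathcal{Z}}$, whereas Algorithm \ref{alg:slope_screen_alg} runs on the full sorted vector against the full $w$, so one must verify — using the monotone-rearrangement and interlacing properties of SLOPE subgradients — that restricting to $\mathcal{G}_\mathcal{Z}$ with $w_{\mathcal{Z}}$ is consistent with the full run once the active groups are known to occupy the leading positions, and in particular that the ``inactive-last'' precondition of the algorithm genuinely transfers from the SLOPE setting.
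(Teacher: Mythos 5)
There is a genuine gap, and it comes from conflating this proposition with Proposition \ref{propn:gslope_seq_strong_grad_approx}. The statement here takes $c=h(\lambda_{k+1})$, i.e.\ the \emph{exact} gradient at $\lambda_{k+1}$; no Lipschitz surrogate enters and no failure is possible. The paper's proof is a short KKT argument: if a block $\mathcal{B}\neq\emptyset$ is excluded by Algorithm \ref{alg:slope_screen_alg}, then $\cumsum\bigl(h_\mathcal{B}(\lambda_{k+1})-\lambda_{k+1}w_\mathcal{B}\bigr)\prec\mathbf{0}$, and since the KKT conditions hold at the solution $\hat\beta(\lambda_{k+1})$, Theorem \ref{thm:gslope_subdiff} certifies every group in $\mathcal{B}$ as inactive (an active group would force the corresponding entry of $[\nabla f]_{\mathcal{G},-0.5}$ onto its matched weight $\lambda_{k+1}w_g$, incompatible with the strict inequality); hence the returned set contains $\mathcal{A}_g(\lambda_{k+1})$. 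Your third paragraph instead replaces $h(\lambda_{k+1})$ by a surrogate built from $h(\lambda_k)$ under a unit-slope assumption, argues by monotonicity of the cumulative sum, and concludes that the inclusion ``can occasionally fail.'' That is the content and the caveat of the gradient-approximation rule, not of this proposition, and asserting possible failure contradicts the very claim being proved; the step actually required --- that groups excluded by the algorithm run on the exact $h(\lambda_{k+1})$ are necessarily inactive --- is never carried out.

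Moreover, the issue you defer at the end as ``partial-sparsity bookkeeping'' is precisely what a proof of this proposition must supply. Your opening characterisation, $h(\lambda_{k+1})\in\partial J_\text{slope}(\mathbf{0};\lambda_{k+1}w)$, is the condition for \emph{all} groups to be zero; for screening one needs the per-block statement, and the paper obtains it directly by arguing on the trailing block $\mathcal{B}$ left by the algorithm, so that no transfer between the full weight vector $w$ and the restricted $w_\mathcal{Z}$ is ever needed as a separate lemma. Your parenthetical justification that inactive groups automatically sort last is also stronger than what is available: the paper simply adopts the ordering convention of \cite{Larsson2020a} as an assumption on the input $c$. As written, your argument sketches Proposition \ref{propn:gslope_seq_strong_grad_approx} (modulo those conventions) but does not prove Proposition \ref{propn:gslope_seq_strong}.
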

The gradient at path index $k+1$ is not available for the computation of $h(\lambda_{k+1})$, so an approximation is required that does not lead to any violations in Algorithm \ref{alg:slope_screen_alg}. By the cumsum condition in this algorithm, an approximation for a group $g\in [m]$ is sought such that $h_g(\lambda_{k+1}) \leq h_g(\lambda_{k})+ R_g$, where $R_g\geq 0$ needs to be determined. An approximation is found by assuming that $h_g(\lambda_{k+1})$ is a Lipschitz function of $\lambda_{k+1}$ with respect to the $\ell_1$ norm, that is,
\begin{equation*}
    \left|h_g(\lambda_{k+1}) -h_g(\lambda_{k}) \right| \leq w_g|\lambda_{k+1} - \lambda_k|.
\end{equation*}
By again noting that $J_\text{gslope}^*(x) = J_\text{slope}^*([x]_{\mathcal{G},-0.5})$, it can be seen that the assumption is equivalent to the Lipschitz assumptions used for the lasso and SLOPE strong rules \citep{tibshirani2010strong,Larsson2020a}. By the reverse triangle inequality, 
\begin{equation*}
  |h_g(\lambda_{k+1})|   \leq |h_g(\lambda_{k})|+\lambda_{k}w_g - \lambda_{k+1}w_g,
\end{equation*}
leading to the choice $R_g =\lambda_{k} w_g - \lambda_{k+1}w_g$ and the gradient approximation strong screening rule (Proposition \ref{propn:gslope_seq_strong_grad_approx}). To apply gSLOPE screening in practice (Section \ref{section:results_sim}), Proposition \ref{propn:gslope_path_start} describes the calculation of the first path value.%(Proposition \ref{propn:gslope_seq_strong_grad_approx}). 

%Before stating the proposition, define the operator $\mathcal{O}(\cdot)$, which returns the index of a vector sorted in decreasing form by absolute value.

\begin{proposition}[Gradient approximation strong screening rule for gSLOPE]\label{propn:gslope_seq_strong_grad_approx} Taking $c=h(\lambda_k) + \lambda_k w - \lambda_{k+1}w $ and $\phi = \lambda_{k+1} w$ as inputs for Algorithm \ref{alg:slope_screen_alg}, and assuming that for any $k\in [l-1]$,
\begin{equation*}
      \left|h_g(\lambda_{k+1}) -h_g(\lambda_{k}) \right| \leq w_g|\lambda_{k+1} - \lambda_k|, \; \forall g=[m],
\end{equation*}
and $\mathcal{O}(h(\lambda_{k+1})) = \mathcal{O}(h(\lambda_{k}))$, then the algorithm returns a superset $\mathcal{S}_g(\lambda_{k+1})$ of the active set $\mathcal{A}_g(\lambda_{k+1})$.
\end{proposition}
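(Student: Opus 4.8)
The plan is to obtain Proposition~\ref{propn:gslope_seq_strong_grad_approx} as a corollary of three facts: the exact screening guarantee of Proposition~\ref{propn:gslope_seq_strong}; the domination of the unavailable quantity $h(\lambda_{k+1})$ by the surrogate $c := h(\lambda_k) + \lambda_k w - \lambda_{k+1} w$ that is actually fed to Algorithm~\ref{alg:slope_screen_alg}; and a monotonicity property of Algorithm~\ref{alg:slope_screen_alg} in its first argument. First I would record that, because $h(\lambda_k)$ and $w$ are each sorted in decreasing order and $\lambda_k - \lambda_{k+1} \ge 0$, the surrogate $c$ is again sorted in decreasing order, matching the input format expected by Algorithm~\ref{alg:slope_screen_alg} and sharing the sort pattern of $h(\lambda_k)$; under the hypothesis $\mathcal{O}(h(\lambda_{k+1})) = \mathcal{O}(h(\lambda_k))$ this is also the sort pattern of $h(\lambda_{k+1})$, so the coordinates of $c$ and of $h(\lambda_{k+1})$ are matched position by position.

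Next I would establish the domination $h(\lambda_{k+1}) \preceq c$. Since each coordinate $h_g \ge 0$ (it is a nonnegative multiple of a group norm), applying the (reverse) triangle inequality to the assumed Lipschitz bound and using $\lambda_k \ge \lambda_{k+1}$ gives, for every $g$,
\[
h_g(\lambda_{k+1}) \;\le\; h_g(\lambda_k) + w_g(\lambda_k - \lambda_{k+1}) \;=\; c_g ,
\]
which is precisely the choice $R_g = \lambda_k w_g - \lambda_{k+1} w_g$ motivated in the text; the ordering hypothesis is what lets this per-coordinate bound pass to the sorted vectors that actually enter the algorithm.

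The substantive step is the monotonicity of Algorithm~\ref{alg:slope_screen_alg}. The algorithm implements the SLOPE zero-subdifferential test of Equation~\ref{eqn:gslope_sub_condition}: with inputs $(c,\phi)$ it screens a group index \emph{in} when a partial sum of $c_\downarrow - \phi$ of the form $\sum_{i \le j}(c_{\downarrow,i} - \phi_i)$ is positive for a relevant position $j$, and screens it \emph{out} only when all the corresponding partial sums are nonpositive. Replacing $c = h(\lambda_{k+1})$ by the componentwise-larger, identically-ordered surrogate (keeping $\phi = \lambda_{k+1} w$) only increases every partial sum of $c_\downarrow - \phi$, so every partial sum that was positive stays positive and no index that was screened in can become screened out; equivalently, the returned screened-in set can only grow. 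Hence $\mathcal{S}_g(\lambda_{k+1})$ computed from the surrogate contains the set computed from $h(\lambda_{k+1})$, which by Proposition~\ref{propn:gslope_seq_strong} already contains $\mathcal{A}_g(\lambda_{k+1})$; chaining the inclusions yields $\mathcal{A}_g(\lambda_{k+1}) \subset \mathcal{S}_g(\lambda_{k+1})$.

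I expect the main obstacle to be making the monotonicity claim airtight against the precise bookkeeping of Algorithm~\ref{alg:slope_screen_alg}: it sorts its input internally and reports a boundary index relative to that sort, so one must verify that this boundary moves only outward when the input is inflated, and handle ties and the ``inactive predictors ordered last'' convention carefully. The two hypotheses play complementary and essential roles: the Lipschitz bound supplies the domination $h(\lambda_{k+1}) \preceq c$, while $\mathcal{O}(h(\lambda_{k+1})) = \mathcal{O}(h(\lambda_k))$ guarantees this domination is between vectors that the algorithm orders the same way, so that ``componentwise larger before sorting'' really does translate into ``componentwise larger after sorting''. Everything else is the routine algebra already sketched around Equation~\ref{eqn:gslope_sub_condition}.
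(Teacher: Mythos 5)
Your proposal is correct and follows essentially the same route as the paper: the reverse triangle inequality applied to the Lipschitz assumption gives the componentwise domination $h_g(\lambda_{k+1}) \leq h_g(\lambda_k) + \lambda_k w_g - \lambda_{k+1} w_g = c_g$, the ordering hypothesis aligns the sorted coordinates, and monotonicity of the cumsum test reduces the claim to Proposition \ref{propn:gslope_seq_strong}. The only difference is packaging: the paper invokes $\cumsum(y) \succeq \cumsum(x) \iff y \succeq x$ and checks the condition only on the finally discarded buffer, which sidesteps the full output-monotonicity bookkeeping for Algorithm \ref{alg:slope_screen_alg} that you flag as the delicate step (your stronger monotonicity claim is nonetheless true).
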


\begin{proposition}[gSLOPE path start]\label{propn:gslope_path_start}
For gSLOPE, the path value at which the first group enters the model is given by
\begin{equation*}
    \lambda = \max \left\{\cumsum\bigl( ([\nabla f(\mathbf{0})]_{\mathcal{G},-0.5})_\downarrow\bigr)\oslash \cumsum(w)\right\},
\end{equation*}
where $\oslash$ denotes Hadamard division.
\end{proposition}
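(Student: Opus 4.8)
The plan is to characterise the largest $\lambda$ for which the all-zero vector $\mathbf{0}$ is still optimal for gSLOPE, since the first group enters the model precisely when this fails. By the KKT condition at zero (Equation \ref{eqn:subdiff_at_zero}) combined with the gSLOPE subdifferential at $\mathbf{0}$ (Theorem \ref{thm:gslope_subdiff}), $\mathbf{0}$ is a solution if and only if $-\nabla f(\mathbf{0})/\lambda$ lies in $\partial J_\text{gslope}(\mathbf{0};w)$, equivalently (using $J_\text{gslope}^*(x;w) = J_\text{slope}^*([x]_{\mathcal{G},-0.5})$ and the fact that the subdifferential at zero is the unit dual-norm ball) if and only if $([\nabla f(\mathbf{0})]_{\mathcal{G},-0.5})_\downarrow \in \lambda\,\partial J_\text{slope}(\mathbf{0}; w)$. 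At the initial path point all groups are inactive, so $\mathcal{G}_\mathcal{Z} = [m]$ and $\mathcal{O}$-ordering issues do not arise.

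Next I would translate the membership $([\nabla f(\mathbf{0})]_{\mathcal{G},-0.5})_\downarrow \in \lambda\,\partial J_\text{slope}(\mathbf{0};w)$ into the explicit cumsum inequality, exactly as in Equation \ref{eqn:gslope_sub_condition} with $h(\lambda)$ replaced by its value at $\mathbf{0}$: writing $a := ([\nabla f(\mathbf{0})]_{\mathcal{G},-0.5})_\downarrow$, the vector $\mathbf{0}$ is optimal iff $\operatorname{cumsum}(a - \lambda w) \preceq \mathbf{0}$, i.e. $\operatorname{cumsum}(a) \preceq \lambda\operatorname{cumsum}(w)$ componentwise (here I use that $w$ is nonnegative so the sorting of $a$ is compatible with the ordering required by the SLOPE subdifferential). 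Since $\operatorname{cumsum}(w)$ has strictly positive entries (as $w_1 \geq \cdots \geq w_m \geq 0$ with $w_1 > 0$), this holds iff $\lambda \geq \operatorname{cumsum}(a)_j / \operatorname{cumsum}(w)_j$ for every $j \in [m]$, that is iff $\lambda \geq \max\{\operatorname{cumsum}(a) \oslash \operatorname{cumsum}(w)\}$. The first group therefore enters exactly at $\lambda = \max\{\operatorname{cumsum}(([\nabla f(\mathbf{0})]_{\mathcal{G},-0.5})_\downarrow) \oslash \operatorname{cumsum}(w)\}$, which is the claimed formula.

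The main obstacle I anticipate is the careful handling of the sorting: the SLOPE subdifferential condition (Appendix \ref{appendix:slope_subdiff}) requires the argument and the weights to be sorted consistently, and one must verify that applying $(\cdot)_\downarrow$ to $[\nabla f(\mathbf{0})]_{\mathcal{G},-0.5}$ while keeping $w$ in its given decreasing order is the correct pairing (rather than, say, needing a rearrangement-inequality argument to justify that this particular pairing is the binding one). A secondary technical point is confirming that the threshold is attained — i.e. that at $\lambda$ equal to the stated maximum $\mathbf{0}$ is still a solution, and for any smaller $\lambda$ it is not — which follows from the monotonicity of each ratio constraint in $\lambda$ and positivity of $\operatorname{cumsum}(w)$, but should be stated explicitly. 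Everything else is a direct specialisation of the gSLOPE zero-subdifferential condition already established in Equation \ref{eqn:gslope_sub_condition}.
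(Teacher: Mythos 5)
Your proposal is correct and follows essentially the same route as the paper's proof: specialise the KKT condition at $\mathbf{0}$ via Theorem \ref{thm:gslope_subdiff} to the cumsum inequality $\cumsum\bigl(([\nabla f(\mathbf{0})]_{\mathcal{G},-0.5})_\downarrow - \lambda w\bigr) \preceq \mathbf{0}$, rearrange using linearity of cumsum and positivity of $\cumsum(w)$, and take the maximum ratio. The paper additionally cross-checks the result by identifying $\lambda_1 = J^*_\text{gslope}(\nabla f(\mathbf{0});w)$ and using the explicit SLOPE dual norm formula, which also resolves the sorting/pairing concern you raise, but this is a verification rather than a different argument.
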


\section{SPARSE-GROUP SLOPE}\label{section:sgs}
\label{sec:SGS}

%Sparse-group SLOPE (SGS) was proposed as an extension to SLOPE and gSLOPE, applying both variable and group penalization for bi-level selection \citep{Feser2023Sparse-groupFDR-control}. For $\alpha \in [0,1]$ with variable weights $v$ and group weights $w$ (\textcolor{red}{described in Appendix \ref{appendix:sgs_pen_seq}}), the solution to SGS is defined as 
%\begin{align}
 %   &\hat{\beta}_\text{sgs}(\lambda) = \argmin_{\beta\in \mathbb{R}^p}\{ f(\beta) +\lambda J_\text{sgs}(\beta;\alpha,v,w) \}, \; \text{with}\label{eqn:sgs_problem}\\
 %   &J_\text{sgs}(\beta;\alpha,v,w) = \alpha J_\text{slope}(\beta; v) +  (1-\alpha) J_\text{gslope}(\beta; w)).\nonumber
%\end{align} 
This section presents the group and variable screening rules for SGS. They are derived using the SGS KKT conditions, formulated in terms of SLOPE and gSLOPE (by the sum rule of subdifferentials):
\begin{equation}\label{eqn:sgs_kkt}
 \nabla f(\beta) \in \lambda \alpha \partial J_\text{slope}(\beta; v) + \lambda (1-\alpha) \partial J_\text{gslope}(\beta; w).
\end{equation}
%Next, the strong screening rules for SGS are presented. %The proofs of the propositions that follow can be found in Appendix \ref{appendix:sgs_theory}.

\subsection{Group Screening}\label{section:sgs_group_screen}
For inactive groups, the KKT conditions (Equation \ref{eqn:sgs_kkt}) for SGS are
\small{
\begin{align}
    &(\nabla f(\beta) +\lambda\alpha\partial J_\text{slope}(\mathbf{0};v))_{\mathcal{G}_\mathcal{Z}} \in  \lambda(1-\alpha)\partial J_\text{gslope}(\mathbf{0};w_\mathcal{Z}), \nonumber \\
    &\underset{\text{Equation \ref{eqn:gslope_sub_condition}}}{\implies} \text{cumsum}\Bigl( \bigl([\nabla f(\beta) +\lambda\alpha \partial J_\text{slope}(\mathbf{0};v)]_{\mathcal{G}_\mathcal{Z},-0.5}\bigr)_\downarrow \nonumber\\
    &\;\;\;\;\;\;\;\;\;\;\;\;\;\;\;\;\;\;\;\;\;\;\;\;\;\;\;\;\;\;\;\;\;\;\;\;\;\;\;\;\;\;-\lambda(1-\alpha)w_\mathcal{Z}\Bigr) \preceq \mathbf{0}.\label{eqn:group_condition_sgs}
\end{align}
}
\normalsize
The problem reduces to a form similar to the gSLOPE screening rule (Section \ref{sec:gSLOPE}), with inputs for Algorithm \ref{alg:slope_screen_alg} given by $c = ([\nabla f(\beta) + \lambda\alpha \partial J_\text{slope}(\mathbf{0};v)]_{\mathcal{G},-0.5})_\downarrow$ and $\phi = \lambda(1-\alpha)w$. 

To determine the form of the quantity $\partial J_\text{slope}(\mathbf{0};v)$, the term inside the $[\cdot]_{\mathcal{G}_\mathcal{Z},-0.5}$ operator needs to be as small as possible for Equation \ref{eqn:group_condition_sgs} to be satisfied. This term is found to be the soft thresholding operator, $S(\nabla f(\beta),\lambda\alpha v) := \text{sign}(\nabla f(\beta))(|\nabla f(\beta)| - \lambda\alpha v)_+$ by Lemma \ref{lemma:sto} (see Appendix \ref{appendix:sgs_sto} for the proof).
\begin{lemma}\label{lemma:sto}
   In Equation \ref{eqn:group_condition_sgs}, choosing $\partial J_\text{slope}(\mathbf{0};v)=S(\nabla f(\beta),\lambda\alpha v)$ minimises $[\nabla f(\beta) +\lambda\alpha \partial J_\text{slope}(\mathbf{0};v)]_{\mathcal{G}}$.
\end{lemma}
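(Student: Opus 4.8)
The plan is to treat each coordinate of the subgradient term independently and show that, subject to the constraint that $y := \nabla f(\beta) + \lambda\alpha s$ must be feasible for Equation~\ref{eqn:group_condition_sgs} with $s \in \partial J_\text{slope}(\mathbf{0};v)$, the choice minimizing $[y]_{\mathcal{G}}$ groupwise is $s = S(\nabla f(\beta),\lambda\alpha v)/(\lambda\alpha)$ scaled appropriately — equivalently $y = S(\nabla f(\beta),\lambda\alpha v)$. First I would recall that $\partial J_\text{slope}(\mathbf{0};v)$ is the unit ball of the SLOPE dual norm scaled by $v$, i.e. the set $\{s : \text{cumsum}((|s|_{\downarrow} - v)) \preceq \mathbf{0}\}$ from Appendix~\ref{appendix:slope_subdiff}. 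Since $[y]_{\mathcal{G},-0.5}$ has $g$-th component $p_g^{-1/2}\|y^{(g)}\|_2$, and the cumsum condition in Equation~\ref{eqn:group_condition_sgs} is monotone in each $\|y^{(g)}\|_2$, making the vector $[y]_{\mathcal{G}}$ as small as possible componentwise is exactly what is needed; so it suffices to minimize each $\|y^{(g)}\|_2 = \|\nabla f(\beta)^{(g)} + \lambda\alpha s^{(g)}\|_2$ over admissible $s$.

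The key step is then a pointwise/coordinatewise argument: minimizing $\|y^{(g)}\|_2$ reduces to minimizing $|y_i| = |(\nabla f(\beta))_i + \lambda\alpha s_i|$ for each $i \in \mathcal{G}_g$, and since the SLOPE-ball constraint on $s$ only bounds $|s|_{\downarrow}$ from above through the cumsum inequality, each $|s_i|$ may be taken as large as $v_{\sigma(i)}$ (with $\sigma$ the sorting permutation) independently, with the sign of $s_i$ chosen opposite to $(\nabla f(\beta))_i$ to achieve maximal cancellation. This gives $|y_i| = (|(\nabla f(\beta))_i| - \lambda\alpha v_{\sigma(i)})_+$, and with the sign bookkeeping this is precisely $S(\nabla f(\beta),\lambda\alpha v)_i$; the nonnegativity clipping arises because once $|(\nabla f(\beta))_i| \le \lambda\alpha v_{\sigma(i)}$ we can make $y_i = 0$ but not negative in magnitude. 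I would need to check carefully that the permutation aligning $|s|$ to $v$ in decreasing order can be chosen so that the largest-magnitude gradient coordinates are matched with the largest $v$'s — this is the standard rearrangement argument underlying SLOPE — and that doing so is simultaneously optimal across all groups, which holds because the groups are non-overlapping so the minimizations decouple.

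The main obstacle I anticipate is the non-separability of the SLOPE norm: the constraint set $\partial J_\text{slope}(\mathbf{0};v)$ couples all coordinates through the sorted cumsum inequalities, so one cannot naively set each $|s_i| = v_i$. The resolution is that the constraint is "slack-monotone" — shrinking any $|s_i|$ preserves feasibility — so the binding configuration is the one matching sorted $|s|$ to sorted $v$, and feasibility of $|s|_{\downarrow} = v$ is immediate since $\text{cumsum}(v - v) = \mathbf{0}$. A second, more subtle point is justifying that "minimizes $[\nabla f(\beta) + \lambda\alpha\partial J_\text{slope}(\mathbf{0};v)]_{\mathcal{G}}$" is the right optimality notion for Equation~\ref{eqn:group_condition_sgs}: since the left-hand side of that equation is coordinatewise nondecreasing in the entries of $[\cdot]_{\mathcal{G},-0.5}$ after sorting, and sorting is monotone, componentwise minimization of $[y]_{\mathcal{G}}$ is necessary and sufficient for the tightest screening condition — I would state this monotonicity explicitly and defer the routine verification to Appendix~\ref{appendix:sgs_sto}.
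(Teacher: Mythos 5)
Your proposal follows essentially the same route as the paper's proof in Appendix \ref{appendix:sgs_sto}: reduce the SLOPE subdifferential at zero to per-coordinate bounds on the subgradient, pick each coordinate's sign to cancel $\nabla_i f(\beta)$ and its magnitude as large as the matched weight allows, and read off the soft-thresholding operator $S(\nabla f(\beta),\lambda\alpha v)$. Your explicit rearrangement/slack-monotonicity handling of the sorted-cumsum coupling is in fact somewhat more careful than the paper's one-line reduction via ``$\cumsum(y) \preceq \cumsum(x) \iff y \preceq x$'', but the substance of the argument is the same.
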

By using the soft-thresholding operator, a valuable connection between SGS and SGL is found, as the operator is used in the gradient update step for SGL \citep{Simon2013}. This connection has the potential to lead to new and more efficient optimization approaches for SGS that are more closely related to those used to solve SGL, similar to the recent coordinate descent algorithm for SLOPE \citep{Larsson2022CoordinateSLOPE}.

Using this operator, the (non-approximated) strong group screening rule for SGS is shown in Proposition \ref{propn:sgs_screen}. Applying a similar Lipschitz assumption as for the gSLOPE rule %(Section \ref{section:gslope_screen}) 
gives the gradient approximation strong group screening rule for SGS (Proposition \ref{propn:sgs_screen_grad_approx}).

\begin{proposition}[Gradient approximation strong group screening rule for SGS]\label{propn:sgs_screen_grad_approx}
Let $\tilde{h}(\lambda) := ([S(\nabla f(\hat\beta(\lambda)),\lambda\alpha v)]_{\mathcal{G},-0.5})_\downarrow$. Taking $c=\tilde{h}(\lambda_{k}) +  \lambda_k(1-\alpha)w  -\lambda_{k+1}(1-\alpha) w $ and $\phi = \lambda_{k+1}(1-\alpha) w$ as inputs for Algorithm \ref{alg:slope_screen_alg}, and assuming that for any $k\in [l-1]$,
    \begin{equation*}
\left|\tilde{h}_g(\lambda_{k+1}) -\tilde{h}_g(\lambda_{k}) \right| \leq (1-\alpha)w_g|\lambda_{k+1} - \lambda_k|, \; \forall g=[m], 
    \end{equation*}
    and $\mathcal{O}(\tilde{h}(\lambda_{k+1})) = \mathcal{O}(\tilde{h}(\lambda_{k}))$, then the algorithm returns a superset $\mathcal{S}_g(\lambda_{k+1})$ of the active set $\mathcal{A}_g(\lambda_{k+1})$.
\end{proposition}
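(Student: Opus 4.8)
The plan is to reduce Proposition~\ref{propn:sgs_screen_grad_approx} to the non-approximated strong group screening rule for SGS (Proposition~\ref{propn:sgs_screen}), exactly in the way the gradient-approximation rule for gSLOPE (Proposition~\ref{propn:gslope_seq_strong_grad_approx}) is reduced to the non-approximated gSLOPE rule (Proposition~\ref{propn:gslope_seq_strong}). The non-approximated rule tells us that running Algorithm~\ref{alg:slope_screen_alg} with $c = \tilde h(\lambda_{k+1})$ and $\phi = \lambda_{k+1}(1-\alpha)w$ returns a superset of $\mathcal{A}_g(\lambda_{k+1})$; concretely, by the cumsum characterisation (Equation~\ref{eqn:group_condition_sgs}), a group is certified inactive whenever the relevant cumulative sums of $\tilde h(\lambda_{k+1}) - \lambda_{k+1}(1-\alpha)w$ are nonpositive. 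Since $\tilde h(\lambda_{k+1})$ is unavailable, it suffices to replace it by an entrywise upper bound $c$ (with matching sort order), because the cumsum condition is monotone in $c$: if $\cumsum(c - \phi)\preceq \mathbf{0}$ with $c \succeq \tilde h(\lambda_{k+1})$ componentwise and both sorted consistently, then $\cumsum(\tilde h(\lambda_{k+1}) - \phi)\preceq \mathbf{0}$, so every group Algorithm~\ref{alg:slope_screen_alg} discards on input $c$ is genuinely inactive.

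So the core of the argument is to show $\tilde h_g(\lambda_{k+1}) \le \tilde h_g(\lambda_k) + \lambda_k(1-\alpha)w_g - \lambda_{k+1}(1-\alpha)w_g$ for each $g$. This follows from the postulated Lipschitz assumption $|\tilde h_g(\lambda_{k+1}) - \tilde h_g(\lambda_k)| \le (1-\alpha)w_g|\lambda_{k+1}-\lambda_k|$ together with $\lambda_{k+1}\le\lambda_k$: the reverse triangle inequality gives $\tilde h_g(\lambda_{k+1}) \le \tilde h_g(\lambda_k) + (1-\alpha)w_g(\lambda_k - \lambda_{k+1})$, which is exactly the chosen $c_g$. (Note $\tilde h$ is already built from norms, hence nonnegative, so no absolute values are lost.) The ordering hypothesis $\mathcal{O}(\tilde h(\lambda_{k+1})) = \mathcal{O}(\tilde h(\lambda_k))$ is what lets us apply the cumsum monotonicity: Algorithm~\ref{alg:slope_screen_alg} works on the decreasingly-sorted vector, and the assumption guarantees that sorting $c$ and sorting $\tilde h(\lambda_{k+1})$ use the same permutation, so the entrywise bound survives sorting and the alignment with $\phi = \lambda_{k+1}(1-\alpha)w$ (which is sorted because $w$ is ordered) is preserved.

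The remaining step is to justify invoking the non-approximated rule on the surrogate input. Here one checks that $c$ and $\phi$ satisfy the structural prerequisites of Algorithm~\ref{alg:slope_screen_alg} that Proposition~\ref{propn:sgs_screen} relies on — in particular that the inactive-group indices are ordered last, which is inherited from the same convention used in the gSLOPE case, and that feeding a consistent over-estimate only enlarges the returned set (it can never certify a truly active group as inactive, since the true cumsum condition is then also violated). Assembling these pieces yields $\mathcal{A}_g(\lambda_{k+1}) \subset \mathcal{S}_g(\lambda_{k+1})$.

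The main obstacle is the interplay between the entrywise bound and the sorting inside Algorithm~\ref{alg:slope_screen_alg}: without the hypothesis $\mathcal{O}(\tilde h(\lambda_{k+1})) = \mathcal{O}(\tilde h(\lambda_k))$, a componentwise upper bound need not remain an upper bound after each vector is independently sorted into decreasing order, and the cumsum monotonicity argument breaks. Making precise why this shared-permutation assumption is exactly what is needed — and that it is the natural analogue of the assumption in Proposition~\ref{propn:gslope_seq_strong_grad_approx}, now applied to the soft-thresholded quantity $\tilde h$ rather than $h$ — is the delicate part; everything else is the reverse triangle inequality plus monotonicity of cumulative sums.
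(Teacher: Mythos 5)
Your proposal is correct and follows essentially the same route as the paper: the paper's proof reduces to the gSLOPE argument, i.e.\ it applies the reverse triangle inequality to the Lipschitz assumption to get the componentwise bound $\tilde{h}_g(\lambda_{k+1}) \leq \tilde{h}_g(\lambda_k) + (1-\alpha)w_g(\lambda_k - \lambda_{k+1})$ and then uses the equivalence $\cumsum(y)\succeq\cumsum(x) \iff y\succeq x$ together with the non-approximated rule (Proposition \ref{propn:sgs_screen}), exactly as you do. Your additional discussion of why the ordering assumption $\mathcal{O}(\tilde{h}(\lambda_{k+1}))=\mathcal{O}(\tilde{h}(\lambda_k))$ is needed for the bound to survive sorting is a point the paper leaves implicit, but it is the same argument, not a different one.
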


\subsection{Variable Screening}
By exploiting the sparse-group norm of SGS, the input dimensionality can be reduced further with a second layer of variable screening. The KKT conditions (Equation \ref{eqn:sgs_kkt}) for zero variables in active groups are
\begin{equation}\label{eqn:sgs_non_zero_grp}
    \nabla_{\mathcal{G}_{\mathcal{A}_g}} f(\beta) \in \lambda \alpha\partial J_\text{slope}(\mathbf{0};v_{\mathcal{G}_{\mathcal{A}_g}}). 
\end{equation}
The gSLOPE subdifferential term vanishes as the numerator is zero in Theorem \ref{thm:gslope_subdiff}. The problem reduces to that of SLOPE screening, applied to the variables in groups in $\mathcal{A}_g$ and scaled by $\alpha$. The gradient approximated rule is shown in Proposition \ref{propn:sgs_screen_var_grad_approx} (see Proposition \ref{propn:sgs_screen_var} for the non-approximated version).
\begin{proposition}[Gradient approximation
strong variable screening rule for SGS]\label{propn:sgs_screen_var_grad_approx}
Let $\bar{h}(\lambda) := (\nabla f(\hat\beta(\lambda)))_{|\downarrow|}$. Taking $c = |\bar{h}(\lambda_{k+1})| + \lambda_k\alpha v - \lambda_{k+1}\alpha v$ and $\phi = \lambda_{k+1}\alpha v$ for the variables in the groups in $\mathcal{A}_g(\lambda_{k+1})$ as inputs for Algorithm \ref{alg:slope_screen_alg}, and assuming that for any $k\in [l-1]$,
    \begin{equation*}
    \left|\bar{h}_j(\lambda_{k+1}) -\bar{h}_j(\lambda_{k}) \right| \leq \alpha v_j|\lambda_{k+1} - \lambda_k|, \forall j\in \mathcal{G}_{\mathcal{A}_g(\lambda_{k+1})},
    \end{equation*}
  and $\mathcal{O}(\bar{h}(\lambda_{k+1})) = \mathcal{O}(\bar{h}(\lambda_{k}))$, then the algorithm returns a superset $\mathcal{S}_v(\lambda_{k+1})$ of $\mathcal{A}_v(\lambda_{k+1})$.
\end{proposition}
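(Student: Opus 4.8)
The plan is to reduce the claim to the SLOPE gradient-approximation rule of \citet{Larsson2020a}, applied to the variables indexed by $\mathcal{G}_{\mathcal{A}_g(\lambda_{k+1})}$ with penalty weights $\alpha v$, and then transfer validity from the exact gradient to the available one through the monotonicity of Algorithm \ref{alg:slope_screen_alg} in its input. The first step is to record the exact-gradient rule. For a variable $j$ lying in an active group, the SGS KKT conditions (Equation \ref{eqn:sgs_kkt}) collapse to Equation \ref{eqn:sgs_non_zero_grp}, $\nabla_{\mathcal{G}_{\mathcal{A}_g}} f(\beta) \in \lambda\alpha\,\partial J_\text{slope}(\mathbf{0};v_{\mathcal{G}_{\mathcal{A}_g}})$, since the gSLOPE contribution vanishes on active groups (its numerator in Theorem \ref{thm:gslope_subdiff} is zero there). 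This is exactly the zero-subdifferential condition of a SLOPE penalty with weights $\alpha v$ on the sub-vector indexed by $\mathcal{G}_{\mathcal{A}_g}$, so substituting the SLOPE subdifferential of Appendix \ref{appendix:slope_subdiff} yields the cumsum test
\begin{equation*}
   \text{cumsum}\bigl(|\bar h(\lambda_{k+1})| - \lambda_{k+1}\alpha v\bigr) \preceq \mathbf{0}
\end{equation*}
on $\mathcal{G}_{\mathcal{A}_g(\lambda_{k+1})}$; this is precisely Proposition \ref{propn:sgs_screen_var}, i.e.\ with the true gradient as input Algorithm \ref{alg:slope_screen_alg} returns a screened set containing $\mathcal{A}_v(\lambda_{k+1})$.

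The second step replaces the unavailable $\bar h(\lambda_{k+1})$. The reverse triangle inequality, the assumed Lipschitz bound, and the monotone path $\lambda_k \geq \lambda_{k+1}$ give, for every $j\in\mathcal{G}_{\mathcal{A}_g(\lambda_{k+1})}$,
\begin{equation*}
  |\bar h_j(\lambda_{k+1})| \;\leq\; |\bar h_j(\lambda_k)| + \alpha v_j(\lambda_k - \lambda_{k+1}) \;=\; c_j,
\end{equation*}
so the chosen input dominates the exact one entrywise; since $v$ is sorted decreasingly and $\bar h(\lambda_k)$ is sorted in decreasing absolute value, $c$ is already decreasing, and monotonicity of order statistics ($a\succeq b$ entrywise forces $(a)_{|\downarrow|}\succeq (b)_{|\downarrow|}$) ensures the sorted input handed to Algorithm \ref{alg:slope_screen_alg} still dominates the sorted exact input while $\phi=\lambda_{k+1}\alpha v$ is unchanged. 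Every partial sum in the cumsum test is therefore at least as large under $c$ as under the exact gradient, so any index the algorithm discards under $c$ would also be discarded under the exact gradient and is genuinely inactive at $\lambda_{k+1}$; equivalently the screened set from $c$ contains the screened set from $\bar h(\lambda_{k+1})$, which by the first step contains $\mathcal{A}_v(\lambda_{k+1})$. The hypothesis $\mathcal{O}(\bar h(\lambda_{k+1}))=\mathcal{O}(\bar h(\lambda_k))$ is needed so that the ordering Algorithm \ref{alg:slope_screen_alg} uses internally, computed from the available $\lambda_k$ quantities, is the one valid at $\lambda_{k+1}$, meeting its requirement that inactive indices appear last.

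The reverse-triangle/Lipschitz estimate and the order-statistic comparison are routine; the delicate point is the monotonicity of Algorithm \ref{alg:slope_screen_alg} itself --- that inflating its input entrywise can only enlarge, never shrink, the screened set, so that validity under the true gradient implies validity under the overestimate --- together with verifying that the internal sort of $c$, which mixes the approximate gradient and the weights, is compatible with the ordering hypothesis. Here I would lean on the structural analysis of the SLOPE screening algorithm in \citet{Larsson2020a}, since after the reduction the SGS variable-screening step is an instance of that rule.
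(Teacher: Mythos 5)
Your proposal is correct and follows essentially the same route as the paper: establish the exact-gradient variable rule from the SLOPE zero-subdifferential condition on the active groups (Proposition \ref{propn:sgs_screen_var}), then use the reverse triangle inequality with the Lipschitz assumption to show $|\bar h_j(\lambda_{k+1})| \leq |\bar h_j(\lambda_k)| + \alpha v_j(\lambda_k-\lambda_{k+1})$, and transfer this entrywise bound through the cumsum test (the paper cites $\cumsum(y)\succeq\cumsum(x)\iff y\succeq x$, with the ordering hypothesis handling the internal sort). Your extra discussion of the algorithm's monotonicity and the order-statistic comparison just makes explicit what the paper compresses into that equivalence, and you also correctly read $c$ as built from $\bar h(\lambda_k)$ rather than the unavailable $\bar h(\lambda_{k+1})$.
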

In practice $\mathcal{A}_g(\lambda_{k+1})$ is not available, as this is exactly what we are trying to superset with any screening rule. However, Proposition \ref{propn:sgs_screen_grad_approx} guarantees that it is contained in $\mathcal{S}_g(\lambda_{k+1})$ so that this can be used instead. To apply SGS in practice (Section \ref{section:results_sim}), Proposition \ref{propn:sgs_path_start} describes the calculation of the first path value.
%\subsection{Extensions}
%The rules presented in this manuscript were designed for group-based SLOPE models. However, the screening rules only require that the penalty sequences are ordered, so the rules are applicable to the wider family of group-based Ordered Weighted $\ell_1$ (OWL) models. SLOPE is a particular case of an OWL model, due to the use of the SLOPE weights (Appendix \ref{appendix:gslope_pen_seq} and \ref{appendix:sgs_pen_seq}). Another case of an OWL model is the Octagonal Shrinkage and Clustering Algorithm for Regression (OSCAR) model [\textcolor{red}{oscar citation needed}].

%SLOPE is our primary interest, so the numerical experiments (Section \ref{section:results_sim} and \ref{section:results_real}) used SLOPE penalty weights (screening results for group-based OSCAR models are shown in Appendix).
\begin{proposition}[SGS path start]\label{propn:sgs_path_start}
For SGS, the path value at which the first variable enters the model is
\begin{align*}
    \lambda = \max\{\cumsum(| \nabla &f(\mathbf{0})|_\downarrow)\oslash\\
    &\cumsum((1-\alpha)\tau\omega- \alpha v)\},
\end{align*}
where $\tau$ and $\omega$ are expanded vectors of the group sizes ($\sqrt{p_g})$ and penalty weights ($w_g$) to $p$ dimensions. %so that each variable within the same group is assigned the same value.
    
\end{proposition}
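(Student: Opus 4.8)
The first variable enters at $\lambda_{\max}:=\sup\{\lambda>0:\hat\beta(\lambda)\neq\mathbf{0}\}$, and since $f$ is convex and $J_\text{sgs}$ is a norm, $\hat\beta(\lambda)=\mathbf{0}$ exactly when $\mathbf{0}$ satisfies the stationarity condition; by the subdifferential-at-zero characterization (Equation \ref{eqn:subdiff_at_zero}) together with the sum rule (Equation \ref{eqn:sgs_kkt}), this means $-\nabla f(\mathbf{0})\in\lambda\alpha\,\partial J_\text{slope}(\mathbf{0};v)+\lambda(1-\alpha)\,\partial J_\text{gslope}(\mathbf{0};w)$. I would first observe that the set of such $\lambda$ is a half-line $[\lambda_{\max},\infty)$, so $\lambda_{\max}$ is the infimum of the $\lambda$ for which this inclusion is feasible, and that the ``first variable'' and ``first group'' thresholds coincide because $\|\hat\beta^{(g)}\|_2\neq 0$ iff some coordinate of $\hat\beta^{(g)}$ is nonzero; hence it suffices to find the largest $\lambda$ at which every group is still inactive.

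Next I would eliminate the SLOPE subgradient using Lemma \ref{lemma:sto}: among all admissible SLOPE subgradients, the one making the residual gSLOPE membership easiest to satisfy leaves the soft-thresholded gradient $S(\nabla f(\mathbf{0}),\lambda\alpha v)$, so the inclusion is feasible iff $[S(\nabla f(\mathbf{0}),\lambda\alpha v)]_{\mathcal{G},-0.5}\in\lambda(1-\alpha)\,\partial J_\text{slope}(\mathbf{0};w)$. Invoking the SLOPE subdifferential description (Appendix \ref{appendix:slope_subdiff}), this is exactly Equation \ref{eqn:group_condition_sgs} with $\mathcal{Z}=[m]$ and $\beta=\mathbf{0}$, that is, $\cumsum\bigl(([S(\nabla f(\mathbf{0}),\lambda\alpha v)]_{\mathcal{G},-0.5})_\downarrow-\lambda(1-\alpha)w\bigr)\preceq\mathbf{0}$.

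The remaining step is algebraic: convert this group-level system of cumulative-sum inequalities into the claimed variable-level one. Writing $[S(\nabla f(\mathbf{0}),\lambda\alpha v)]_{\mathcal{G},-0.5,g}=p_g^{-1/2}\|S(\nabla f(\mathbf{0})^{(g)},\lambda\alpha v^{(g)})\|_2$ and replacing the group quantities $\sqrt{p_g}$ and $w_g$ by their per-coordinate expansions $\tau$ and $\omega$, I would argue that on the coordinates determining $\lambda_{\max}$ the truncation $(\cdot)_+$ inside $S$ is inactive (and the pooling in the SLOPE proximal map does not move the extremal index), so the group inequalities reduce to a variable-level family of the form $\cumsum(|\nabla f(\mathbf{0})|_\downarrow)_i\le\lambda\,\cumsum\bigl((1-\alpha)\tau\omega-\alpha v\bigr)_i$, the $-\alpha v$ term being precisely the soft-threshold contribution carried to the right-hand side. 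Solving this finite family for the smallest feasible $\lambda$ — using that the relevant partial sums on the right stay positive, which follows from the SGS penalty construction — then gives $\lambda_{\max}=\max_i\cumsum(|\nabla f(\mathbf{0})|_\downarrow)_i\oslash\cumsum\bigl((1-\alpha)\tau\omega-\alpha v\bigr)_i$, in direct analogy with the SLOPE and gSLOPE path starts (Proposition \ref{propn:gslope_path_start}).

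I expect the soft-thresholding to be the main obstacle: it makes $\|S(\nabla f(\mathbf{0})^{(g)},\lambda\alpha v^{(g)})\|_2$ a piecewise function of $\lambda$, so the passage from the group-level condition to the linear system needs a careful argument pinning down which coordinates survive the threshold at $\lambda_{\max}$. A secondary complication is reconciling the two sort orders — the group order induced by $[\,\cdot\,]_{\mathcal{G},-0.5}$ versus the variable order induced by $|\nabla f(\mathbf{0})|$ — together with the non-uniqueness of the SLOPE/gSLOPE decomposition of $\nabla f(\mathbf{0})$; the former can be handled by ordering arguments of the same flavour as those in Propositions \ref{propn:sgs_screen_grad_approx} and \ref{propn:sgs_screen_var_grad_approx}, and the latter is exactly what Lemma \ref{lemma:sto} resolves.
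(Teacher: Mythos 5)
Your reduction of the at-zero KKT condition to the group-level system (Equation \ref{eqn:group_condition_sgs} with all groups, via Lemma \ref{lemma:sto}) is legitimate, but it is not the paper's route, and the step you yourself flag as the ``main obstacle'' --- converting that $m$-dimensional group condition into the stated $p$-dimensional formula --- is exactly where the plan breaks, not merely where it needs polish. The group condition involves $p_g^{-1/2}\|S(\nabla f(\mathbf{0})^{(g)},\lambda\alpha v)\|_2$, so when the soft-threshold contribution is carried across the inequality its sign flips: in the simplest case of a single surviving coordinate $i$ in group $g$ the condition reads $p_g^{-1/2}\bigl(|\nabla_i f(\mathbf{0})|-\lambda\alpha v_i\bigr)_+\leq\lambda(1-\alpha)w_g$, i.e.\ $|\nabla_i f(\mathbf{0})|\leq\lambda\bigl(\alpha v_i+(1-\alpha)\tau_i\omega_i\bigr)$. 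Pushed through, your route therefore produces a denominator of the form $\cumsum\bigl(\alpha v+(1-\alpha)\tau\omega\bigr)$, not $\cumsum\bigl((1-\alpha)\tau\omega-\alpha v\bigr)$ as in the statement; the parenthetical claim that the $-\alpha v$ term is ``precisely the soft-threshold contribution carried to the right-hand side'' has the sign backwards. There is also a structural mismatch left unaddressed: the group condition is a family of $m$ cumulative-sum inequalities in the sorted group statistics $p_g^{-1/2}\|S(\cdot)\|_2$, whereas the target is a family of $p$ inequalities in $|\nabla f(\mathbf{0})|_\downarrow$, and no argument is offered for why the sorted group norms may be replaced coordinate-by-coordinate by sorted absolute gradient entries (the ``truncation inactive at the extremal coordinates'' and ``pooling does not move the extremal index'' claims are asserted, not proved).

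The paper argues in the opposite direction and never leaves the variable level: it isolates the SLOPE subdifferential rather than the gSLOPE one, writing $-\tfrac{1}{\lambda}\nabla f(\mathbf{0})-(1-\alpha)\partial J_\text{gslope}(\mathbf{0};w)\in\alpha\,\partial J_\text{slope}(\mathbf{0};v)$, applies the SLOPE at-zero cumsum characterization (Appendix \ref{appendix:slope_subdiff}), separates the terms with the reverse triangle inequality, and then bounds the unknown gSLOPE subgradient through $\|x^{(g)}\|_2\leq\sqrt{p_g}w_g$, whose elementwise ceiling is $\tau\omega$; this is what makes both the $(1-\alpha)\tau\omega$ term and the $\alpha v$ term appear directly in a $p$-dimensional inequality system, from which the stated maximum follows as in Proposition \ref{propn:gslope_path_start}. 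So the proposal is not an alternative proof of the same identity: as sketched it would arrive at a different expression, and the bridge from your group-level condition to the claimed formula is the missing (and, because of the sign, unbridgeable) step.
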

\section{GROUP-BASED OWL}\label{section:owl}
The screening rule framework presented are also applicable to the wider class of OWL models. The Ordered Weighted $\ell_1$ (OWL) framework is defined as \citep{Zeng2014DecreasingWS}
\begin{equation*}
    \hat\beta \in \argmin_{\beta \in \mathbb{R}^p} \left\{\nabla f(\beta) + \lambda J_\text{owl}(\beta;v)\right\},
\end{equation*}
where $J_\text{owl}(\beta;v) =\sum_{i=1}^p v_i |\beta|_{(i)}$, $|\beta|_{(1)} \geq \ldots \geq |\beta|_{(p)}$, and $v$ are non-negative non-increasing weights. SLOPE is a special case of OWL where the weights are taken to be the Benjamini-Hochberg critical values \citep{Bogdan2015SLOPEAdaptiveOptimization}. Octagonal Shrinkage and Clustering Algorithm for Regression (OSCAR) \citep{oscar} is a further special case of OWL (often referred to as OWL with linear decay) where for a variable $i\in[p]$, the weights are taken to be $v_i = \sigma_1 + \sigma_2(p-i)$, and $\sigma_1, \sigma_2$ are to be set. In \cite{Bao2020FastRules} they are set to $\sigma_1 = d_i\|\mathbf{X}^\top y\|_\infty, \sigma_2 = \sigma_1/p$, where $d_i = i\times e^{-2}$, $\mathbf{X}^\top\in\mathbb{R}^{n\times p}$ is the design matrix, and $y\in\mathbb{R}^n$ is the response vector.

Group OSCAR (gOSCAR) and Sparse-group OSCAR (SGO) are defined using the frameworks provided by gSLOPE \citep{Brzyski2019GroupPredictors} and SGS \citep{Feser2023Sparse-groupFDR-control}, respectively, but instead use the OSCAR weights (see Appendix \ref{appendix:oscar_weights}). 

\section{RESULTS}\label{section:results}
This section illustrates the effectiveness of the screening rules for gSLOPE and SGS using synthetic (Section \ref{section:results_sim}) and real (Section \ref{section:results_real}) data. References to $\mathcal{E},\mathcal{A}$ for group and variable metrics denote $\mathcal{E}_g,\mathcal{A}_g$ and $\mathcal{E}_v,\mathcal{A}_v$. For SGS, $\mathcal{E}_g$ represents groups with members in $\mathcal{E}_v$. 

\subsection{Synthetic Data Analysis} \label{section:results_sim}
%were tested on both synthetic and real data (Section \ref{section:results_real}). The conducted work illustrates the effectiveness of the proposed screening rules as evaluated using appropriate metrics. 
\paragraph{Set up} A multivariate Gaussian design matrix, $\mathbf{X}\sim\mathcal{N}(\mathbf{0},\boldsymbol{\Sigma})\in \mathbb{R}^{400 \times p}$, was used and the within-group correlation set to $\Sigma_{i,j} = \rho$, where $i$ and $j$ belong to the same group. The correlation and number of features were varied between $\rho \in \{0,0.3,0.6,0.9\}$ and $p \in \{500,1625,2750,3875,5000\}$, producing $20$ simulation cases. Each simulation case was repeated $100$ times. Two models were considered: linear and logistic. For the linear model, the output was generated as $y = \mathbf{X}\beta + \mathcal{N}(0,1)$ and for the logistic model the class probabilities were calculated using $\sigma(\mathbf{X}\beta + \mathcal{N}(0,1))$, where $\sigma$ is the sigmoid function. Groups of sizes between $3$ and $25$ were considered, of which $15\%$ were set to active. Within each active group, $30\%$ of the variables were set to active with signal $\beta \sim \mathcal{N}(0,5)$.
\begin{figure}[t]
\centering
  \includegraphics[width=1\linewidth]{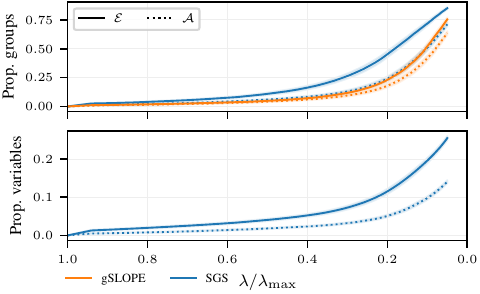}
  \caption{The proportion of groups/variables in $\mathcal{E}, \mathcal{A}$, relative to the input, for both gSLOPE and SGS as a function of the path for the linear model with $p=2750, \rho=0.6, m = 197$. The results are averaged over $100$ repetitions, with $95\%$ confidence intervals shown.}
   \label{fig:path_plot_gslope_sgs}
\end{figure}
gSLOPE and SGS were fit along a log-linear path of $50$ regularization parameters using warm starts, beginning at $\lambda_1$ (from Propositions \ref{propn:gslope_path_start} and \ref{propn:sgs_path_start}), and ending at $\lambda_{50} = 0.05\lambda_1$. The data was $\ell_2$ standardized and for the linear model an intercept was used. Both models had FDR-control parameters set to $0.05$, and $\alpha = 0.95$ for SGS. The models were fit using the adaptive three operator splitting (ATOS) algorithm \citep{pmlr-v80-pedregosa18a}, although the screening rules can be applied with any fitting algorithm.  Additional computational details are in Appendix \ref{appendix:comp_info}. 

Primarily, the results for the linear model are presented, and the results for the logistic model are in Appendix \ref{appendix:log_model_plot}. The simulations were repeated for group-based OSCAR models (Appendix \ref{appendix:oscar}).

%The effectiveness of the proposed screening rules evaluated on appropriate metrices is presented in the following sections. This section ends, with a presentation of KKT violations occurred during the application of the rules. Information about the computation running times and tools used are presented in Appendix \ref{appendix:comp_info}.}

%\subsection{Simulation Study}\label{section:results_sim}
%\subsection{Results}

\begin{table}[b]
  \caption{Runtime (in seconds) for fitting $50$ models along a path, shown for screening against no screening, for the linear and logistic models. The results are averaged across all cases of the correlation ($\rho$) and dimensionality ($p$), with standard errors shown.}
\label{tbl:sim_data}
  \begin{tabular}{llll}
    \toprule
    Method     & Type     & Screen (s) & No screen (s) \\
    \midrule
gSLOPE& Linear&$1016\pm21$ & $1623\pm27$   \\
gSLOPE &Logistic& $814\pm8$ & $1409\pm11$\\
SGS&Linear&  $735\pm15$ & $1830\pm34$ \\
SGS& Logistic& $407\pm2$& $859\pm6$\\
    \bottomrule
  \end{tabular}
  
\end{table}
\paragraph{Screening Efficiency} 
By comparing the sizes of the fitting set $(\mathcal{E})$ to the active set ($\mathcal{A})$, the screening rules are found to be efficient in providing dimensionality reduction close to the minimum possible (the active set size) (Figures \ref{fig:path_plot_gslope_sgs} and \ref{fig:path_plot_gslope_sgs_metric_2}). As expected, the sets increase in size as $\lambda$ decreases, and the difference in size between the sets remains stable along the path, decreasing towards the termination point. The size of the fitting set remains far below the input size across the whole path, showing the benefit of the screening. This is found to be true for any correlation, input dimensionality, and model considered (Appendix \ref{appendix:results_sim_study}). 

The screening rules perform well for linear and logistic models (Figure \ref{fig:gaussian_vs_log}), showing robust dimensionality reduction for all correlation cases considered. As the correlation increases, the signal concentrates in fewer groups, causing the active group set to decrease in size. SLOPE models deal well with highly correlated features, as the sorted norm clusters them together \citep{Zeng2014TheAlgorithm}.

The screening rules are found to efficiently reduce the input dimensionality on average across all cases considered (Table \ref{tbl:full_data}).

%Due to the shape of the convex penalty of SLOPE, it clusters features to deal with high correlations \citep{Zeng2014TheAlgorithm}. 
%This leads to smaller active sets, and in turn, also smaller screened sets. Additionally, it can be seen that the size difference between the screened and active sets decreases with correlation. Across the different correlation values, the linear model had smaller set sizes than the logistic one.
\begin{figure}[t]
  \centering
  \includegraphics[width=\linewidth]{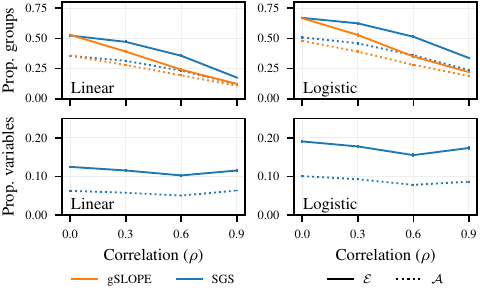}
  \caption{The proportion of groups/variables in $\mathcal{E}, \mathcal{A}$, relative to the full input, shown for gSLOPE and SGS. This is shown as a function of the correlation ($\rho$), averaged over all cases of the input dimension ($p$), with $100$ repetitions for each $p$, for both linear and logistic models, with standard errors shown.}
      \label{fig:gaussian_vs_log}
\end{figure}
\paragraph{Runtime Performance} A key metric of performance for a screening rule is the time taken to fit a path of models. Figure \ref{fig:time_fcn_of_p} shows the significant runtime improvements our screening rules provide as a function of increasing input dimensionality. The gain is observed to be substantial under all correlation cases. Applying screening improves the scaleability of applying gSLOPE and SGS models to larger datasets.

The clear benefit and robustness of our screening method, with regards to runtime, can be seen by aggregating the results of all the simulation cases (Table \ref{tbl:sim_data}). For both models, in the linear and logistic cases, screening substantially improves the computational cost, halving the runtime for SGS models.

\subsection{Real Data Experiments}\label{section:results_real}
\paragraph{Datasets} The screening rules were applied to seven real gene datasets, of different response types and dimensionality. Two of the datasets, carbotax and sheetz, had a continuous response so were fit using a linear model. For these, the groups were generated using K-means clustering \citep{1056489}. The remaining five (adenoma, cancer, celiac, colitis, and tumour), had binary labels, so a logistic model was used. For these, the design matrices contained gene expression data downloaded from NCBI's GEO database \citep{Edgar2002}, so the genes could be assigned to pathways (groups) using the C3 regulatory target gene sets from MSigDB \citep{Subramanian2005,msigdb}. All datasets were high-dimensional. See Appendix \ref{appendix:real_data_description} for full details.

%The input dimensionality amongst the datasets varied from $7000$ to $19000$, with all being high-dimensional ($n$ ranged from $52$ to $132$). See Appendix \ref{appendix:real_data_description} for full data descriptions and sources.

Both gSLOPE and SGS were fit with their FDR-control parameters set to $0.01$ and for SGS $\alpha=0.99$. Each model was applied along a path of $100$ regularization parameters, with $\lambda_{100} = 0.01\lambda_1$. Table \ref{tbl:atos_params} describes the algorithmic parameters used for ATOS.

%The sizes of the cancer design matrices ranged between $583$ and $7233$ genes, that were assigned to $183$ and $1132$ pathways. The colitis design matrices contained between $3988$ and $12321$ genes, assigned to $50$ and $1408$ pathways. The largest pathway for the cancer dataset was $545$ and for the colitis dataset was $1034$.
\begin{figure}[t]
  \centering
\includegraphics[width=1\linewidth,valign=t]{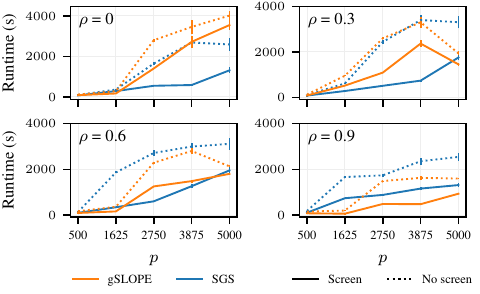}
  \caption{Runtime (in seconds) for fitting $50$ models along a path, shown for screening against no screening as a function of $p$, broken down into different correlation cases, for the linear model. The results are averaged over $100$ repetitions, with standard errors shown.}
\label{fig:time_fcn_of_p}
\end{figure}
%\subsection{Results}

\paragraph{Results} For both gSLOPE and SGS, the screening rules led to considerably faster runtimes for all datasets (Figure \ref{fig:real_bar_chart}). The screening was more effective for SGS under a linear model and for gSLOPE under a logistic model. The active set sizes for gSLOPE tended to be smaller for the logistic models, allowing for larger feature reduction (Table \ref{tbl:full_data}). For SGS, this trend went the other way; the active sets were smaller for the continuous datasets. Another explanation for the reduced screening efficiency for gSLOPE under linear models was due to the grouping structure. For the continuous responses, the groups were generated using K-means clustering, leading to fewer groups and less opportunity for dimensionality reduction (Table \ref{tbl:appendix_real_dataset_info}). With fewer groups, gSLOPE is less likely to discard full groups, as they may contain some signal.

The feature reduction provided by our screening rules led to the alleviation of convergence issues (Table \ref{tbl:full_table_grp_real_data}). SGS failed to converge for three datasets without screening and none with screening. gSLOPE experienced failed convergences across all datasets without screening, while with screening, it only failed to converge for three datasets, each showing fewer instances of failure. As gSLOPE applies no variable penalization, it is forced to fit all variables within a group. For datasets with large groups, such as those considered here, this leads to a problematic fitting process which can include many noisy variables. Our screening rules help gSLOPE partially overcome this issue, leading to large computational savings and better solution optimality.

The analysis of the real data further illustrates the benefits of the bi-level screening to the runtime and performance of SGS (for synthetic data, see Figure \ref{fig:bi-level-screening}). Figure \ref{fig:bilevelcomparison} illustrates that for the cancer and celiac datasets, the bi-level screening allows the input dimensionality for SGS to be reduced to a much greater extent than by just group screening (see Figure \ref{fig:bilevelcomparison_all} for the other datasets). %The SGS approach is shown to scale well with increasing dimension in both the analysis of the simulated and real data (Figures \ref{fig:bilevelcomparison}, \ref{fig:bilevelcomparison_realdata}). 
%The bi-level screening allows the input dimensionality for SGS to be reduced to a much greater extent than by just group screening (Figure \ref{fig:bilevelcomparison}).

%Figure \ref{fig:bilevelcomparison}(b) illustrates that for these large genetics datasets, the bi-level screening allows the input dimensionality for SGS to be reduced to a much greater extent than by just group screening.
Both screening rules drastically reduce the input dimensionality on all the real datasets (Table \ref{tbl:full_data}). On average, for the logistic real data models, the SGS screening rules reduced the input to just $7\%$ of the total space. This comes without affecting solution consistency (Appendices \ref{appendix:comp_real_info} and \ref{appendix:real_data_results}).
\begin{figure}[t]
 \centering
\includegraphics[width=1\linewidth,valign=t]{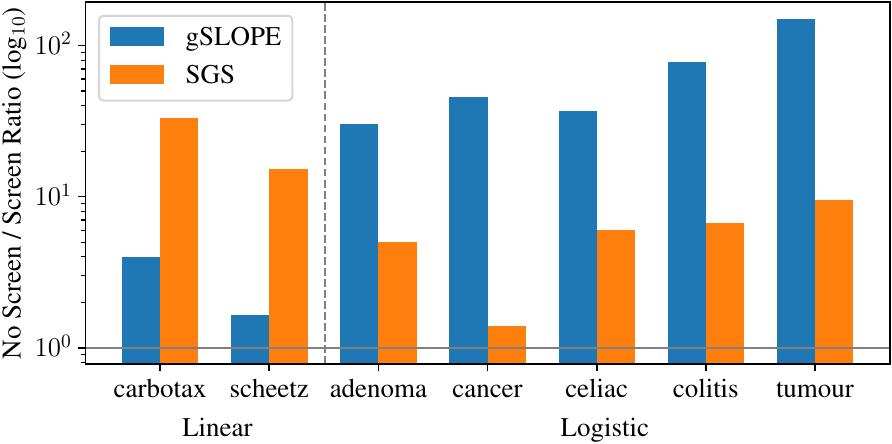}
  \captionsetup{width=1\linewidth}
  \caption{The ratio of no screen time to screen time ($\uparrow$) of gSLOPE and SGS applied to the real datasets, for fitting $100$ path models, split into response type. The horizontal grey line represents no screening improvement.}
  \label{fig:real_bar_chart}
\end{figure}
\begin{table}[b]
  \caption{The cardinality of the active ($\mathcal{A}$) and fitting ($\mathcal{E}$) sets for gSLOPE and SGS, averaged across all synthetic and real data cases, split into model type. For gSLOPE, the cardinality is for the group sets, and for SGS the variable ones. Dim. is the input dimensionality. For synthetic, it was $p = 2750$ and $m =210$.}
\label{tbl:full_data}
  \resizebox{\linewidth}{!}{%
  \begin{tabular}{lllllll}
    \toprule
    && \multicolumn{2}{c}{Synthetic}  & \multicolumn{3}{c}{Real}  \\
    \cmidrule(lr){3-4}  \cmidrule(lr){5-7} 
    Method     & Type     & $\mathcal{A}$ & $\mathcal{E}$ & $\mathcal{A}$ & $\mathcal{E}$&Dim. \\
    \midrule
gSLOPE & Lin. & $55 \scriptstyle{\pm 1}$ & $76 \scriptstyle{\pm 1}$  & $112 \scriptstyle{\pm 3}$ & $168 \scriptstyle{\pm 3}$ & $240$ \\ 
gSLOPE & Log. & $71 \scriptstyle{\pm 1}$ & $97 \scriptstyle{\pm 1}$  & $49 \scriptstyle{\pm 2}$ & $83 \scriptstyle{\pm 3}$ & $1634$ \\ 
SGS    & Lin. & $178 \scriptstyle{\pm 3}$ & $364 \scriptstyle{\pm 6}$  & $378 \scriptstyle{\pm 37}$ & $1139 \scriptstyle{\pm 87}$ & $14488$ \\ 
SGS    & Log. & $230 \scriptstyle{\pm 3}$ & $472 \scriptstyle{\pm 6}$  & $526 \scriptstyle{\pm 5}$ & $979 \scriptstyle{\pm 13}$ & $13734$ \\ 
    \bottomrule
  \end{tabular}
  }
\end{table}
For additional metrics and comparison to the SLOPE strong rule for the real datasets, see Figures \ref{fig:real_bar_chart_slope}, \ref{fig:real_data_propn}, and \ref{fig:real_data_propn_active}. Our rules for gSLOPE and SGS are found to offer similar levels of runtime improvements and feature reduction to that of SLOPE \citep{Larsson2020a}.
\begin{figure}[t]
    \centering
    \includegraphics[width=1\linewidth]{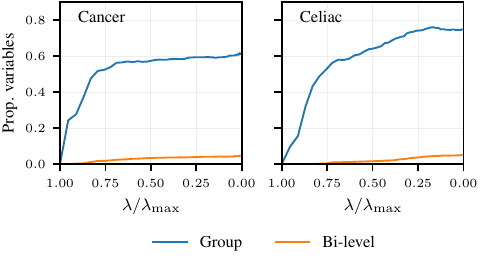}
    \captionsetup{width=1\linewidth}
    \caption{The proportion of variables in $\mathcal{S}_v$ relative to the full input for group-only and bi-level screening applied to SGS, plotted along the regularization path for the cancer and celiac datasets.}
\label{fig:bilevelcomparison}
\end{figure}

\subsection{KKT Violations}\label{section:kkt_violations}
%A violation occurs when a variable/group is not included in the screened set, $\mathcal{S}_v$, but should be according to the KKT optimality conditions.
As with any strong screening rule, our approach depends on assumptions that may fail. When this happens, KKT checks are used to ensure no active variables are excluded, with violations added to $\mathcal{E}_v$. For SGS, KKT violations occur at a variable-level (Appendix \ref{appendix:sgs_kkt}) and for gSLOPE at a group-level (Appendix \ref{appendix:gslope_kkt}).

KKT violations are very rare for gSLOPE (Figure \ref{fig:kkt_spikeplot_synth}), occurring on the simulated data infrequently toward the start of the path. On the real data, only a single dataset had violations, and the number of violations was very small (Table \ref{tbl:full_table_grp_real_data}).

For SGS, KKT violations are more common, but still infrequent (Figure \ref{fig:kkt_spikeplot_synth} and Tables \ref{tbl:full_table_var_real_data}) due to additional assumptions in the second layer of screening. In Equation \ref{eqn:group_condition_sgs}, minimizing the subdifferential term leads to tighter screened sets, contributing to these violations. In Figure \ref{fig:kkt_spikeplot_synth}, the number of violations increases as a function of the model density for SGS, mirroring the log-linear shape of the regularization path. This pattern is also seen in the strong rule for SLOPE \citep{Larsson2020a}.

Despite these violations and the computational cost of the checks and refitting (Algorithm \ref{alg:sgs_framework}), the overall screening process yields significant runtime improvements in all cases (Table \ref{tbl:sim_data} and Figure \ref{fig:real_bar_chart}).

\begin{figure}[t]
    \centering
    \includegraphics[width=1\linewidth]{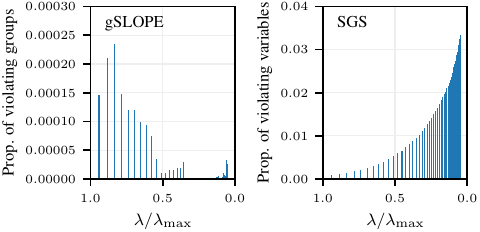}
      \captionsetup{width=1\linewidth}
    \caption{The proportion of KKT violations relative to the full input for gSLOPE and SGS, under linear models, averaged over all synthetic data cases. This is shown as a function of the regularization path.}
    \label{fig:kkt_spikeplot_synth}
\end{figure}

\section{DISCUSSION}\label{section:discussion}
In this manuscript, we have developed strong screening rules for group-based SLOPE models using our new sparse-group strong screening framework: Group SLOPE and Sparse-group SLOPE, neither of which have any previous screening rules. Our proposed screening rules differ from the existing SLOPE strong rule both in construction and in outcome. The screening rule for gSLOPE screens out irrelevant groups before fitting. The screening rules for SGS perform bi-level screening. Our rules apply to the wider class of OWL models, including group-based OSCAR models.

SLOPE models are finding increasing use in genetics and machine learning, with SGS found to have superior disease prediction performance over other penalized methods \citep{Feser2023Sparse-groupFDR-control}. Our screening rules will make the group-based versions more accessible by reducing their computational burden. This will allow practitioners to utilize their FDR properties, facilitating their widespread use across various fields. 

%\textcolor{red}{The screening rules for SGS appear slightly more effective than for gSLOPE. gSLOPE only performs screening through the groups, forcing it to keep whole groups, which contain noisy variables, in the optimization process. These are variables that would be screened out for SGS, meaning the input dimensionality for gSLOPE can not be as drastically reduced.}

Through comprehensive analysis of synthetic and real data, we illustrate that the screening rules lead to dramatic improvements in the runtime of gSLOPE and SGS models, as well as for group-based OSCAR models (Appendix \ref{appendix:oscar}). This is achieved without affecting model accuracy. This is particularly important in datasets where $p\gg n$, such as genetics ones, which is the main motivation behind the proposal of SLOPE \citep{Bogdan2015SLOPEAdaptiveOptimization}. The screening rules presented in this manuscript allow group-based SLOPE, and more generally group-based OWL models, to achieve computational fitting times more in line with their lasso-based counterparts. The screening rules also helped gSLOPE and SGS overcome convergence issues in large datasets, improving solution optimality.

In our data studies, we have not discovered any scenario where our screening rules did not perform better than no screening. In each case, the rules greatly reduce the input dimensionality and speed up the computational runtime.

%By applying the screening rules on genetics datasets decreases the runtime of fitting $100$ models along a regularization path by a factor of $6$ for both gSLOPE and SGS (Figure \ref{fig:real_bar_chart}). 

%Future work should investigate whether alternative strong screening rules are possible for SGS which require fewer assumptions. 

%Additionally, many high-dimensional regimes also have safe screening rules \citep{Fercoq2015MindLasso,Ndiaye2016GAPLasso,Bao2020FastRules,Elvira2021SafeProblem}. Developing safe screening rules for gSLOPE and SGS would allow for direct comparison to the strong screening rules developed in this paper, providing further insight into which type of screening is most beneficial for non-separable penalties.
\paragraph{Limitations} 
%As additional assumptions are required for SGS to perform bi-level selection, its computational efficiency is not as effective as the one observed with the SLOPE screening rules \citep{Larsson2020a}. When using SGS, the improved selection accuracy and predictive performance come at the cost of increased computational demands \citep{Feser2023Sparse-groupFDR-control}. However, our screening rules shift this trade-off in favor of SGS.

Our screening rules, as any strong rule, rely on assumptions. For both gSLOPE and SGS, Lipschitz assumptions were used that are consistent with those used in the strong screening framework \citep{tibshirani2010strong}. For gSLOPE, a Lipschitz assumption was used to derive Proposition \ref{propn:gslope_seq_strong_grad_approx}, while for SGS, a separate Lipschitz assumption was made for each layer of screening (Propositions \ref{propn:sgs_screen_grad_approx} and \ref{propn:sgs_screen_var_grad_approx}).

Violations of these assumptions are checked for using the KKT conditions. The SGS KKT checks (Appendix \ref{appendix:sgs_kkt}) led to an increased number of violations, as the checks were overly conservative. However, the overall amount of KKT violations for both models was still relatively small, suggesting that these assumptions are not overly restrictive.

An attempt was made to derive less conservative checks (Appendix \ref{appendix:alternative_checks}), performed directly on the variables without an initial group check. These were found to be too lenient, incorrectly missing violations. Future work can consider alternative KKT checks.

%Compared to other strong rules, an additional (indirect/weak) assumption is used  variable screening step for SGS, which is required to perform the second layer of screening. In applying Proposition \ref{propn:sgs_screen_var_grad_approx} in practice, it is assumed that $\mathcal{A}_g(\lambda_{k+1}) \subset \mathcal{S}_g(\lambda_{k+1})$. This assumption is backed up by Proposition \ref{propn:sgs_screen_grad_approx}, which guarantees it. However, as this proposition itself uses some assumptions, we indirectly inherit these assumptions into the variable screening of SGS. However, it must be emphasized that it is possible for $\mathcal{A}_g(\lambda_{k+1}) \subset \mathcal{S}_g(\lambda_{k+1})$ to hold even if the assumptions in Proposition \ref{propn:sgs_screen_grad_approx} do not hold, hence the weakness of the additional assumptions. Additionally, it is important to stress that any break of assumptions are counterbalanced using KKT checks, which are performed for a strong rule anyway. The lack of KKT violations for SGS suggest that the additional assumptions are not particularly burdensome.

An additional future direction includes the development of safe rules, which guarantee that only inactive variables are discarded. These could be incorporated into a hybrid scheme together with our strong rules \citep{Zeng2021,Wang2022}. Deriving safe rules would facilitate a comparison between them and our proposed strong rules, offering further insight into which type of screening is most effective for both non-separable and sparse-group norms.
%The screening rules were applied to both synthetic and real data to analyse their effectiveness, showing robust performance across different input dimensions and correlations. In all cases considered, our results show that the screening rules lead to dramatic improvements in the runtime of both gSLOPE and SGS, opening up the possibility of applying them to larger datasets. This is particularly important in genetics datasets, where $p\gg n$, which are the main motivation behind the proposal of SLOPE models. We found that applying screening rules on genetics datasets can decrease the runtime of fitting $100$ models along a regularization path by a factor of $6$ for both gSLOPE and SGS (Figure \ref{fig:real_bar_chart}). This makes these approaches more accessibile for genetics researchers, as the additional computational burden of adaptive penalisation is no longer as heavy, and make them more comparable, in terms of runtime, to their non-adaptive counterparts.  

\subsection*{Acknowledgements}
We would like to thank the anonymous reviewers for
their valuable comments. This work was supported by the Engineering and Physical Sciences Research Council (EPSRC) through the Modern Statistics and Statistical Machine Learning (StatML) CDT programme, grant no. EP/S023151/1.

\bibliography{new_ref_v2}

\begin{thebibliography}{61}
\providecommand{\natexlab}[1]{#1}
\providecommand{\url}[1]{\texttt{#1}}
\expandafter\ifx\csname urlstyle\endcsname\relax
  \providecommand{\doi}[1]{doi: #1}\else
  \providecommand{\doi}{doi: \begingroup \urlstyle{rm}\Url}\fi

\bibitem[Ahmed and Bajwa(2019)]{AHMED201933}
Talal Ahmed and Waheed~U. Bajwa.
\newblock {ExSIS}: Extended sure independence screening for ultrahigh-dimensional linear models.
\newblock \emph{Signal Processing}, 159:\penalty0 33--48, 2019.
\newblock ISSN 0165-1684.
\newblock \doi{https://doi.org/10.1016/j.sigpro.2019.01.018}.

\bibitem[Atamturk and Gomez(2020)]{pmlr-v119-atamturk20a}
Alper Atamturk and Andres Gomez.
\newblock Safe screening rules for $\ell_0$-regression from perspective relaxations.
\newblock In \emph{Proceedings of the 37th International Conference on Machine Learning}, pages 421--430. PMLR, 2020.

\bibitem[Bao et~al.(2020)Bao, Gu, and Huang]{Bao2020FastRules}
Runxue Bao, Bin Gu, and Heng Huang.
\newblock {Fast OSCAR and OWL Regression via Safe Screening Rules}.
\newblock In \emph{Proceedings of the 37th International Conference on Machine Learning}, pages 653--663. PMLR, 2020.

\bibitem[Bogdan et~al.(2015)Bogdan, van~den Berg, Sabatti, Su, and Cand{\`{e}}s]{Bogdan2015SLOPEAdaptiveOptimization}
Małgorzata Bogdan, Ewout van~den Berg, Chiara Sabatti, Weijie Su, and Emmanuel~J Cand{\`{e}}s.
\newblock {SLOPE—Adaptive variable selection via convex optimization}.
\newblock \emph{The Annals of Applied Statistics}, 9\penalty0 (3), 2015.
\newblock ISSN 1932-6157.
\newblock \doi{10.1214/15-AOAS842}.

\bibitem[Bondell and Reich(2008)]{oscar}
Howard~D. Bondell and Brian~J. Reich.
\newblock {Simultaneous Regression Shrinkage, Variable Selection, and Supervised Clustering of Predictors with OSCAR}.
\newblock \emph{Biometrics}, 64\penalty0 (1):\penalty0 115--123, 2008.
\newblock ISSN 0006-341X.
\newblock \doi{10.1111/j.1541-0420.2007.00843.x}.

\bibitem[Brzyski et~al.(2019)Brzyski, Gossmann, Su, and Bogdan]{Brzyski2019GroupPredictors}
Damian Brzyski, Alexej Gossmann, Weijie Su, and Małgorzata Bogdan.
\newblock {Group SLOPE – Adaptive Selection of Groups of Predictors}.
\newblock \emph{Journal of the American Statistical Association}, 114\penalty0 (525):\penalty0 419--433, 2019.
\newblock \doi{10.1080/01621459.2017.1411269}.

\bibitem[Burczynski et~al.(2006)Burczynski, Peterson, Twine, Zuberek, Brodeur, Casciotti, Maganti, Reddy, Strahs, Immermann, Spinelli, Schwertschlag, Slager, Cotreau, and Dorner]{Burczynski2006MolecularCells}
Michael~E Burczynski, Ron~L Peterson, Natalie~C Twine, Krystyna~A Zuberek, Brendan~J Brodeur, Lori Casciotti, Vasu Maganti, Padma~S Reddy, Andrew Strahs, Fred Immermann, Walter Spinelli, Ulrich Schwertschlag, Anna~M Slager, Monette~M Cotreau, and Andrew~J Dorner.
\newblock {Molecular Classification of Crohn's Disease and Ulcerative Colitis Patients Using Transcriptional Profiles in Peripheral Blood Mononuclear Cells}.
\newblock \emph{The Journal of Molecular Diagnostics}, 8\penalty0 (1):\penalty0 51--61, 2006.
\newblock ISSN 15251578.
\newblock \doi{10.2353/jmoldx.2006.050079}.

\bibitem[Chetverikov et~al.(2021)Chetverikov, Liao, and Chernozhukov]{lassocv}
Denis Chetverikov, Zhipeng Liao, and Victor Chernozhukov.
\newblock {On cross-validated Lasso in high dimensions}.
\newblock \emph{The Annals of Statistics}, 49\penalty0 (3):\penalty0 1300 -- 1317, 2021.
\newblock \doi{10.1214/20-AOS2000}.

\bibitem[Dupuis and Tardivel(2023)]{Dupuis2023}
Xavier Dupuis and Patrick J~C Tardivel.
\newblock {The Solution Path of SLOPE}.
\newblock Working paper, 2023.
\newblock URL \url{https://hal.science/hal-04100441}.

\bibitem[Edgar et~al.(2002)Edgar, Domrachev, and Lash]{Edgar2002}
Ron Edgar, Michael Domrachev, and Alex~E Lash.
\newblock Gene expression omnibus: {NCBI} gene expression and hybridization array data repository.
\newblock \emph{Nucleic Acids Research}, 30:\penalty0 207--210, 1 2002.
\newblock ISSN 13624962.
\newblock \doi{10.1093/nar/30.1.207}.

\bibitem[Efron et~al.(2004)Efron, Hastie, Johnstone, and Tibshirani]{LARS}
Bradley Efron, Trevor Hastie, Iain Johnstone, and Robert Tibshirani.
\newblock {Least angle regression}.
\newblock \emph{The Annals of Statistics}, 32\penalty0 (2):\penalty0 407--499, 2004.
\newblock \doi{10.1214/009053604000000067}.

\bibitem[El~Ghaoui et~al.(2010)El~Ghaoui, Viallon, and Rabbani]{Ghaoui2010SafeProblems}
Laurent El~Ghaoui, Vivian Viallon, and Tarek Rabbani.
\newblock Safe feature elimination in sparse supervised learning.
\newblock Technical Report UCB/EECS-2010-126, EECS Department, University of California, Berkeley, 2010.

\bibitem[Ellsworth et~al.(2013)Ellsworth, Eckloff, Li, Moon, Fridley, Jenkins, Carlson, Brisbin, Abo, Bamlet, Petersen, Wieben, and Wang]{Ellsworth2013ContributionAdenocarcinoma}
Katarzyna~A. Ellsworth, Bruce~W. Eckloff, Liang Li, Irene Moon, Brooke~L. Fridley, Gregory~D. Jenkins, Erin Carlson, Abra Brisbin, Ryan Abo, William Bamlet, Gloria Petersen, Eric~D. Wieben, and Liewei Wang.
\newblock {Contribution of FKBP5 Genetic Variation to Gemcitabine Treatment and Survival in Pancreatic Adenocarcinoma}.
\newblock \emph{PLoS ONE}, 8\penalty0 (8):\penalty0 e70216, 2013.
\newblock ISSN 1932-6203.
\newblock \doi{10.1371/journal.pone.0070216}.

\bibitem[Elvira and Herzet(2021)]{Elvira2021SafeProblem}
Clément Elvira and Cédric Herzet.
\newblock {Safe rules for the identification of zeros in the solutions of the SLOPE problem}.
\newblock \emph{SIAM Journal on Mathematics of Data Science}, 5\penalty0 (1):\penalty0 147--173, 2021.
\newblock \doi{10.1137/21m1457631}.

\bibitem[Fan and Lv(2008)]{Fan2008}
Jianqing Fan and Jinchi Lv.
\newblock Sure independence screening for ultrahigh dimensional feature space.
\newblock \emph{Journal of the Royal Statistical Society: Series B (Statistical Methodology)}, 70\penalty0 (5):\penalty0 849--911, 2008.
\newblock \doi{https://doi.org/10.1111/j.1467-9868.2008.00674.x}.

\bibitem[Fercoq et~al.(2015)Fercoq, Gramfort, and Salmon]{Fercoq2015MindLasso}
Olivier Fercoq, Alexandre Gramfort, and Joseph Salmon.
\newblock {Mind the duality gap: safer rules for the Lasso}.
\newblock In \emph{Proceedings of the 32nd International Conference on Machine Learning}, volume~37, pages 333--342. JMLR, 2015.

\bibitem[Feser and Evangelou(2023)]{Feser2023Sparse-groupFDR-control}
Fabio Feser and Marina Evangelou.
\newblock {Sparse-group SLOPE: adaptive bi-level selection with FDR-control}.
\newblock \emph{arXiv preprint arXiv:2305.09467}, 2023.

\bibitem[Figueiredo and Nowak(2014)]{figueiredo2014sparseestimationstronglycorrelated}
Mario A.~T. Figueiredo and Robert~D. Nowak.
\newblock Sparse estimation with strongly correlated variables using ordered weighted l1 regularization.
\newblock \emph{arXiv preprint arXiv:1409.4005}, 2014.

\bibitem[Frommlet et~al.(2022)Frommlet, Szulc, König, and Bogdan]{10.1371/journal.pone.0269369}
Florian Frommlet, Piotr Szulc, Franz König, and Malgorzata Bogdan.
\newblock Selecting predictive biomarkers from genomic data.
\newblock \emph{PLOS ONE}, 17\penalty0 (6):\penalty0 1--21, 06 2022.
\newblock \doi{10.1371/journal.pone.0269369}.

\bibitem[Gossmann et~al.(2015)Gossmann, Cao, and Wang]{Gossmann2015}
Alexej Gossmann, Shaolong Cao, and Yu-Ping Wang.
\newblock Identification of significant genetic variants via slope, and its extension to group slope.
\newblock In \emph{Proceedings of the 6th ACM Conference on Bioinformatics, Computational Biology and Health Informatics}, BCB '15, page 232–240, New York, NY, USA, 2015. Association for Computing Machinery.
\newblock ISBN 9781450338530.

\bibitem[Heap et~al.(2009)Heap, Trynka, Jansen, Bruinenberg, Swertz, Dinesen, Hunt, Wijmenga, vanHeel, and Franke]{Heap2009ComplexLeucocytes}
Graham~A Heap, Gosia Trynka, Ritsert~C Jansen, Marcel Bruinenberg, Morris~A Swertz, Lotte~C Dinesen, Karen~A Hunt, Cisca Wijmenga, David~A vanHeel, and Lude Franke.
\newblock {Complex nature of SNP genotype effects on gene expression in primary human leucocytes}.
\newblock \emph{BMC Medical Genomics}, 2\penalty0 (1):\penalty0 1, 2009.
\newblock ISSN 1755-8794.
\newblock \doi{10.1186/1755-8794-2-1}.

\bibitem[Homrighausen and McDonald(2018)]{Homrighausen}
Darren Homrighausen and Daniel~J. McDonald.
\newblock A study on tuning parameter selection for the high-dimensional lasso.
\newblock \emph{Journal of Statistical Computation and Simulation}, 88\penalty0 (15):\penalty0 2865--2892, 2018.
\newblock \doi{10.1080/00949655.2018.1491575}.

\bibitem[Johnson and Guestrin(2015)]{pmlr-v37-johnson15}
Tyler Johnson and Carlos Guestrin.
\newblock Blitz: A principled meta-algorithm for scaling sparse optimization.
\newblock In Francis Bach and David Blei, editors, \emph{Proceedings of the 32nd International Conference on Machine Learning}, volume~37 of \emph{Proceedings of Machine Learning Research}, pages 1171--1179, Lille, France, 07--09 Jul 2015. PMLR.

\bibitem[Koussounadis et~al.(2014)Koussounadis, Langdon, Harrison, and Smith]{carbotax}
A~Koussounadis, S~P Langdon, D~J Harrison, and V~A Smith.
\newblock Chemotherapy-induced dynamic gene expression changes in vivo are prognostic in ovarian cancer.
\newblock \emph{British Journal of Cancer}, 110:\penalty0 2975--2984, 6 2014.
\newblock ISSN 0007-0920.
\newblock \doi{10.1038/bjc.2014.258}.

\bibitem[Kremer et~al.(2020)Kremer, Lee, Bogdan, and Paterlini]{KREMER2020105687}
Philipp~J. Kremer, Sangkyun Lee, Małgorzata Bogdan, and Sandra Paterlini.
\newblock Sparse portfolio selection via the sorted $\ell_1$-norm.
\newblock \emph{Journal of Banking \& Finance}, 110:\penalty0 105687, 2020.
\newblock ISSN 0378-4266.
\newblock \doi{https://doi.org/10.1016/j.jbankfin.2019.105687}.

\bibitem[Kuhn and Tucker(1950)]{Kuhn1950NonlinearProgramming}
H~W Kuhn and A~W Tucker.
\newblock {Nonlinear programming}.
\newblock In \emph{Proceedings of the Second Berkeley Symposium on Mathematical Statistics and Probability}, pages 481--492, Berkeley, Los Angeles, USA, 1950. University of California Press.

\bibitem[Larsson et~al.(2020)Larsson, Bogdan, and Wallin]{Larsson2020a}
Johan Larsson, Małgorzata Bogdan, and Jonas Wallin.
\newblock The strong screening rule for {SLOPE}.
\newblock In \emph{Advances in Neural Information Processing Systems}, volume~33, pages 14592--14603. Curran Associates, Inc., 2020.

\bibitem[Larsson et~al.(2022)Larsson, Klopfenstein, Massias, and Wallin]{Larsson2022CoordinateSLOPE}
Johan Larsson, Quentin Klopfenstein, Mathurin Massias, and Jonas Wallin.
\newblock {Coordinate Descent for SLOPE}.
\newblock \emph{Proceedings of Machine Learning Research}, 206:\penalty0 4802--4821, 2022.
\newblock ISSN 26403498.

\bibitem[Li et~al.(2016)Li, Zhang, Jenkins, Xie, Carlson, Fridley, Bamlet, Petersen, McWilliams, and Wang]{Li2016GeneticCancer}
Liang Li, Jian-Wei Zhang, Gregory Jenkins, Fang Xie, Erin~E. Carlson, Brooke~L. Fridley, William~R. Bamlet, Gloria~M. Petersen, Robert~R. McWilliams, and Liewei Wang.
\newblock {Genetic variations associated with gemcitabine treatment outcome in pancreatic cancer}.
\newblock \emph{Pharmacogenetics and Genomics}, 26\penalty0 (12):\penalty0 527--537, 2016.
\newblock ISSN 1744-6872.
\newblock \doi{10.1097/FPC.0000000000000241}.

\bibitem[Liang et~al.(2022)Liang, Cohen, Heinsfeld, Pestilli, and McDonald]{Liang2022Sparsegl:Lasso}
Xiaoxuan Liang, Aaron Cohen, Anibal~Solón Heinsfeld, Franco Pestilli, and Daniel~J. McDonald.
\newblock {sparsegl: An R Package for Estimating Sparse Group Lasso}.
\newblock \emph{arXiv preprint arXiv:2208.02942}, 2022.

\bibitem[Liberzon et~al.(2011)Liberzon, Subramanian, Pinchback, Thorvaldsdóttir, Tamayo, and Mesirov]{msigdb}
Arthur Liberzon, Aravind Subramanian, Reid Pinchback, Helga Thorvaldsdóttir, Pablo Tamayo, and Jill~P. Mesirov.
\newblock {Molecular signatures database ({MSigDB}) 3.0}.
\newblock \emph{Bioinformatics}, 27\penalty0 (12):\penalty0 1739--1740, 05 2011.
\newblock ISSN 1367-4803.
\newblock \doi{10.1093/bioinformatics/btr260}.

\bibitem[Lloyd(1982)]{1056489}
S.~Lloyd.
\newblock Least squares quantization in {PCM}.
\newblock \emph{IEEE Transactions on Information Theory}, 28\penalty0 (2):\penalty0 129--137, 1982.
\newblock \doi{10.1109/TIT.1982.1056489}.

\bibitem[Ma et~al.(2004)Ma, Wang, Ryan, Isakoff, Barmettler, Fuller, Muir, Mohapatra, Salunga, Tuggle, Tran, Tran, Tassin, Amon, Wang, Wang, Enright, Stecker, Estepa-Sabal, Smith, Younger, Balis, Michaelson, Bhan, Habin, Baer, Brugge, Haber, Erlander, and Sgroi]{Ma2004ATamoxifen}
Xiao-Jun Ma, Zuncai Wang, Paula~D Ryan, Steven~J Isakoff, Anne Barmettler, Andrew Fuller, Beth Muir, Gayatry Mohapatra, Ranelle Salunga, J.Todd Tuggle, Yen Tran, Diem Tran, Ana Tassin, Paul Amon, Wilson Wang, Wei Wang, Edward Enright, Kimberly Stecker, Eden Estepa-Sabal, Barbara Smith, Jerry Younger, Ulysses Balis, James Michaelson, Atul Bhan, Karleen Habin, Thomas~M Baer, Joan Brugge, Daniel~A Haber, Mark~G Erlander, and Dennis~C Sgroi.
\newblock {A two-gene expression ratio predicts clinical outcome in breast cancer patients treated with tamoxifen}.
\newblock \emph{Cancer Cell}, 5\penalty0 (6):\penalty0 607--616, 2004.
\newblock ISSN 15356108.
\newblock \doi{10.1016/j.ccr.2004.05.015}.

\bibitem[Ndiaye et~al.(2016{\natexlab{a}})Ndiaye, Fercoq, Gramfort, and Salmon]{Ndiaye2016GAPLasso}
Eugene Ndiaye, Olivier Fercoq, Alexandre Gramfort, and Joseph Salmon.
\newblock {GAP Safe Screening Rules for Sparse-Group Lasso}.
\newblock In \emph{Advances in Neural Information Processing Systems}, volume~29. Curran Associates, Inc., 2016{\natexlab{a}}.

\bibitem[Ndiaye et~al.(2016{\natexlab{b}})Ndiaye, Fercoq, Gramfort, and Salmon]{Ndiaye2016GapPenalties}
Eugene Ndiaye, Olivier Fercoq, Alexandre Gramfort, and Joseph Salmon.
\newblock {Gap Safe screening rules for sparsity enforcing penalties}.
\newblock \emph{Journal of Machine Learning Research}, 18, 2016{\natexlab{b}}.
\newblock ISSN 15337928.

\bibitem[Negrinho and Martins(2014)]{Negrinho2014OrbitRegularization}
Renato Negrinho and André F~T Martins.
\newblock {Orbit regularization}.
\newblock In \emph{Proceedings of the 27th International Conference on Neural Information Processing Systems}, volume~2, pages 3221--3229. MIT Press, 2014.

\bibitem[Nomura(2020)]{Nomura2020}
Shunichi Nomura.
\newblock {An Exact Solution Path Algorithm for SLOPE and Quasi-Spherical OSCAR}.
\newblock \emph{arXiv preprint arXiv:2010.15511}, 2020.

\bibitem[Ogawa et~al.(2013)Ogawa, Suzuki, and Takeuchi]{pmlr-v28-ogawa13b}
Kohei Ogawa, Yoshiki Suzuki, and Ichiro Takeuchi.
\newblock Safe screening of non-support vectors in pathwise svm computation.
\newblock In \emph{Proceedings of the 30th International Conference on Machine Learning}, pages 1382--1390. PMLR, 2013.

\bibitem[Pedregosa and Gidel(2018)]{pmlr-v80-pedregosa18a}
Fabian Pedregosa and Gauthier Gidel.
\newblock Adaptive three operator splitting.
\newblock In \emph{Proceedings of the 35th International Conference on Machine Learning}, volume~80 of \emph{Proceedings of Machine Learning Research}, pages 4085--4094. PMLR, 2018.

\bibitem[Pei et~al.(2009)Pei, Li, Fridley, Jenkins, Kalari, Lingle, Petersen, Lou, and Wang]{Pei2009FKBP51Akt}
Huadong Pei, Liang Li, Brooke~L. Fridley, Gregory~D. Jenkins, Krishna~R. Kalari, Wilma Lingle, Gloria Petersen, Zhenkun Lou, and Liewei Wang.
\newblock {FKBP51 Affects Cancer Cell Response to Chemotherapy by Negatively Regulating Akt}.
\newblock \emph{Cancer Cell}, 16\penalty0 (3):\penalty0 259--266, 2009.
\newblock ISSN 15356108.
\newblock \doi{10.1016/j.ccr.2009.07.016}.

\bibitem[Riccobello et~al.(2023)Riccobello, Bogdan, Bonaccolto, Kremer, Paterlini, and Sobczyk]{riccobello2023sparsegraphicalmodellingsorted}
Riccardo Riccobello, Malgorzata Bogdan, Giovanni Bonaccolto, Philipp~J. Kremer, Sandra Paterlini, and Piotr Sobczyk.
\newblock Sparse graphical modelling via the sorted $\ell_1$-norm, 2023.

\bibitem[Sabates-Bellver et~al.(2007)Sabates-Bellver, Van~der Flier, de~Palo, Cattaneo, Maake, Rehrauer, Laczko, Kurowski, Bujnicki, Menigatti, Luz, Ranalli, Gomes, Pastorelli, Faggiani, Anti, Jiricny, Clevers, and Marra]{SabatesBellver2007TranscriptomeAdenomas}
Jacob Sabates-Bellver, Laurens~G. Van~der Flier, Mariagrazia de~Palo, Elisa Cattaneo, Caroline Maake, Hubert Rehrauer, Endre Laczko, Michal~A. Kurowski, Janusz~M. Bujnicki, Mirco Menigatti, Judith Luz, Teresa~V. Ranalli, Vito Gomes, Alfredo Pastorelli, Roberto Faggiani, Marcello Anti, Josef Jiricny, Hans Clevers, and Giancarlo Marra.
\newblock {Transcriptome Profile of Human Colorectal Adenomas}.
\newblock \emph{Molecular Cancer Research}, 5\penalty0 (12):\penalty0 1263--1275, 2007.
\newblock ISSN 1541-7786.
\newblock \doi{10.1158/1541-7786.MCR-07-0267}.

\bibitem[Scheetz et~al.(2006)Scheetz, Kim, Swiderski, Philp, Braun, Knudtson, Dorrance, DiBona, Huang, Casavant, Sheffield, and Stone]{scheetz2006RegulationDisease}
Todd~E. Scheetz, Kwang-Youn~A. Kim, Ruth~E. Swiderski, Alisdair~R. Philp, Terry~A. Braun, Kevin~L. Knudtson, Anne~M. Dorrance, Gerald~F. DiBona, Jian Huang, Thomas~L. Casavant, Val~C. Sheffield, and Edwin~M. Stone.
\newblock {Regulation of gene expression in the mammalian eye and its relevance to eye disease}.
\newblock \emph{Proceedings of the National Academy of Sciences}, 103\penalty0 (39):\penalty0 14429--14434, 2006.
\newblock ISSN 0027-8424.
\newblock \doi{10.1073/pnas.0602562103}.

\bibitem[Schneider and Tardivel(2022)]{Schneider2022TheEstimation}
Ulrike Schneider and Patrick Tardivel.
\newblock {The Geometry of Uniqueness, Sparsity and Clustering in Penalized Estimation}.
\newblock \emph{Journal of Machine Learning Research}, 23:\penalty0 1--36, 2022.

\bibitem[Shibagaki et~al.(2016)Shibagaki, Karasuyama, Hatano, and Takeuchi]{Shibagaki2016}
Atsushi Shibagaki, Masayuki Karasuyama, Kohei Hatano, and Ichiro Takeuchi.
\newblock Simultaneous safe screening of features and samples in doubly sparse modeling.
\newblock In \emph{Proceedings of The 33rd International Conference on Machine Learning}, volume~48 of \emph{Proceedings of Machine Learning Research}, pages 1577--1586. PMLR, 2016.

\bibitem[Simon et~al.(2013)Simon, Friedman, Hastie, and Tibshirani]{Simon2013}
Noah Simon, Jerome Friedman, Trevor Hastie, and Robert Tibshirani.
\newblock {A Sparse-Group Lasso}.
\newblock \emph{Journal of Computational and Graphical Statistics}, 22\penalty0 (2):\penalty0 231--245, 2013.
\newblock ISSN 1061-8600.
\newblock \doi{10.1080/10618600.2012.681250}.

\bibitem[Su and Cand{\`e}s(2016)]{SuCandes}
Weijie Su and Emmanuel Cand{\`e}s.
\newblock {SLOPE is adaptive to unknown sparsity and asymptotically minimax}.
\newblock \emph{The Annals of Statistics}, 44\penalty0 (3):\penalty0 1038 -- 1068, 2016.
\newblock \doi{10.1214/15-AOS1397}.

\bibitem[Subramanian et~al.(2005)Subramanian, Tamayo, Mootha, Mukherjee, Ebert, Gillette, Paulovich, Pomeroy, Golub, Lander, and Mesirov]{Subramanian2005}
Aravind Subramanian, Pablo Tamayo, Vamsi~K. Mootha, Sayan Mukherjee, Benjamin~L. Ebert, Michael~A. Gillette, Amanda Paulovich, Scott~L. Pomeroy, Todd~R. Golub, Eric~S. Lander, and Jill~P. Mesirov.
\newblock Gene set enrichment analysis: A knowledge-based approach for interpreting genome-wide expression profiles.
\newblock \emph{Proceedings of the National Academy of Sciences}, 102:\penalty0 15545--15550, 10 2005.
\newblock ISSN 0027-8424.
\newblock \doi{10.1073/pnas.0506580102}.

\bibitem[Tibshirani(1996)]{Tibshirani1996RegressionLasso}
Robert Tibshirani.
\newblock {Regression Shrinkage and Selection Via the Lasso}.
\newblock \emph{Journal of the Royal Statistical Society: Series B (Methodological)}, 58\penalty0 (1):\penalty0 267--288, 1996.
\newblock ISSN 00359246.
\newblock \doi{10.1111/j.2517-6161.1996.tb02080.x}.

\bibitem[Tibshirani et~al.(2010)Tibshirani, Bien, Friedman, Hastie, Simon, Taylor, and Tibshirani]{tibshirani2010strong}
Robert Tibshirani, Jacob Bien, Jerome Friedman, Trevor Hastie, Noah Simon, Jonathan Taylor, and Ryan~J. Tibshirani.
\newblock {Strong rules for discarding predictors in lasso-type problems}.
\newblock \emph{Journal of the Royal Statistical Society. Series B: Statistical Methodology}, 74\penalty0 (2):\penalty0 245--266, 2010.
\newblock ISSN 13697412.
\newblock \doi{10.1111/j.1467-9868.2011.01004.x}.

\bibitem[Virouleau et~al.(2017)Virouleau, Guilloux, Gaïffas, and Bogdan]{virouleau2017highdimensionalrobustregressionoutliers}
Alain Virouleau, Agathe Guilloux, Stéphane Gaïffas, and Malgorzata Bogdan.
\newblock High-dimensional robust regression and outliers detection with {SLOPE}, 2017.

\bibitem[Wang and Breheny(2022)]{Wang2022}
Chuyi Wang and Patrick Breheny.
\newblock Adaptive hybrid screening for efficient lasso optimization.
\newblock \emph{Journal of Statistical Computation and Simulation}, 92\penalty0 (11):\penalty0 2233--2256, 2022.
\newblock \doi{10.1080/00949655.2021.2025376}.

\bibitem[Wang and Ye(2014)]{Wang2014Two-LayerSets}
Jie Wang and Jieping Ye.
\newblock {Two-Layer Feature Reduction for Sparse-Group Lasso via Decomposition of Convex Sets}.
\newblock \emph{Advances in Neural Information Processing Systems}, 3:\penalty0 2132--2140, 2014.
\newblock ISSN 10495258.

\bibitem[Wang et~al.(2013)Wang, Zhou, Wonka, and Ye]{Wang2013LassoProjection}
Jie Wang, Jiayu Zhou, Peter Wonka, and Jieping Ye.
\newblock {Lasso screening rules via dual Polytope Projection}.
\newblock In \emph{Proceedings of the 26th International Conference on Neural Information Processing Systems}, volume~1, pages 1070--1078. Curran Associates Inc., 2013.

\bibitem[Wang et~al.(2014)Wang, Zhou, Liu, Wonka, and Ye]{Wang2014ARegression}
Jie Wang, Jiayu Zhou, Jun Liu, Peter Wonka, and Jieping Ye.
\newblock {A Safe Screening Rule for Sparse Logistic Regression}.
\newblock In Z~Ghahramani, M~Welling, C~Cortes, N~Lawrence, and K~Q Weinberger, editors, \emph{Advances in Neural Information Processing Systems}, volume~27. Curran Associates, Inc., 2014.

\bibitem[Xiang and Ramadge(2012)]{Xiang2012FastCorrelations}
Zhen~James Xiang and Peter~J Ramadge.
\newblock {Fast lasso screening tests based on correlations}.
\newblock In \emph{2012 IEEE International Conference on Acoustics, Speech and Signal Processing (ICASSP)}, pages 2137--2140, 2012.
\newblock \doi{10.1109/ICASSP.2012.6288334}.

\bibitem[Zeng and Figueiredo(2014{\natexlab{a}})]{Zeng2014DecreasingWS}
Xiangrong Zeng and M{\'a}rio A.~T. Figueiredo.
\newblock Decreasing weighted sorted $\ell_1$ regularization.
\newblock \emph{IEEE Signal Processing Letters}, 21:\penalty0 1240--1244, 2014{\natexlab{a}}.

\bibitem[Zeng and Figueiredo(2014{\natexlab{b}})]{Zeng2014TheAlgorithm}
Xiangrong Zeng and Mário A~T Figueiredo.
\newblock {The atomic norm formulation of OSCAR regularization with application to the Frank-Wolfe algorithm}.
\newblock In \emph{2014 22nd European Signal Processing Conference (EUSIPCO)}, pages 780--784, 2014{\natexlab{b}}.

\bibitem[Zeng et~al.(2021)Zeng, Yang, and Breheny]{Zeng2021}
Yaohui Zeng, Tianbao Yang, and Patrick Breheny.
\newblock Hybrid safe–strong rules for efficient optimization in lasso-type problems.
\newblock \emph{Computational Statistics \& Data Analysis}, 153:\penalty0 107063, 2021.
\newblock ISSN 0167-9473.
\newblock \doi{https://doi.org/10.1016/j.csda.2020.107063}.

\bibitem[Zhao and Huo(2023)]{https://doi.org/10.1002/wics.1602}
Yujie Zhao and Xiaoming Huo.
\newblock A survey of numerical algorithms that can solve the lasso problems.
\newblock \emph{WIREs Computational Statistics}, 15\penalty0 (4):\penalty0 e1602, 2023.
\newblock \doi{https://doi.org/10.1002/wics.1602}.

\bibitem[Zou(2006)]{Zou2006}
Hui Zou.
\newblock The adaptive lasso and its oracle properties.
\newblock \emph{Journal of the American Statistical Association}, 101:\penalty0 1418--1429, 12 2006.

\end{thebibliography}
\bibliographystyle{plainnat}

%%%%%%%%%%%%%%%%%%%%%%%%%%%%%%%%%%%%%%%%%%%%%%%%%%%%%%%%%%%%
\section*{Checklist}

 \begin{enumerate}

 \item For all models and algorithms presented, check if you include:
 \begin{enumerate}
   \item A clear description of the mathematical setting, assumptions, algorithm, and/or model. [Yes, we have defined our approach and the problem setting.]
   \item An analysis of the properties and complexity (time, space, sample size) of any algorithm. [Yes, we have described the complexity of the fitting algorithm.]
   \item (Optional) Anonymized source code, with specification of all dependencies, including external libraries. [Yes]
 \end{enumerate}

 \item For any theoretical claim, check if you include:
 \begin{enumerate}
   \item Statements of the full set of assumptions of all theoretical results. [Yes. We have presented our assumptions in the results and have discussed the limitations of these in the discussion.]
   \item Complete proofs of all theoretical results. [Yes. Proofs have been provided in the Appendix for all of our results.]
   \item Clear explanations of any assumptions. [Yes, we have extensively discussed our assumptions and any corresponding limitations.]     
 \end{enumerate}

 \item For all figures and tables that present empirical results, check if you include:
 \begin{enumerate}
   \item The code, data, and instructions needed to reproduce the main experimental results (either in the supplemental material or as a URL). [Yes, code has been provided to reproduce any figure.]
   \item All the training details (e.g., data splits, hyperparameters, how they were chosen). [Yes, we have given the full simulation set up in Table \ref{tbl:atos_params}.]
         \item A clear definition of the specific measure or statistics and error bars (e.g., with respect to the random seed after running experiments multiple times). [Yes, we have described our statistical measures and our error bars.]
         \item A description of the computing infrastructure used. (e.g., type of GPUs, internal cluster, or cloud provider). [Yes]
 \end{enumerate}

 \item If you are using existing assets (e.g., code, data, models) or curating/releasing new assets, check if you include:
 \begin{enumerate}
   \item Citations of the creator If your work uses existing assets. [Yes, we have cited our data and model sources.]
   \item The license information of the assets, if applicable. [Yes]
   \item New assets either in the supplemental material or as a URL, if applicable. [Not Applicable]
   \item Information about consent from data providers/curators. [Yes, this is included in the code supplementary materials.]
   \item Discussion of sensible content if applicable, e.g., personally identifiable information or offensive content. [Not Applicable]
 \end{enumerate}

 \item If you used crowdsourcing or conducted research with human subjects, check if you include:
 \begin{enumerate}
   \item The full text of instructions given to participants and screenshots. [Not Applicable]
   \item Descriptions of potential participant risks, with links to Institutional Review Board (IRB) approvals if applicable. [Not Applicable]
   \item The estimated hourly wage paid to participants and the total amount spent on participant compensation. [Not Applicable]
 \end{enumerate}

 \end{enumerate}
 \newpage
\appendix
\onecolumn
\renewcommand\thefigure{A\arabic{figure}}
\setcounter{figure}{0} 
\setcounter{table}{0}
\renewcommand{\thetable}{A\arabic{table}}
\setcounter{algorithm}{0}
\renewcommand{\thealgorithm}{A\arabic{algorithm}}
%\parttoc % Insert the appendix TOC
\aistatstitle{Strong Screening Rules for Group-based SLOPE Models: \\
Supplementary Materials}
\section{GROUP SLOPE}
\subsection{Penalty Weights}\label{appendix:gslope_pen_seq}
The penalty weights for gSLOPE were derived to provide group FDR-control under orthogonal designs \citep{Brzyski2019GroupPredictors}. For the FDR-control parameter $q_g\in(0,1)$, they are given by (where the indexing corresponds to the sorted groups)
\begin{equation*}
w_i^\text{max} = \max_{j=1,\dots,m}\left\{\frac{1}{\sqrt{p_j}} F^{-1}_{\chi_{p_j}} (1-q_gi/m)\right\}, \; \text{for} \; i=1,\dots,m,
 \end{equation*}
where $F_{\chi_{p_j}}$ is the cumulative distribution function of a $\chi$ distribution with $p_j$ degrees of freedom. A relaxtion to this sequence is applied in \cite{Brzyski2019GroupPredictors} to give
 \begin{equation}\label{eqn:gslope_pen_mean}
	w_i^\text{mean} = \overline{F}^{-1}_{\chi_{p_j}} (1-q_gi/m), \; \text{where} \; \overline{F}_{\chi_{p_j}}(x):= \frac{1}{m}\sum_{j=1}^{m}F_{\chi_{p_j}}(\sqrt{p_j}x).
\end{equation}
The mean sequence weights defined in Equation \ref{eqn:gslope_pen_mean} are used for all gSLOPE numerical simulations in this manuscript (shown in Figure \ref{fig:appendix_gslope_weights}).
\begin{figure}[H]
\vskip 0.2in
\begin{center}
\centerline{\includegraphics[width=0.65\columnwidth]{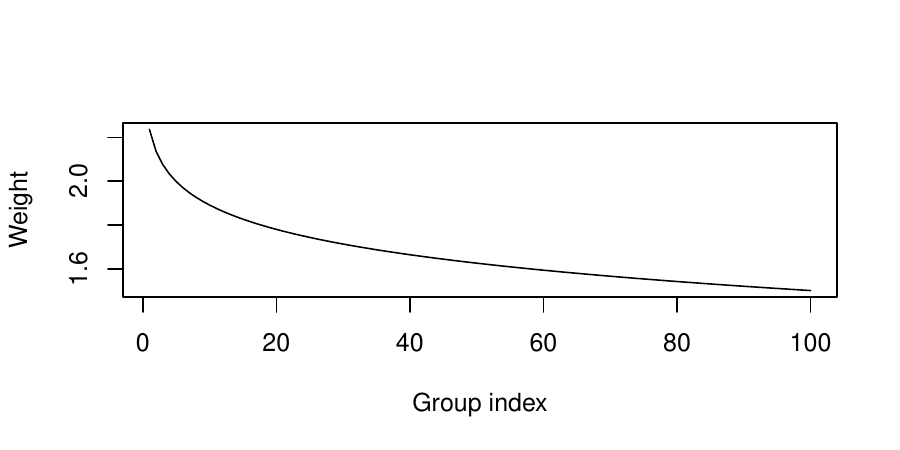}}
\caption{The gSLOPE weights, $w$, shown for Figure \ref{fig:time_fcn_of_p} for $p=500, m=100, q_g = 0.05$.}
\label{fig:appendix_gslope_weights}
\end{center}
\vskip -0.2in
\end{figure}

\subsection{SLOPE Subdifferential}\label{appendix:slope_subdiff}
The subdifferential for SLOPE was derived in \cite{Larsson2020a} and as it is a vital part of our arguments, it is reproduced here for ease of reference. 

Define the function $R:\mathbb{R}^p \rightarrow \mathbb{N}^p$ that returns the ranks of the absolute values of its input and the set $\mathcal{C}_i(\beta) = \{j\in \{1,\ldots,p\} \;:\; |\beta_i|=|\beta_j|\}$. Then, the subdifferential is given by \citep{Larsson2020a}

\begin{equation*}
    \partial J_\text{slope}(\beta; v) = \begin{cases}
       \left\{x\in \mathbb{R}^{|\mathcal{C}_i|}:\text{cumsum}(|x|_\downarrow -  v_{R(x)_{\mathcal{C}_i}}) \preceq 0 \right\}  &  \text{if} \; \beta_{\mathcal{C}_i} = 0, \\
\Big\{ x\in \mathbb{R}^{|\mathcal{C}_i|}: \text{cumsum}(|x|_\downarrow -  v_{R(x)_{\mathcal{C}_i}}) \preceq 0  \\ \;\;\;\;\; \; \text{and} \;\; \sum_{j\in\mathcal{C}_i}(|x_j| - v_{R(s)_j}) = 0 \\
 \;\;\;\;\; \;\text{and}\;\; \text{sign}(\beta_{\mathcal{C}_i}) = \text{sign}(x) \Big\}
 & \text{otherwise.}
    \end{cases}
\end{equation*}
The primary use of the subdifferential in this manuscript is the zero condition (Equation \ref{eqn:gslope_sub_condition}).
\subsection{Theory}\label{appendix:gslope_theory}
\begin{proof}[Proof of Theorem \ref{thm:gslope_subdiff}]
The proof is similar to that of Theorem 2.7 in \cite{Brzyski2019GroupPredictors}, where the subdifferential of gSLOPE is derived under equal groups. It is derived here under more general terms. The subdifferential needs to be derived under two cases: 
\begin{enumerate}
    \item Inactive groups, $\mathcal{G}_\mathcal{Z}$.
    \item Active groups, $\mathcal{G}_\mathcal{A}$.
\end{enumerate}
\textit{Case 1:} For inactive groups, we consider the subdifferential at zero. The subdifferential of a norm at zero is given by the dual norm of the unit ball \citep{Schneider2022TheEstimation},
\begin{equation*}
        \partial J_\text{gslope}(\mathbf{0};w)=\mathbf{B}_{J_\text{gslope}^*(\mathbf{0};w)}[0,1] = \{x:J_\text{gslope}^*(x;w) \leq 1\}.
\end{equation*}
The dual norm for gSLOPE is given by \citep{Brzyski2019GroupPredictors}
\begin{equation*}
    J_\text{gslope}^*(x;w) = J_\text{slope}^*([x]_{\mathcal{G},-0.5}).
\end{equation*}
Hence, the dual norm unit ball is
\begin{equation*}   
\mathbf{B}_{J_\text{gslope}^*(\mathbf{0};w)}[0,1] = \{x: [x]_{\mathcal{G},-0.5} \in  \mathbf{B}_{J_\text{slope}^*(\mathbf{0};w)}[0,1]\},
\end{equation*}
where $\mathbf{B}_{J_\text{slope}^*(\mathbf{0};w)}[0,1] = \{x\in \mathbb{R}^{m} : \text{cumsum}(|x|_\downarrow - w) \preceq \mathbf{0}\}$ is the unit ball of the dual norm to $J_{\text{slope}}$ \citep{Bogdan2015SLOPEAdaptiveOptimization}. Using this, the subdifferential at zero for the inactive groups, $\mathcal{Z}$, is given by
\begin{equation*}
    \partial J_{\text{gslope}}(\mathbf{0};w_{\mathcal{Z}}) =  \{x \in \mathbb{R}^{\card(\mathcal{G}_\mathcal{Z})}: [x]_{\mathcal{G}_\mathcal{Z},-0.5} \in \partial J_\text{slope}(\mathbf{0};w_{\mathcal{Z}})\}.
\end{equation*}

\textit{Case 2:} Without loss of generality, denote the group index $s$ such that $\|\beta^{(g)}\|_2 = 0$ for $g > s$ (inactive groups) and $\|\beta^{(g)}\|_2 \neq 0$ for $g \leq s$ (active groups). In other words,  $g\in\mathcal{G}_\mathcal{A}$ if $g\leq s$. Define a set $D=\{d \in \mathbb{R}^p: \|\beta^{(1)}+d^{(1)}\|_2 > \ldots > \|\beta^{(s)}+d^{(s)}\|_2, \|\beta^{(s)}+d^{(s)}\|_2 > \|d^{(g)}\|_2, g>s \}$. By definition of a subdifferential, if $x \in \partial J_{\text{gslope}}(\beta;w)$, then for all $d\in D$
\begin{equation*}
    \sum_{g=1}^m \sqrt{p_g} w_g \|\beta^{(g)} + d^{(g)}\|_2\geq \sum_{g=1}^m \sqrt{p_g}w_g \|\beta^{(g)}\|_2+ x^\top d.
\end{equation*}
Splitting this up into whether the groups are active (whether $g\leq s$):
\begin{align}\label{eqn:gslope_sub_eqn_1}
        \sum_{g=1}^s\sqrt{p_g} w_g \|\beta^{(g)} + d^{(g)}\|_2 +  \sum_{g=s+1}^m \sqrt{p_g} w_g\| d^{(g)}\|_2  \geq &\sum_{g=1}^s\sqrt{p_g}w_g  \|\beta^{(g)}\|_2\\ &+  \sum_{g=1}^s x^{(g)T} d^{(g)}+\sum_{g=s+1}^m x^{(g)T} d^{(g)}. \nonumber
\end{align}
Now, for $g\in\mathcal{G}_\mathcal{A}$, define a new set $D_g = \{d\in D: d^{(j)} \equiv \mathbf{0}, j\neq g\}$. Taking $d\in D_g$, Equation \ref{eqn:gslope_sub_eqn_1} becomes 
\begin{equation*}
    \sqrt{p_g}w_g \|\beta^{(g)} + d^{(g)}\|_2 \geq \sqrt{p_g}w_g\|\beta^{(g)}\|_2 + x^{(g)T}d^{(g)}.
\end{equation*}
Since the set $\{d^{(g)}: d \in D_g\}$ is open in $\mathbb{R}^{p_g}$ and contains zero, by Corollary G.1 in \cite{Brzyski2019GroupPredictors}, it follows that $x^{(g)}\in \partial f_g(b^{(g)})$ for $f_g: \mathbb{R}^{p_g} \rightarrow \mathbb{R}, f_g(x)=w_g \sqrt{p_g}\|x\|_2$. Now, for $g\leq s$, $f_g$ is differentiable in $\beta^{(g)}$, giving
\begin{equation*}
    x^{(g)} = w_g \sqrt{p_g}\frac{\beta^{(g)}}{\|\beta^{(g)}\|_2},
\end{equation*}
proving the result.
\end{proof}
\begin{proof}[Proof of Proposition \ref{propn:gslope_seq_strong}]
Suppose we have $\mathcal{B} \neq \emptyset$ after running the algorithm. Then, plugging in $h(\lambda_{k+1}) = ([\nabla f(\hat\beta(\lambda_{k+1}))]_{\mathcal{G},-0.5})_\downarrow$ gives
\begin{equation*}
 \text{cumsum}\Bigl(\bigl(([\nabla f(\hat\beta(\lambda_{k+1}))]_{\mathcal{G},-0.5})_\downarrow\bigr)_\mathcal{B} - \lambda_{k+1}w_\mathcal{B}\Bigr) \prec \mathbf{0},
    \end{equation*}
so that by the gSLOPE subdifferential (Theorem \ref{thm:gslope_subdiff}) all groups in $\mathcal{B}$ are inactive. This is valid by the KKT conditions (Equation \ref{eqn:kkt_condition}), as we know that $-\nabla f(\hat\beta(\lambda_{k+1})) \in \partial J_\text{gslope}(\mathbf{0};w)$. Hence, $\mathcal{S}_g(\lambda_{k+1})$ will contain the active set $\mathcal{A}_g(\lambda_{k+1})$.
\end{proof}
\begin{proof}[Proof of Proposition \ref{propn:gslope_seq_strong_grad_approx}]
Since $\text{cumsum}(y) \succeq \text{cumsum}(x) \iff y \succeq x$ \citep{Larsson2020a}, we only need to show for a group $g$,
\begin{equation*}
|h_g(\lambda_{k+1})| \leq |h_g(\lambda_{k})|+ \lambda_k w_g - \lambda_{k+1}w_g.
\end{equation*}
Applying the reverse triangle inequality to the Lipschitz assumption gives
\begin{align*}
 &|h_g(\lambda_{k+1})| - |h_g(\lambda_{k})| \leq \left|h_g(\lambda_{k+1}) -h_g(\lambda_{k}) \right|\leq \lambda_k w_g - \lambda_{k+1}w_g\\
\implies  &|h_g(\lambda_{k+1})| \leq  |h_g(\lambda_{k})| + \lambda_k w_g - \lambda_{k+1}w_g,
\end{align*}
proving the result.
\end{proof}
\subsection{KKT Checks}\label{appendix:gslope_kkt}
To check whether a group has been correctly discarded during the screening step, the KKT conditions for gSLOPE are checked. They are given by
\begin{align*}
    \mathbf{0} &\in \nabla f(\beta) + \lambda\partial J_\text{gslope}(\beta;w) \\
    \implies -\nabla f(\beta) &\in \lambda\partial J_\text{gslope}(\beta;w).
\end{align*}
Hence, we are checking whether the gradient of the loss function sits within the set of the gradient of the penalty. As we are only interested in identifying incorrectly discarded groups, we require only to check the subdifferential condition at zero. Hence, a violation occurs if a group is discarded but
\begin{align*}
&-\nabla f(\beta) \notin \lambda\partial J_\text{gslope}(\mathbf{0};w_{\mathcal{G}_\mathcal{Z}})\\
    \implies&-\nabla f(\beta) \notin \left\{x\in \mathbb{R}^{\card\mathcal{G}_\mathcal{Z}} : [x]_{\mathcal{G}_\mathcal{Z},-0.5} \in \partial J_\text{slope}(0;\lambda w_{\mathcal{G}_\mathcal{Z}})\right\} \\
    \implies &[\nabla f(\beta)]_{\mathcal{G}_\mathcal{Z},-0.5} \notin \partial J_\text{slope}(0; \lambda w_{\mathcal{G}_\mathcal{Z}}) \\
      \implies &\text{cumsum}(([\nabla f(\beta)]_{\mathcal{G}_\mathcal{Z},-0.5})_\downarrow -\lambda w_{\mathcal{G}_\mathcal{Z}}) \succ 0.
\end{align*}

\subsection{Path Start Proof}\label{appendix:gslope_path_derivation}
\begin{proof}[Proof of Proposition \ref{propn:gslope_path_start}]
The aim is to find the value of $\lambda$ at which the first group enters the model. When all features are zero, the gSLOPE KKT conditions (Equation \ref{eqn:kkt_condition}) are
\begin{equation*}
 \mathbf{0} \in \nabla f(\mathbf{0})+\lambda \partial J_\text{gslope}(\mathbf{0};w).
\end{equation*}
This is satisfied when
\begin{equation*}
    [\nabla f(\mathbf{0})]_{\mathcal{G},-0.5} \in \partial J_\text{slope}(\mathbf{0};\lambda w)  \implies \text{cumsum}\bigl(([\nabla f(\mathbf{0})]_{\mathcal{G},-0.5})_\downarrow - \lambda w\bigr) \preceq \mathbf{0}.
\end{equation*}
Rearranging this gives
\begin{equation*}
  \lambda\succeq \text{cumsum}\bigl(([\nabla f(\mathbf{0})]_{\mathcal{G},-0.5})_\downarrow\bigr) \oslash \text{cumsum}(w).
\end{equation*}
Picking the maximum possible $\lambda$ such that this holds yields
\begin{equation*}
    \lambda_1 = \max \left\{\text{cumsum}\bigl( ([\nabla f(\mathbf{0})]_{\mathcal{G},-0.5})_\downarrow\bigr)\oslash \text{cumsum}(w)\right\}.
\end{equation*}
This can be verified by noting that $\lambda_1 = J^*_\text{gslope}(\nabla f(\mathbf{0}); w)$ \citep{Ndiaye2016GapPenalties}. Now, $J^*_\text{gslope}(x;w) = J^*_\text{slope}([x]_{\mathcal{G},-0.5};w)$ \citep{Brzyski2019GroupPredictors}. The dual norm of SLOPE is given by \citep{Negrinho2014OrbitRegularization} 
\begin{equation*}
    J^*_\text{slope}(x;w) = \max \left\{\text{cumsum}(|x|_\downarrow) \oslash \text{cumsum}(w)\right\}.
\end{equation*}
Therefore, $\lambda_1$ is as before.
\end{proof}

\section{SPARSE-GROUP SLOPE}
\subsection{Penalty Weights}\label{appendix:sgs_pen_seq}
The penalty weights for SGS provide variable and group FDR-control simultaneously, under orthogonal designs \citep{Feser2023Sparse-groupFDR-control}. They are given by (where the indexing corresponds to the sorted variables/groups)
\begin{align*}
	&v_i^\text{max} = \max_{j=1,\dots,m} \left\{\frac{1}{\alpha} F_\mathcal{N}^{-1} \left(1-\frac{q_vi}{2p}\right) -   \frac{1}{3\alpha}(1-\alpha) a_j w_j\right\}, \; i=1,\dots,p, \\
	&w_i^\text{max} =\max_{j=1,\dots,m}\left\{\frac{F_\text{FN}^{-1}(1-\frac{q_gi}{m})-\alpha \sum_{k \in \mathcal{G}_j}v_k }{(1-\alpha) p_j}\right\}, \; i=1,\dots,m,
\end{align*}
where $F_{\chi_{p_j}}$ is the cumulative distribution function of a $\chi$ distribution with $p_j$ degrees of freedom, $F_\mathcal{N}$ is the cumulative distribution function of a folded Gaussian distribution, and $a_j$ is a quantity that requires estimation. The estimator $\hat{a}_j = \lfloor\alpha p_j\rfloor$ is proposed in \cite{Feser2023Sparse-groupFDR-control}. As with gSLOPE (Appendix \ref{appendix:gslope_pen_seq}), a relaxtion is possible, giving the weights
\begin{align}
	&v_i^\text{mean} = \overline{F}_\mathcal{N}^{-1}\left(1-\frac{q_vi}{2p}\right), \; \text{where}\; \overline{F}_\mathcal{N}(x) := \frac{1}{m}\sum_{j=1}^{m} F_\mathcal{N}\left(\alpha x +  \frac{1}{3}(1-\alpha) a_j w_j\right),\label{eqn:sgs_var_pen_mean} \\
	&w_i^\text{mean} = \overline{F}_\text{FN}^{-1}\left(1-\frac{q_gi}{p}\right), \; \text{where}\; \overline{F}_\text{FN}(x) := \frac{1}{m}\sum_{j=1}^{m} F_\text{FN}\left((1-\alpha) p_j x + \alpha \sum_{k \in \mathcal{G}_j} v_k\right).\label{eqn:sgs_grp_pen_mean}
\end{align}
In the manuscript, as recommended by \cite{Feser2023Sparse-groupFDR-control} under general settings, the SGS variable mean (Equation \ref{eqn:sgs_var_pen_mean}) and gSLOPE group mean (Equation \ref{eqn:gslope_pen_mean}) weights are used for all SGS numerical simulations.
\begin{figure}[H]
\vskip 0.2in
\begin{center}
\centerline{\includegraphics[width=0.8\columnwidth]{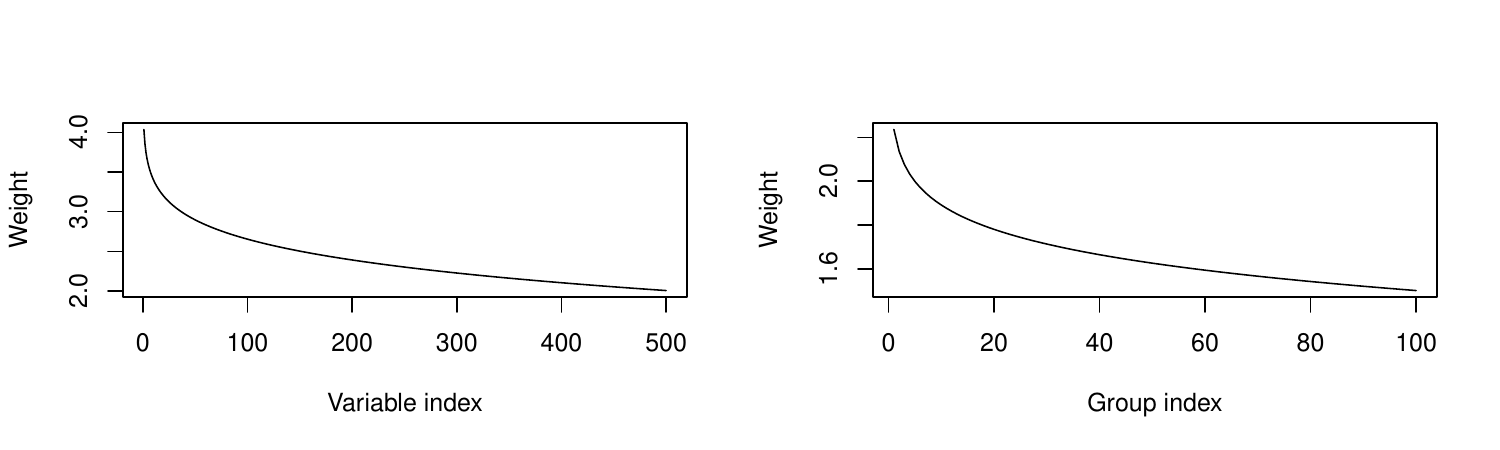}}
\caption{The SGS weights, $(v,w)$, shown for Figure \ref{fig:time_fcn_of_p} for $p=500, m=100, q_v=0.05, q_g = 0.05,\alpha=0.95$.}
\label{fig:appendix_sgs_weights}
\end{center}
\vskip -0.2in
\end{figure}
\subsection{Derivation of Soft Thresholding Operator}\label{appendix:sgs_sto}

\begin{proof}[Proof of Lemma \ref{lemma:sto}]
To determine the form of the quantity $\partial J_\text{slope}(\mathbf{0};v)$, consider that for Equation \ref{eqn:group_condition_sgs} to be satisfied, the term inside the $[\cdot]$ operator needs to be as small as possible. Now, 
\begin{equation*}
    \partial J_\text{slope}(\mathbf{0};v) = \{y: \text{cumsum}(|y|) \preceq \text{cumsum}(v)\}.
\end{equation*}
Note that $\text{cumsum}(y) \preceq \text{cumsum}(x) \iff y  \preceq x$. We consider the cases:
\begin{enumerate}
    \item $\nabla_i f(\beta) > \lambda\alpha v_i$: choose $y_i = -v_i$.
    \item $\nabla_i f(\beta) < -\lambda\alpha v_i$: choose $y_i = v_i$.
    \item $\nabla_i f(\beta) \in [-\lambda\alpha v_i,\lambda\alpha v_i]$: choose $y_i = \nabla_i f(\beta)/\lambda\alpha v_i$.
\end{enumerate}
Hence, the term becomes
\begin{equation*}
    S(\nabla f(\beta),\lambda\alpha v) := \text{sign}(\nabla f(\beta))(|\nabla f(\beta)| - \lambda\alpha v)_+,
\end{equation*}
which is the soft thresholding operator. 
\end{proof}
\subsection{Theory}\label{appendix:sgs_theory}
\begin{proposition}[Strong group screening rule for SGS]\label{propn:sgs_screen}
Let $\tilde{h}(\lambda) := ([S(\nabla f(\beta),\lambda\alpha v)]_{\mathcal{G},-0.5})_\downarrow$. Then taking $c=\tilde{h}(\lambda_{k+1})$ and $\phi = (1-\alpha)\lambda_{k+1} w$ as inputs for Algorithm \ref{alg:slope_screen_alg} returns a superset $\mathcal{S}_g(\lambda_{k+1})$ of the active set $\mathcal{A}_g(\lambda_{k+1})$.
\end{proposition}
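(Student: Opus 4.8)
The plan is to follow the same template as the proof of Proposition \ref{propn:gslope_seq_strong}, since Proposition \ref{propn:sgs_screen} is the exact analogue for SGS of the (non-approximated) gSLOPE strong rule, with the raw gradient $[\nabla f(\hat\beta(\lambda_{k+1}))]_{\mathcal{G},-0.5}$ replaced by the soft-thresholded quantity $\tilde h(\lambda_{k+1}) = ([S(\nabla f(\hat\beta(\lambda_{k+1})),\lambda_{k+1}\alpha v)]_{\mathcal{G},-0.5})_\downarrow$ and the penalty sequence $\lambda_{k+1}w$ replaced by $(1-\alpha)\lambda_{k+1}w$. First I would recall what Algorithm \ref{alg:slope_screen_alg} actually returns: on inputs $c$ and $\phi$ it produces a set $\mathcal{B}$ of indices such that $\text{cumsum}\bigl((|c|_\downarrow)_{\mathcal{B}} - \phi_{\mathcal{B}}\bigr) \prec \mathbf{0}$ (strictly), i.e. exactly the indices it can certify lie in the interior of the SLOPE dual-norm ball, and the screened set is the complement. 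So I must show that any index landing in $\mathcal{B}$ really corresponds to an inactive group.

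The key steps, in order: (1) Suppose $g \in \mathcal{B}$ after running the algorithm on $c = \tilde h(\lambda_{k+1})$, $\phi = (1-\alpha)\lambda_{k+1}w$; then $\text{cumsum}\bigl((\tilde h(\lambda_{k+1}))_{\mathcal{B}} - (1-\alpha)\lambda_{k+1}w_{\mathcal{B}}\bigr) \prec \mathbf{0}$. (2) Invoke the SGS KKT conditions for inactive groups, Equation \ref{eqn:group_condition_sgs}, which say a group is inactive provided $\text{cumsum}\bigl(([\nabla f(\beta) + \lambda\alpha\,\partial J_\text{slope}(\mathbf{0};v)]_{\mathcal{G}_\mathcal{Z},-0.5})_\downarrow - \lambda(1-\alpha)w_\mathcal{Z}\bigr) \preceq \mathbf{0}$ for \emph{some} choice of the subdifferential element; by Lemma \ref{lemma:sto}, the minimizing choice makes the bracketed term equal to $S(\nabla f(\beta),\lambda\alpha v)$, so it suffices to verify $\text{cumsum}\bigl(([S(\nabla f(\hat\beta(\lambda_{k+1})),\lambda_{k+1}\alpha v)]_{\mathcal{G},-0.5})_\downarrow - \lambda_{k+1}(1-\alpha)w\bigr) \preceq \mathbf{0}$ on the relevant coordinates, which is precisely what step (1) gives (with strict inequality, hence a fortiori the weak one). (3) Because the KKT condition is \emph{sufficient} for a group to be inactive — we know $-\nabla f(\hat\beta(\lambda_{k+1}))$ genuinely lies in $\lambda_{k+1}\partial J_\text{sgs}(\hat\beta(\lambda_{k+1}))$ by optimality, so the only question is whether $\beta^{(g)}=0$ is consistent, and the cumsum inequality certifies it via Theorem \ref{thm:gslope_subdiff} applied to the residual after soft-thresholding — every $g \in \mathcal{B}$ is inactive, so $\mathcal{A}_g(\lambda_{k+1}) \subseteq [m]\setminus\mathcal{B} = \mathcal{S}_g(\lambda_{k+1})$.

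The main subtlety — the step I expect to need the most care — is the logical direction around Lemma \ref{lemma:sto}. The lemma shows that soft-thresholding \emph{minimizes} $[\nabla f(\beta)+\lambda\alpha\,\partial J_\text{slope}(\mathbf{0};v)]_{\mathcal{G}}$ over admissible subgradient elements; what I need is that if the cumsum condition holds with this minimal choice then the group can legitimately be declared inactive. This is fine because the existential quantifier in the KKT characterization of inactivity (there exists a valid SLOPE subgradient making the gSLOPE-at-zero condition hold) is witnessed exactly when it holds for the coordinatewise-smallest such term, i.e. when it holds after soft-thresholding. So verifying the cumsum inequality post-soft-thresholding is necessary and sufficient, and step (1) delivers it. The rest is the same bookkeeping as in Proposition \ref{propn:gslope_seq_strong}: the sorting/ordering assumption built into Algorithm \ref{alg:slope_screen_alg} (inactive indices ordered last) lines the coordinates up correctly, and $\text{cumsum}(x) \preceq \text{cumsum}(y) \iff x \preceq y$ lets us pass between the cumsum form and the elementwise form as needed.
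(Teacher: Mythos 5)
Your proposal matches the paper's own argument: assume $\mathcal{B}\neq\emptyset$ after running Algorithm \ref{alg:slope_screen_alg} with $c=\tilde h(\lambda_{k+1})$ and $\phi=\lambda_{k+1}(1-\alpha)w$, read off the strict cumsum inequality for the soft-thresholded gradient, and conclude via Equation \ref{eqn:group_condition_sgs} (whose soft-thresholding form is exactly Lemma \ref{lemma:sto}) that every group in $\mathcal{B}$ is inactive, so $\mathcal{S}_g(\lambda_{k+1})\supseteq\mathcal{A}_g(\lambda_{k+1})$. This is essentially the same proof as in Appendix \ref{appendix:sgs_theory}, which likewise mirrors Proposition \ref{propn:gslope_seq_strong}; your extra remarks on the existential choice of the subgradient element only make explicit what the paper delegates to Lemma \ref{lemma:sto}.
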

\begin{proof}[Proof of Proposition \ref{propn:sgs_screen}]
The proof is similar to that of Proposition \ref{propn:gslope_seq_strong}. Suppose we have $\mathcal{B} \neq \emptyset$ after running the algorithm. Then,
    \begin{align*}
   \text{cumsum}(\tilde{h}_\mathcal{B}(\lambda_{k+1}) - \lambda_{k+1}(1-\alpha) w_\mathcal{B}) &\prec \mathbf{0} \\ \implies \text{cumsum}\Bigl(\bigl([ S(\nabla f(\beta),\lambda_{k+1}\alpha v)]_{\mathcal{G},-0.5})_\downarrow\bigr)_\mathcal{B} - \lambda_{k+1}(1-\alpha) w_\mathcal{B}\Bigr) &\prec \mathbf{0},
    \end{align*}
so that by the SGS subdifferential (Equation \ref{eqn:group_condition_sgs}) all groups in $\mathcal{B}$ are inactive. Hence, $\mathcal{S}_g(\lambda_{k+1})$ will contain the active set $\mathcal{A}_g(\lambda_{k+1})$.
\end{proof}
\begin{proof}[Proof of Proposition \ref{propn:sgs_screen_grad_approx}]
The proof is identical to that of Proposition \ref{propn:gslope_seq_strong_grad_approx}, replacing $h_g(\cdot)$ with $\tilde{h}_g(\cdot)$ and $\lambda_{k+1}w$ by $\lambda_{k+1}(1-\alpha)w$.
\end{proof}
\begin{proposition}[Strong variable screening rule for SGS]\label{propn:sgs_screen_var}
    Let $\bar{h}(\lambda) = |(\nabla f(\hat\beta(\lambda)))|_\downarrow$. Then taking $c = \bar{h}(\lambda_{k+1})$ and $\phi =  \lambda_{k+1}\alpha v$ for only the variables contained in the groups in $\mathcal{A}_g(\lambda_{k+1})$ in Algorithm \ref{alg:slope_screen_alg} returns a superset $\mathcal{S}_v(\lambda_{k+1})$ of the active set $\mathcal{A}_v(\lambda_{k+1})$.
\end{proposition}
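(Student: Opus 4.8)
The plan is to mirror the proof of Proposition~\ref{propn:sgs_screen}, but now working at the variable level inside the active groups rather than at the group level. The starting point is the KKT characterization in Equation~\ref{eqn:sgs_non_zero_grp}: for a group $g\in\mathcal{A}_g(\lambda_{k+1})$, any variable $j\in\mathcal{G}_g$ that is zero at the optimum must satisfy $\nabla_{\mathcal{G}_{\mathcal{A}_g}} f(\hat\beta(\lambda_{k+1})) \in \lambda_{k+1}\alpha\,\partial J_\text{slope}(\mathbf{0};v_{\mathcal{G}_{\mathcal{A}_g}})$, because the gSLOPE subdifferential contribution vanishes on active groups (the numerator $\beta^{(g)}$ in Theorem~\ref{thm:gslope_subdiff} is nonzero, so that term is a single point and plays no role in the zero-condition for individual coordinates). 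Thus the problem collapses exactly to a SLOPE zero-screening problem on the restricted index set $\mathcal{G}_{\mathcal{A}_g(\lambda_{k+1})}$ with penalty sequence $\alpha v$.

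First I would invoke the SLOPE subdifferential at zero (Appendix~\ref{appendix:slope_subdiff}): membership $-\nabla_{\mathcal{G}_{\mathcal{A}_g}} f(\hat\beta(\lambda_{k+1})) \in \lambda_{k+1}\alpha\,\partial J_\text{slope}(\mathbf{0};v_{\mathcal{G}_{\mathcal{A}_g}})$ is equivalent to $\text{cumsum}\bigl(|\nabla_{\mathcal{G}_{\mathcal{A}_g}} f(\hat\beta(\lambda_{k+1}))|_\downarrow - \lambda_{k+1}\alpha v\bigr) \preceq \mathbf{0}$, where the $v$ here is the appropriately restricted and re-sorted weight vector. Then I would run the standard argument: suppose Algorithm~\ref{alg:slope_screen_alg} returns a nonempty residual set $\mathcal{B}$ after processing $c=\bar h(\lambda_{k+1})$ and $\phi=\lambda_{k+1}\alpha v$; the algorithm's output guarantees the strict cumsum inequality $\text{cumsum}\bigl(\bar h_{\mathcal{B}}(\lambda_{k+1}) - \lambda_{k+1}\alpha v_{\mathcal{B}}\bigr) \prec \mathbf{0}$ on that block. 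By the SLOPE subdifferential characterization this certifies that every variable in $\mathcal{B}$ has $\hat\beta_j(\lambda_{k+1})=0$, i.e. $\mathcal{B}\cap\mathcal{A}_v(\lambda_{k+1})=\emptyset$, so the complement $\mathcal{S}_v(\lambda_{k+1})$ contains $\mathcal{A}_v(\lambda_{k+1})$. The validity of applying the KKT condition rests, as in Proposition~\ref{propn:gslope_seq_strong}, on the fact that $-\nabla f(\hat\beta(\lambda_{k+1}))$ genuinely lies in the full SGS subdifferential at the optimum (Equation~\ref{eqn:kkt_condition} together with Equation~\ref{eqn:sgs_kkt}), from which the restricted statement in Equation~\ref{eqn:sgs_non_zero_grp} is a projection onto the coordinates of active groups.

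The one subtlety I would flag — and which I expect to be the main point requiring care rather than a genuine obstacle — is the sorting/indexing bookkeeping: Algorithm~\ref{alg:slope_screen_alg} assumes inactive predictors are ordered last in both $c$ and $|\hat\beta|_\downarrow$, and here the relevant sorting is over only the variables in $\mathcal{G}_{\mathcal{A}_g(\lambda_{k+1})}$, with the SLOPE weight sequence $v$ truncated and matched to the sorted gradient magnitudes. I would state explicitly that $\bar h(\lambda) = |\nabla f(\hat\beta(\lambda))|_\downarrow$ restricted to those coordinates, and that the algorithm is fed the matching prefix of $\alpha v$, so that the cumsum condition read off from the algorithm coincides with the SLOPE zero-subdifferential condition on the restricted block. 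Apart from this alignment, the argument is a verbatim reduction to the SLOPE case already handled in \cite{Larsson2020a}, so no new analytic ingredient is needed; the proof is essentially a citation of the SLOPE subdifferential plus the observation that the gSLOPE term drops out on active groups.
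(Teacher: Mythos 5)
Your argument is essentially the paper's proof: assume the algorithm leaves a nonempty residual block $\mathcal{B}$, read off the strict cumsum inequality $\cumsum(\bar h_{\mathcal{B}}(\lambda_{k+1}) - \lambda_{k+1}\alpha v_{\mathcal{B}}) \prec \mathbf{0}$, and conclude via Equation \ref{eqn:sgs_non_zero_grp} (the SLOPE zero-subdifferential condition restricted to $\mathcal{G}_{\mathcal{A}_g(\lambda_{k+1})}$) that every variable in $\mathcal{B}$ is inactive, so $\mathcal{S}_v(\lambda_{k+1}) \supseteq \mathcal{A}_v(\lambda_{k+1})$. One small imprecision in your parenthetical: the gSLOPE contribution drops out not because the subdifferential on an active group is a singleton, but because at a coordinate $j$ with $\hat\beta_j = 0$ the $j$-th entry of that singleton, $w_g\sqrt{p_g}\,\hat\beta_j/\|\hat\beta^{(g)}\|_2$, equals zero — the paper's phrase "the numerator is zero" refers to this coordinate, not to $\beta^{(g)}$ as a whole.
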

\begin{proof}
      Suppose we have $\mathcal{B} \neq \emptyset$ after running the algorithm. Then, we have
 \begin{equation*}
\text{cumsum}( \bar{h}_\mathcal{B}(\lambda_{k+1}) -  \lambda_{k+1}\alpha v_\mathcal{B})\prec \mathbf{0} \implies 
\text{cumsum}\Bigl( \bigl(|\nabla f(\hat\beta(\lambda_{k+1}))|_\downarrow\bigr)_\mathcal{B} -  \lambda_{k+1}\alpha v_\mathcal{B}\Bigr)\prec \mathbf{0},
 \end{equation*}
so that by the SGS subdifferential for non-zero groups (Equation \ref{eqn:sgs_non_zero_grp}) all variables in $\mathcal{B}$ are inactive. Hence, $\mathcal{S}_v(\lambda_{k+1})$ will contain the active set $\mathcal{A}_v(\lambda_{k+1})$.
\end{proof}
\begin{proof}[Proof for Proposition \ref{propn:sgs_screen_var_grad_approx}]
The proof is identical to that of Proposition \ref{propn:gslope_seq_strong_grad_approx}, replacing $h_g(\cdot)$ with $\bar{h}_g(\cdot)$, $\lambda_{k+1} v$ with $\lambda_{k+1} \alpha v$, and considering only variables in the groups contained in $\mathcal{A}_g(\lambda_{k+1})$.
\end{proof}
\subsection{KKT Checks}\label{appendix:sgs_kkt}
For SGS, the KKT conditions are first checked at the group-level for inactive groups (Appendix \ref{appendix:grp_checks}). Further variable checks are performed for violating groups and variables in active groups (to check whether the variables should also be active) (Appendix \ref{appendix:var_checks}). The violating variables from these secondary variable checks are added back into $\mathcal{E}_v$.

\subsubsection{Group Checks}\label{appendix:grp_checks}
A group violation occurs if the KKT conditions do not hold at the group-level (Equation \ref{eqn:group_condition_sgs}). That is, a violation occurs if a group is discarded but
\begin{equation*}
\text{cumsum}\Bigl( \bigl([\nabla f(\beta) +\lambda\alpha \partial J_\text{slope}(\mathbf{0};v)]_{\mathcal{G}_\mathcal{Z},-0.5}\bigr)_\downarrow -\lambda(1-\alpha)w_\mathcal{Z}\Bigr) \succ \mathbf{0}.
\end{equation*}
\subsubsection{Variable Checks}\label{appendix:var_checks}
For the set of variables in a violating group (from Appendix \ref{appendix:grp_checks}), denoted $\mathcal{G}_{\mathcal{K}_g}$, a variable violation occurs if Equation \ref{eqn:sgs_non_zero_grp} does not hold. That is, if 
\begin{equation*}
 \nabla_{\mathcal{G}_{\mathcal{K}_g}} f(\beta) \notin \lambda \alpha\partial J_\text{slope}(\mathbf{0};v_{\mathcal{G}_{\mathcal{K}_g}})  \implies \text{cumsum}(|\nabla_{\mathcal{G}_{\mathcal{K}_g}} f(\beta)| -\lambda \alpha v_{\mathcal{G}_{\mathcal{K}_g}}) \succ 0.     
\end{equation*}
\subsubsection{Alternative KKT Checks} \label{appendix:alternative_checks}
An alternative approach for SGS is to check the KKT conditions directly on the variables. The KKT conditions (Equation \ref{eqn:sgs_kkt}) can be rewritten as
\begin{equation*}
     -\nabla f(\beta) -\lambda (1-\alpha) \partial J_\text{gslope}(\beta; w)  \in \lambda \alpha \partial J_\text{slope}(\beta; v). 
\end{equation*}
A KKT violation occurs the zero subdifferential conditions are not satisfied
\begin{align*}
        &-\nabla f(\beta) -\lambda (1-\alpha) \partial J_\text{gslope}(\beta; w)  \notin \lambda \alpha \partial J_\text{slope}(\mathbf{0}; v) \\
\implies&\text{cumsum}\left(\left|\nabla f(\beta) +\lambda (1-\alpha) \partial J_\text{gslope}(\beta; w)\right|_\downarrow - \lambda \alpha v\right) \succ \mathbf{0}.
\end{align*}
Now, the objective is to make the term inside the sorted absolute value operator as small as possible, given that the subdifferential term is unknown. To do this, a similar derivation as in Section \ref{appendix:sgs_sto} can be used to determine that the term must be the soft thresholding operator, so that a violation occurs if
\begin{equation*}
    \text{cumsum}\left(\left|S(\nabla f(\beta), \lambda(1-\alpha)\tau\omega) \right|_\downarrow - \lambda \alpha v\right) \succ \mathbf{0},
\end{equation*}
where $\tau$ and $\omega$ are expanded vectors of the group sizes ($\sqrt{p_g})$ and penalty weights ($w_g$) to $p$ dimensions, so that each variable within the same group is assigned the same value. However, as we have had to approximate the unknown subdifferential term, this check is not exact. In practice, we found that this check was not stringent enough (due to the approximation), leading to violations being missed.
\subsection{Path Start Proof}\label{appendix:sgs_path_derivation}
\begin{proof}[Proof of Proposition \ref{propn:sgs_path_start}]
The aim is to find the value of $\lambda$ at which the first variable enters the model. When all features are zero, the SGS KKT conditions (Equation \ref{eqn:sgs_kkt}) are
\begin{align*}
    &-\nabla f(\mathbf{0}) \in \lambda(1-\alpha)\partial J_\text{gslope}(\mathbf{0};w) + \lambda\alpha \partial J_\text{slope}(\mathbf{0};v) \\
     \implies&-\frac{1}{\lambda} \nabla f(\mathbf{0}) -(1-\alpha)\partial J_\text{gslope}(\mathbf{0};w) \in  \alpha \partial J_\text{slope}(\mathbf{0};v) \\
     \implies&\text{cumsum}\left( \left|-\frac{1}{\lambda} \nabla f(\mathbf{0}) -(1-\alpha)\partial J_\text{gslope}(\mathbf{0};w)\right|_\downarrow - \alpha v\right) \preceq  \mathbf{0}. 
\end{align*}
By the reverse triangle inequality and ordering of the group weights
\begin{align*}
      &\frac{1}{\lambda} \text{cumsum}\left(| \nabla f(\mathbf{0})|_\downarrow\right) \preceq  \text{cumsum}((1-\alpha) |\partial J_\text{gslope}(\mathbf{0};w)|- \alpha v) \\
     \implies&\lambda \succeq \text{cumsum}(| \nabla f(\mathbf{0})|_\downarrow) \oslash \text{cumsum}((1-\alpha) |\partial J_\text{gslope}(\mathbf{0};w)|- \alpha v).
\end{align*}
Now, note that for $x \in J_\text{gslope}(\mathbf{0};w)$, it holds
\begin{equation*}
     \text{cumsum}([x]_{\mathcal{G},-0.5} - w) \preceq \mathbf{0} \implies [x]_{\mathcal{G},-0.5} \preceq w \implies \|x^{(g)}\|_2 \leq \sqrt{p_g}w_g, \forall g \in \mathcal{G}.
\end{equation*}
This is satisfied at the upper limit at $x=\tau \omega$. Hence,
\begin{equation*}
    \lambda_1 = \max\left\{\text{cumsum}(| \nabla f(\mathbf{0})|_\downarrow)\oslash
    \text{cumsum}((1-\alpha)\tau\omega- \alpha v)\right\}.
\end{equation*}
\end{proof}
\section{SLOPE ALGORITHM} \label{appendix:slopealgorithm}
\begin{algorithm}[H]
   \caption{SLOPE subdifferential algorithm from \cite{Larsson2020a}}
\label{alg:slope_screen_alg}
\begin{algorithmic}
   \STATE {\bfseries Input:} $c \in \mathbb{R}^p, \phi \in \mathbb{R}^p$, where  $\phi_1 \geq \cdots \geq$
$\phi_p \geq 0$
\STATE $\mathcal{S}, \mathcal{B} \leftarrow \varnothing$
   \FOR{$i=1$ {\bfseries to} $p$}
   \STATE $\mathcal{B} \leftarrow \mathcal{B} \cup\{i\}$
   \IF{$\text{cumsum}(c_\mathcal{B} - \phi_\mathcal{B}) \geq 0$}
   \STATE $\mathcal{S} \leftarrow \mathcal{S} \cup \mathcal{B}$
   \STATE $\mathcal{B} \leftarrow \varnothing$
   \ENDIF
   \ENDFOR
\STATE {\bfseries Output: $\mathcal{S}$}
\end{algorithmic}
\end{algorithm}

\section{SCREEENING RULE FRAMEWORK}\label{appendix:framework}
\subsection{Group SLOPE Algorithm}\label{appendix:gslope_framework}
For the following is performed for $k = 1,\ldots,l-1$:
\begin{enumerate}
    \item  Set $\mathcal{E}_g = \mathcal{S}_g(\lambda_{k+1}) \cup \mathcal{A}_g(\lambda_{k})$, where $\mathcal{S}_g(\lambda_{k+1})$ is obtained using Proposition \ref{propn:gslope_seq_strong_grad_approx}.
    \item Compute $\hat\beta(\lambda_{k+1})$ by Equation \ref{eqn:general_problem} with the gSLOPE norm using only the groups in $\mathcal{E}_g.$ For any groups not in $\mathcal{E}_g$, $\hat\beta(\lambda_{k+1})$ is set to zero.
    \item Check the KKT conditions (Equation \ref{eqn:kkt_condition}) for all groups at this solution.
    \item If there are no violations, we are done and keep $\hat\beta(\lambda_{k+1})$. Otherwise, add the violating groups into $\mathcal{E}$ and return to Step 2.
\end{enumerate}
\subsection{SGS Algorithm}\label{appendix:sgs_framework}
For the following is performed for $k = 1,\ldots,l-1$:
\begin{enumerate}
    \item \textit{Group screen step}: Calculate $\mathcal{S}_g(\lambda_{k+1})$ using Proposition \ref{propn:sgs_screen_grad_approx}.
    \item \textit{Variable screen step}: Set $\mathcal{E}_v = \mathcal{S}_v(\lambda_{k+1}) \cup \mathcal{A}_v(\lambda_k)$, where $\mathcal{S}_v(\lambda_{k+1})$ is obtained using Proposition \ref{propn:sgs_screen_var_grad_approx} with only the groups in $\mathcal{S}_g(\lambda_{k+1})$.
    \item Compute $\hat\beta(\lambda_{k+1})$ by Equation \ref{eqn:general_problem} with the SGS norm using only the features in $\mathcal{E}_v$. For features not in $\mathcal{E}_v$, $\hat\beta(\lambda_{k+1})$ is set to zero. 
    \item Check the KKT conditions (Equation \ref{eqn:sgs_kkt}) for all features at this solution. 
    \item If there are no violations, we are done and keep $\hat\beta(\lambda_{k+1})$, otherwise add in the violating variables into $\mathcal{E}_g$ and return to Step 3.
\end{enumerate}

\newpage

\section{GROUP-BASED OSCAR}\label{appendix:oscar}
This section provides supplementary materials for extending the proposed screening rules to group-based OSCAR models.

\subsection{Penalty Sequence}\label{appendix:oscar_weights}
The gOSCAR and SGO weights are defined by (for a variable $i\in[p]$ and group $g\in[m]$) (Figure \ref{fig:appendix_oscar_weights})
\begin{equation}
    v_i = \sigma_1 + \sigma_2(p-i), \; w_g = \sigma_1 + \sigma_3(m-g), \; \sigma_3 = \sigma_1/m.
\end{equation}

\begin{figure}[!h]
\vskip 0.2in
\begin{center}
\centerline{\includegraphics[width=0.8\columnwidth]{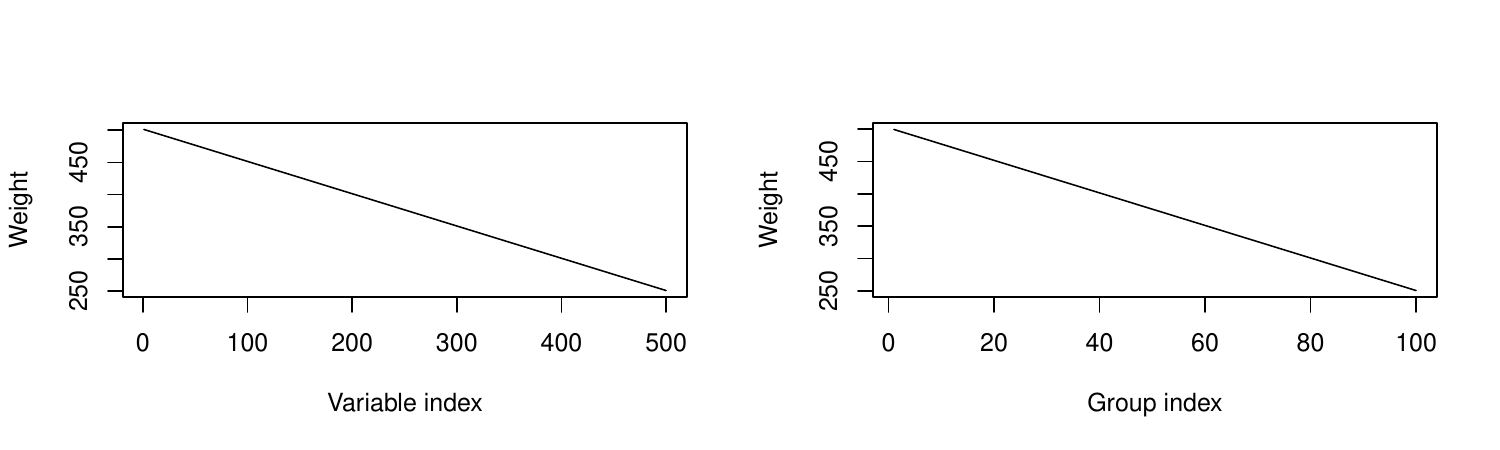}}
\caption{The SGO weights, $(v,w)$, for $p=500, m=100, q_v=0.05, q_g = 0.05,\alpha=0.95$.}
\label{fig:appendix_oscar_weights}
\end{center}
\vskip -0.2in
\end{figure}

\subsection{Results}
Observations and conclusions made for the screening rules of gSLOPE and SGS are also found to be true for gOSCAR and SGO (Figures \ref{fig:fig_1_oscar} - \ref{fig:fig_7_oscar}). 

Figure \ref{fig:fig_1_oscar} illustrates the effectiveness of bi-selection of SGO, similar to the effectiveness observed for SGS. Figures \ref{fig:fig_2_oscar} and \ref{fig:fig_3_oscar} showcase the efficiency of the screening rules on the proportion of the selected groups/variables. The screening rules are found to be effective across different data characteristics, with the running time of the models significantly decreasing (Figure \ref{fig:fig_4_oscar}). KKT violations for SGO are more common compared to gOSCAR (Figure \ref{fig:fig_7_oscar}), due to the additional assumptions made at the second screening layer (as with SGS). Similar to Figure \ref{fig:kkt_spikeplot_synth}, the shape of the increasing number of KKT violations mirrors the log-linear shape of the regularization path. 
\vspace{5mm}
\begin{figure}[!h]
\centering
\begin{minipage}[t]{.48\textwidth}
  \centering
  \includegraphics[width=\linewidth,valign=t]{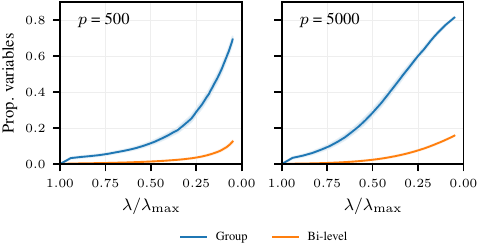}
  \captionof{figure}{The proportion of variables in $\mathcal{S}_v$ relative to the full input for SGO, shown for group and bi-level screening plotted as function of the regularization path, applied to the synthetic data (Section \ref{section:results_sim}). The data are generated under a linear model for $p = 500, 5000$. The results are averaged over 100 repetitions and 95\% confidence intervals are shown (the SGO equivalent of Figure \ref{fig:bi-level-screening}).}
    \label{fig:fig_1_oscar}
\end{minipage}%
\hfill
\begin{minipage}[t]{.48\textwidth}
 \centering
  \includegraphics[width=\linewidth,valign=t]{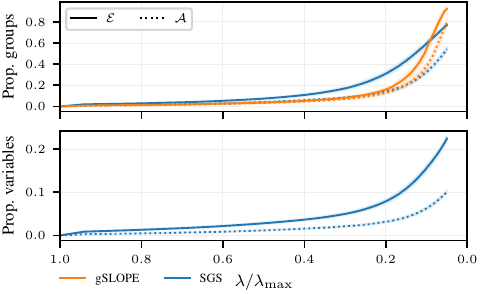}
  \captionof{figure}{The number of groups/variables in $\mathcal{E}, \mathcal{A}$ for both gOSCAR and SGO as a function of the regularization path for the linear model with $p=2750, \rho=0.6, m = 197$. The results are averaged over $100$ repetitions, with the shaded regions corresponding to $95\%$ confidence intervals (the gOSCAR/SGO equivalent of Figure \ref{fig:path_plot_gslope_sgs}).}
  \label{fig:fig_2_oscar}
\end{minipage}
\end{figure}

\begin{figure}[!h]
\centering
\begin{minipage}[t]{.48\textwidth}
  \centering
  \includegraphics[width=\linewidth,valign=t]{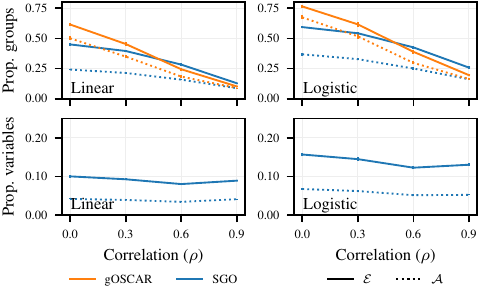}
  \captionof{figure}{The proportion of groups/variables in $\mathcal{E}, \mathcal{A}$, relative to the full input, shown for gOSCAR and SGO. This is shown as a function of the correlation ($\rho$), averaged over all cases of the input dimension ($p$), with $100$ repetitions for each $p$, for both linear and logistic models, with standard errors shown (the gOSCAR/SGO equivalent of Figure \ref{fig:gaussian_vs_log}).}
    \label{fig:fig_3_oscar}
\end{minipage}%
\hfill
\begin{minipage}[t]{.48\textwidth}
 \centering
  \includegraphics[width=\linewidth,valign=t]{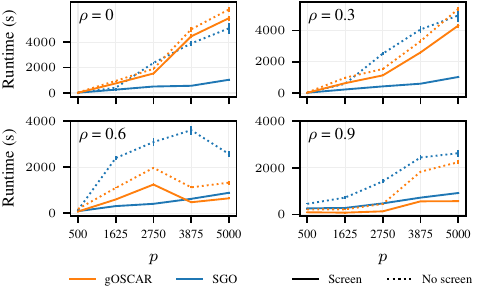}
  \captionof{figure}{Runtime (in seconds) for fitting $50$ models along a path, shown for screening against no screening as a function of $p$, broken down into different correlation cases, for the linear model. The results are averaged over $100$ repetitions, with standard errors shown (the gOSCAR/SGO equivalent of Figure \ref{fig:time_fcn_of_p}).}
  \label{fig:fig_4_oscar}
\end{minipage}
\end{figure}

\begin{figure}[!h]
\begin{center}
\centerline{\includegraphics[width=.6\columnwidth]{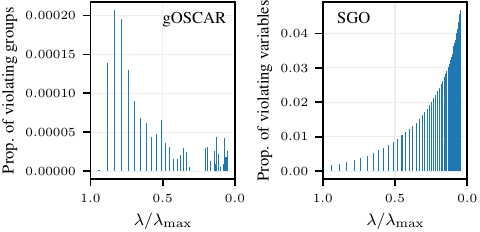}}
\caption{The proportion of KKT violations relative to the full input space, as a function of the regularization path. Group violations for gOSCAR and variable violations for SGS, under linear models, averaged over all cases of $p$ and $\rho$ (the gOSCAR/SGO equivalent of Figure \ref{fig:kkt_spikeplot_synth}).}
\label{fig:fig_7_oscar}
\end{center}
\end{figure}

\clearpage
\newpage 

\section{RESULTS}\label{appendix:results}
\subsection{Computational Information}\label{appendix:comp_info}
The simulated experiments were executed on a high-performance computing cluster (x86-64 Linux GNU) and the real data analysis was conducted on a Apple Macbook Air (M1, 8GB). Code for all simulations is available in the Supplementary Material. For all models, ATOS was used with the algorithmic parameters given in Table \ref{tbl:atos_params}.

\begin{table}[H]
\centering
  \caption{Hyperparameters used for running ATOS in the synthetic and real data studies.}
\label{tbl:atos_params}
  \begin{tabular}{lll}
    \toprule
    Parameter     & Synthetic Data     & Real Data \\
    \midrule
Max iterations & 5000 & 10000 \\
Backtracking & 0.7 & 0.7 \\
Max backtracking iterations & 100 & 100 \\
Convergence tolerance & $10^{-5}$ & $10^{-5}$ \\
Convergence criteria & $\|x - z\|_2$ & $\|x - z\|_2$ \\
Standardization & $\ell_2$ & $\ell_2$ \\
Intercept & Yes for linear & Yes for linear \\
Warm starts & Yes & Yes \\
    \bottomrule
  \end{tabular}
  \end{table}

\subsection{Solution Optimality}
\label{appendix:comp_real_info}

This section presents the accuracy of the models with and without screening, by comparing the $\ell_2$ distances observed between the screened and non-screened fitted values. 

\paragraph{Synthetic Data} For the linear model, the maximum $\ell_2$ distances observed between the screened and non-screened fitted values were of order $10^{-6}$ for gSLOPE and $10^{-9}$ for SGS (Table \ref{tbl:full_table_grp_linear}). Across the different cases, 98000 models were fit in total for each approach (excluding the models for $\lambda_1$, where no screening is applied). Of these model fits, there were no instances for gSLOPE where $\mathcal{E}$ was not a superset of $\mathcal{A}$. There was only one instance (out of the 98000) that this occurred for SGS, where $\mathcal{E}$ was missing a single variable contained in $\mathcal{A}$ (which had a non-screen fitted value of $\hat\beta = -0.004$).  
%{\color{green} For SGS, there was a single instance where $\mathcal{E}$ was missing a variable contained in $\mathcal{A}$, which was of value $\hat\beta = -0.004$.} 

For the logistic model, the maximum $\ell_2$ distances observed between the screened and non-screened fitted values were of order $10^{-8}$ for gSLOPE and $10^{-9}$ for SGS (Table \ref{tbl:full_table_grp_log}). Across the different cases, 98000 models were fit in total for each approach (excluding the models for $\lambda_1$, where no screening is applied). Of these model fits, there were no instances for gSLOPE or SGS where $\mathcal{E}$ was not a superset of $\mathcal{A}$.

\paragraph{Real Data} In the real data analysis, the estimated coefficients with and without screening were very close to each other for both SGS and gSLOPE (Table \ref{tbl:full_table_grp_real_data}). However, direct comparison is less meaningful here, as the models often failed to converge without screening, therefore not reaching the optimal solution.
%No instances occurred where $\mathcal{E}$ was not a superset of $\mathcal{A}$.

\subsection{Additional Results from the Simulation Study}\label{appendix:results_sim_study}

\begin{sidewaystable}
\begin{table}[H]
\caption{Variable screening metrics for SGS using linear and logistic models for the simulation study presented in Section \ref{section:results_sim}. The number of variables in $\mathcal{A}_v, \mathcal{S}_v, \mathcal{E}_v$, and $\mathcal{K}_v$ are shown, averaged across all $20$ cases of the correlation ($\rho$) and $p$. Standard errors are shown.}
\begin{center}
\begin{small}
\begin{sc}
\begin{tabular}{llrrrr}
\toprule
 Method & Type & $\card(\mathcal{A}_v)$ & $\card(\mathcal{S}_v)$ &  $\card(\mathcal{E}_v)$& $\card(\mathcal{K}_v)$\\
\midrule
  SGS & Linear& $179\pm3$& $313\pm5$  & $363\pm6$ & $51\pm1$\\
 SGS & Logistic & $230\pm3$& $405\pm5$ & $472\pm6$ & $66\pm1$\\
\bottomrule
\end{tabular}
\label{tbl:full_table_var_linear}
\end{sc}
\end{small}
\end{center}
\end{table}
\vspace{20pt}

\begin{table}[H]
\caption{General and group screening metrics for SGS and gSLOPE using linear and logistic models for the simulation study presented in Section \ref{section:results_sim}. General metrics: the runtime (in seconds) for screening against no screening, the number of fitting iterations for screening against no screening, and the $\ell_2$ distance between the fitted values obtained with screening and no screening. Group screening metrics: the number of groups in $\mathcal{A}_g, \mathcal{S}_g, \mathcal{E}_g$, and $\mathcal{K}_g$. The results are averaged across all $20$ cases of the correlation ($\rho$) and $p$. Standard errors are shown.}
\begin{center}
\begin{small}
\begin{sc}
\begin{tabular}{llrrrrrrrrr}
\toprule
Method &Type&\multicolumn{1}{p{1.5cm}}{\raggedleft Runtime \\ screen (s)} & \multicolumn{1}{p{1.5cm}}{\raggedleft Runtime \\ no screen (s)} &$\card(\mathcal{A}_g)$ & $\card(\mathcal{S}_g)$ &  $\card(\mathcal{E}_g)$& $\card(\mathcal{K}_g)$ &\multicolumn{1}{p{1.5cm}}{\raggedleft Num it \\ screen} & \multicolumn{1}{p{1.5cm}}{\raggedleft Num it \\ no screen } & \multicolumn{1}{p{1.5cm}}{\raggedleft $\ell_2$ dist \\ to no screen} \\\\
\midrule
gSLOPE & Linear&$ 1016\pm21$&$ 1623\pm27$&$ 55\pm1  $&$ 76\pm1 $&$ 76\pm1 $&$ 0.006\pm0.004 $&$ 333\pm6$&$351\pm6 $&$2\times10^{-6}\pm1\times 10^{-6}$\\
gSLOPE & Logistic&$ 814\pm8$&$ 1409\pm11$&$ 71\pm1 $&$ 97\pm1 $&$ 97\pm1 $&$ 0.014\pm0.014 $&$ 78\pm1$&$83\pm1 $&$1\times10^{-8}\pm1 \times 10^{-8}$\\
SGS & Linear&$ 735\pm15$&$ 1830\pm34$&$ 61\pm1 $&$ 84\pm1 $&$ 91\pm1 $&-&$ 91\pm3 $&$708\pm12$&$2\times10^{-9}\pm3\times 10^{-9}$\\
SGS & Logistic&$ 407\pm2$&$ 859\pm6$&$ 84\pm1 $&$ 107\pm1 $&$ 118\pm1 $&-&$ 7\pm0.2 $&$51\pm0.8$&$4\times10^{-9}\pm3 \times 10^{-10}$\\
\bottomrule
\end{tabular}
\label{tbl:full_table_grp_linear}
\end{sc}
\end{small}
\end{center}
\vskip -0.1in
\end{table}
\end{sidewaystable}

\newpage

\subsubsection{Additional Results for the Linear Model}\label{appendix:results_linear}
\vspace{5mm}
\begin{figure}[H]
\centering
  \includegraphics[width=.6\linewidth]{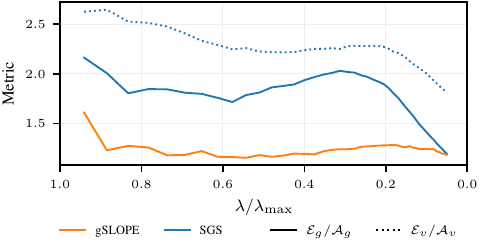}
  \caption{The proportion of groups/variables in $\mathcal{E}, \mathcal{A}$, relative to the full input, for gSLOPE and SGS, as a function of the regularization path for the linear model with $p=2750, \rho=0.6, m = 197$. The results are averaged over $100$ repetitions.}
   \label{fig:path_plot_gslope_sgs_metric_2}
\end{figure}
\vspace{5mm}
\begin{figure}[H]
  \begin{subfigure}[t]{.49\textwidth}
    \centering
    \includegraphics[width=\linewidth]{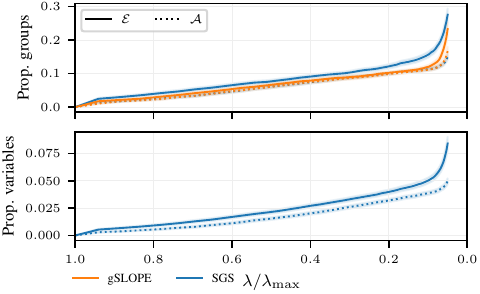}
    \caption{$p=500, \rho = 0$}
  \end{subfigure}
  \hfill
  \begin{subfigure}[t]{.49\textwidth}
    \centering
    \includegraphics[width=\linewidth]{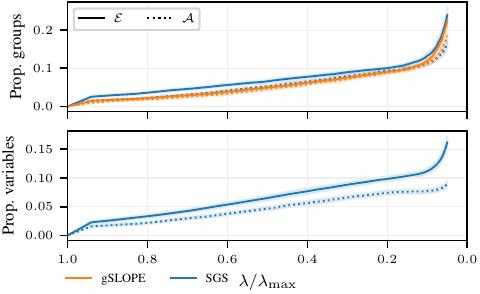}
    \caption{$p=500, \rho = 0.9$}
\vspace*{2mm}
  \end{subfigure}

  \begin{subfigure}[t]{.49\textwidth}
    \centering
    \includegraphics[width=\linewidth]{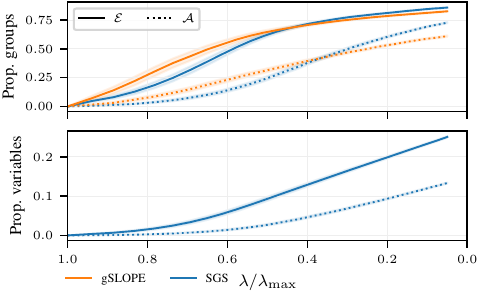}
    \caption{$p=5000, \rho = 0$}
  \end{subfigure}
  \hfill
  \begin{subfigure}[t]{.49\textwidth}
    \centering
    \includegraphics[width=\linewidth]{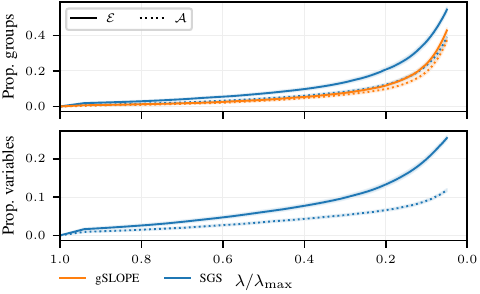}
    \caption{$p=5000, \rho = 0.9$}
  \end{subfigure}
  \caption{The number of groups/variables in $\mathcal{E}, \mathcal{A}$ as a function of the regularization path for the linear model with SGS and gSLOPE, shown for different values of the correlation ($\rho$) and $p$. The results are averaged over $100$ repetitions, with $95\%$ confidence intervals shown. %This is shown for the logistic model in Figure \ref{fig:appendix_log}. %The logistic model case corresponding to Figure \ref{fig:path_plot_gslope_sgs} is shown in Figure \ref{fig:appendix_log_org} in Appendix \ref{appendix:log_model_plot}.
  }
  \label{fig:appendix_linear}
\end{figure}

\newpage

\begin{sidewaystable}
\begin{table}[H]
\caption{Variable screening metrics for SGS using a linear model for the simulation study presented in Section \ref{section:results_sim}. The number of variables in $\mathcal{A}_v, \mathcal{S}_v, \mathcal{E}_v$, and $\mathcal{K}_v$ are shown. The results are shown for different values of $p$, averaged across $\rho \in \{0,0.3,0.6,0.9\}$. Standard errors are shown.}
\vskip 0.15in
\begin{center}
\begin{small}
\begin{sc}
\begin{tabular}{lrrrrr}
\toprule
 Method & $p$ &$\card(\mathcal{A}_v)$ & $\card(\mathcal{S}_v)$ &  $\card(\mathcal{E}_v)$& $\card(\mathcal{K}_v)$\\
\midrule
SGS&$ 500 $&$ 19\pm1 $&$ 24\pm1 $&$ 28\pm1 $&$ 4\pm0.2$  \\
  SGS &$ 1625 $&$ 83\pm3 $&$ 138\pm5 $&$ 161\pm6 $&$ 22\pm1$ \\  
  SGS &$ 2750 $&$ 188\pm7 $&$ 316\pm10 $&$ 370\pm12 $&$ 54\pm2 $\\ 
  SGS &$ 3875 $&$ 268\pm9 $&$ 470\pm14 $&$ 548\pm16 $&$ 78\pm3$  \\
  SGS &$ 5000 $&$ 334\pm10 $&$ 618\pm17 $&$ 712\pm20 $&$ 95\pm3 $ \\
\bottomrule
\end{tabular}
\label{tbl:full_table_var_linear_p}
\end{sc}
\end{small}
\end{center}
\vskip -0.1in
\end{table}

\vspace{20pt}

\begin{table}[H]
\caption{General and group screening metrics for SGS and gSLOPE using linear models for the simulation study presented in Section \ref{section:results_sim}. General metrics: the runtime (in seconds) for screening against no screening, the number of fitting iterations for screening against no screening, and the $\ell_2$ distance between the fitted values obtained with screening and no screening. Group screening metrics: the number of groups in $\mathcal{A}_g, \mathcal{S}_g, \mathcal{E}_g$, and $\mathcal{K}_g$. The results are shown for different values of $p$, averaged across $\rho \in \{0,0.3,0.6,0.9\}$. Standard errors are shown.}
\vskip 0.15in
\begin{center}
\begin{small}
\begin{sc}
\begin{tabular}{lrrrrrrrrrr}
\toprule
Method &$p$&\multicolumn{1}{p{1.5cm}}{\raggedleft Runtime \\ screen (s)} & \multicolumn{1}{p{1.5cm}}{\raggedleft Runtime \\ no screen (s)} &$\card(\mathcal{A}_g)$ & $\card(\mathcal{S}_g)$ &  $\card(\mathcal{E}_g)$& $\card(\mathcal{K}_g)$ &\multicolumn{1}{p{1.5cm}}{\raggedleft Num it \\ screen} & \multicolumn{1}{p{1.5cm}}{\raggedleft Num it \\ no screen} & \multicolumn{1}{p{1.5cm}}{\raggedleft $\ell_2$ dist \\ to no screen} \\
\midrule
gSLOPE &$ 500 $&$ 89\pm1 $&$ 144\pm1 $&$ 9\pm0.2 $&$ 10\pm0.3 $&$ 10\pm0.3 $&$ 0.005\pm0.005 $&$ 47\pm1 $&$ 55\pm1 $&$ 6\times 10^{-6}\pm7\times 10^{-6}$\\
  gSLOPE &$ 1625 $&$ 231\pm5 $&$ 453\pm5 $&$ 26\pm1 $&$ 36\pm1 $&$ 36\pm1 $&$ 0.005\pm0.006 $&$ 203\pm6 $&$ 222\pm6 $&$ 1\times 10^{-6}\pm1\times 10^{-6}$  \\
  gSLOPE &$ 2750 $&$ 1061\pm15 $&$ 2296\pm25 $&$ 56\pm2 $&$ 75\pm2 $&$ 75\pm2 $&$ 0.004\pm0.005 $&$ 270\pm9 $&$ 350\pm9 $&$ 5\times 10^{-7}\pm7\times 10^{-7}$  \\
  gSLOPE &$ 3875 $&$ 1765\pm83 $&$ 2800\pm113 $&$ 82\pm3 $&$ 114\pm3 $&$ 114\pm3 $&$ 0.006\pm0.008 $&$ 549\pm19 $&$ 546\pm18 $&$ 3\times 10^{-7}\pm5\times 10^{-7}$  \\
  gSLOPE &$ 5000 $&$ 1937\pm63 $&$ 2422\pm66 $&$ 102\pm3 $&$ 147\pm4 $&$ 147\pm4 $&$ 0.007\pm0.012 $&$ 594\pm21 $&$ 581\pm19 $&$ 2\times 10^{-7}\pm3\times 10^{-7} $ \\
  SGS &$ 500 $&$ 94\pm1 $&$ 133\pm2 $&$ 9\pm0.2 $&$ 11\pm0.3 $&$ 12\pm0.3 $&-&$ 28\pm1 $&$ 74\pm3 $& $5\times 10^{-10}\pm4\times 10^{-10}$  \\
  SGS &$ 1625 $&$ 416\pm8 $&$ 1129\pm19 $&$ 25\pm1 $&$ 37\pm1 $&$ 41\pm1 $&-&$ 62\pm7 $&$ 511\pm21 $& $2\times 10^{-9}\pm2\times 10^{-9}$  \\
  SGS &$ 2750 $&$ 639\pm14 $&$ 2137\pm47 $&$ 62\pm2 $&$ 82\pm2 $&$ 89\pm3 $&-&$ 80\pm6 $&$ 791\pm28 $& $2\times 10^{-9}\pm6\times 10^{-9}$  \\
  SGS &$ 3875 $&$ 939\pm31 $&$ 2862\pm96 $&$ 93\pm3 $&$ 124\pm3 $&$ 136\pm4 $&-&$ 112\pm8 $&$ 1049\pm34 $& $5\times 10^{-9}\pm1\times 10^{-8}$  \\
  SGS &$ 5000 $&$ 1586\pm66 $&$ 2891\pm128 $&$ 119\pm4 $&$ 164\pm3 $&$ 180\pm4 $&-&$ 171\pm11 $&$ 1118\pm37 $& $2\times 10^{-9}\pm4\times 10^{-9}$ \\

\bottomrule
\end{tabular}
\label{tbl:full_table_grp_linear_p}
\end{sc}
\end{small}
\end{center}
\vskip -0.1in
\end{table}
\end{sidewaystable}

\clearpage 
\newpage

\subsubsection{Additional Results for the Logistic Model}\label{appendix:log_model_plot}
This section presents additional results for the logistic model. Similar trends to the ones observed for the linear model are seen. 

\begin{figure}[H]
\vskip 0.2in
\begin{center}
\centerline{\includegraphics[width=0.7\columnwidth]{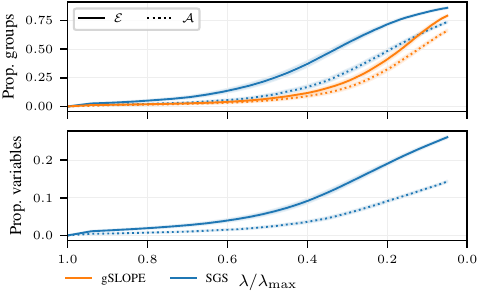}}
\caption{The number of groups/variables in $\mathcal{E}, \mathcal{A}$ as a function of the regularization path for the logistic model with $p=2750, \rho=0.6, m = 197$, shown for gSLOPE and SGS. The results are averaged over $100$ repetitions, with $95\%$ confidence intervals shown. This figure is the equivalent of Figure \ref{fig:path_plot_gslope_sgs} for the logistic model.}
\label{fig:appendix_log_org}
\end{center}
\vskip +0.2in
\end{figure}

\begin{figure}[H]
  \begin{subfigure}[t]{.49\textwidth}
    \centering
    \includegraphics[width=\linewidth]{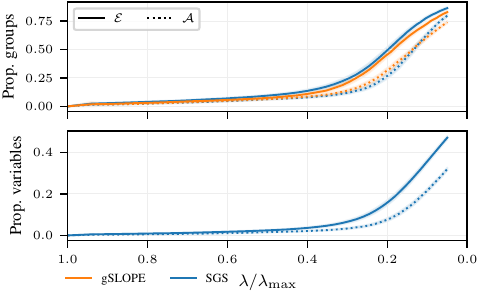}
    \caption{$p=500, \rho = 0$}
  \end{subfigure}
  \hfill
  \begin{subfigure}[t]{.49\textwidth}
    \centering
    \includegraphics[width=\linewidth]{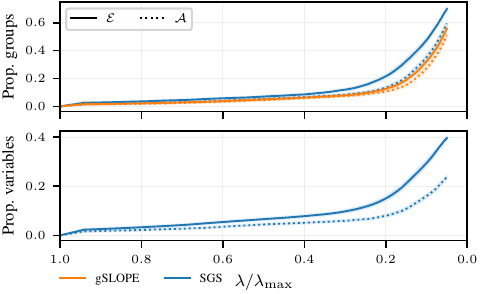}
    \caption{$p=500, \rho = 0.9$}
  \vspace*{2mm}
  \end{subfigure}
  \begin{subfigure}[t]{.49\textwidth}
    \centering
    \includegraphics[width=\linewidth]{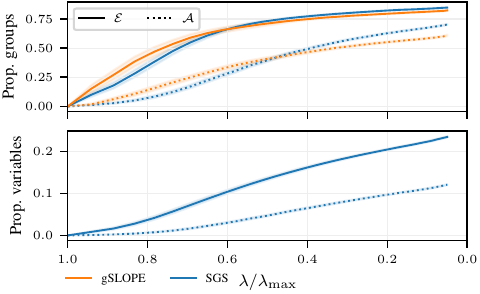}
    \caption{$p=5000, \rho = 0$}
  \end{subfigure}
  \hfill
  \begin{subfigure}[t]{.49\textwidth}
    \centering
    \includegraphics[width=\linewidth]{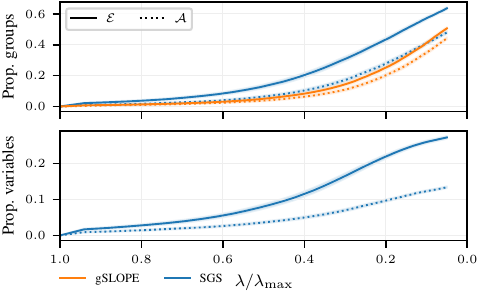}
    \caption{$p=5000, \rho = 0.9$}
  \end{subfigure}
  \caption{The number of groups/variables in $\mathcal{E}, \mathcal{A}$ as a function of the regularization path for the logistic model with SGS and gSLOPE, shown for different values of the correlation ($\rho$) and $p$. The results are averaged over $100$ repetitions, with $95\%$ confidence intervals shown. This figure is the equivalent of Figure \ref{fig:appendix_linear} for the logistic model.}
  \label{fig:appendix_log}
\end{figure}
\begin{figure}[H]
\vskip 0.2in
\begin{center}
\centerline{\includegraphics[width=.7\columnwidth]{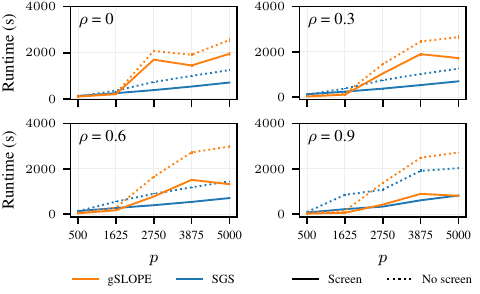}}
\caption{Runtime (in seconds) for screening against no screening as a function of $p$, broken down into different correlation cases, for the logistic model. The results are averaged over $100$ repetitions, with standard errors shown. This figure is the equivalent of Figure \ref{fig:time_fcn_of_p} for the logistic model.}
\label{fig:time_fcn_of_p_log}
\end{center}
\vskip -0.2in
\end{figure}

\begin{sidewaystable}
\begin{table}[H]
\caption{Variable screening metrics for SGS using a logistic model for the simulation study presented in Section \ref{section:results_sim}. The number of variables in $\mathcal{A}_v, \mathcal{S}_v, \mathcal{E}_v$, and $\mathcal{K}_v$ are shown. The results are shown for different values of $p$, averaged across $\rho \in \{0,0.3,0.6,0.9\}$. Standard errors are shown.}
\vskip 0.15in
\begin{center}
\begin{small}
\begin{sc}
\begin{tabular}{lrrrrr}
\toprule
 Method & $p$ & $\card(\mathcal{A}_v)$ & $\card(\mathcal{S}_v)$ &  $\card(\mathcal{E}_v)$& $\card(\mathcal{K}_v)$\\
\midrule
SGS &$ 500 $&$ 53\pm2 $&$ 71\pm3 $&$ 89\pm4 $&$ 19\pm1$  \\
  SGS &$ 1625 $&$ 157\pm5 $&$ 248\pm8 $&$ 291\pm9 $&$ 44\pm1$ \\
  SGS &$ 2750 $&$ 247\pm7 $&$ 420\pm11 $&$ 491\pm13 $&$ 71\pm2$  \\
  SGS &$ 3875 $&$ 316\pm9 $&$ 571\pm14 $&$ 663\pm16 $&$ 92\pm3$  \\
  SGS &$ 5000 $&$ 375\pm10 $&$ 717\pm16 $&$ 824\pm19 $&$ 107\pm3$  \\
\bottomrule
\end{tabular}
\label{tbl:full_table_var_log}
\end{sc}
\end{small}
\end{center}
\vskip -0.1in
\end{table}
\vspace{20pt}
\begin{table}[H]
\caption{General and group screening metrics for SGS and gSLOPE using logistic models for the simulation study presented in Section \ref{section:results_sim}. General metrics: the runtime (in seconds) for screening against no screening, the number of fitting iterations for screening against no screening, and the $\ell_2$ distance between the fitted values obtained with screening and no screening. Group screening metrics: the number of groups in $\mathcal{A}_g, \mathcal{S}_g, \mathcal{E}_g$, and $\mathcal{K}_g$. The results are shown for different values of $p$, averaged across $\rho \in \{0,0.3,0.6,0.9\}$. Standard errors are shown.}
\vskip 0.15in
\begin{center}
\begin{small}
\begin{sc}
\begin{tabular}{lrrrrrrrrrr}
\toprule
Method &$p$&\multicolumn{1}{p{1.5cm}}{\raggedleft Runtime \\ screen (s)}&\multicolumn{1}{p{1.5cm}}{\raggedleft Runtime \\ no screen (s)}&$\card(\mathcal{A}_g)$ & $\card(\mathcal{S}_g)$ &  $\card(\mathcal{E}_g)$& $\card(\mathcal{K}_g)$ &\multicolumn{1}{p{1.5cm}}{\raggedleft Num it \\ screen} & \multicolumn{1}{p{1.5cm}}{\raggedleft Num it \\ no screen} & \multicolumn{1}{p{1.5cm}}{\raggedleft $\ell_2$ dist \\ to no screen} \\
\midrule
gSLOPE &$ 500 $&$ 49\pm1 $&$ 76\pm1 $&$ 26\pm1 $&$ 31\pm1 $&$ 31\pm1 $&$ 0.003\pm0.004 $&$ 31\pm1 $&$ 40\pm1 $&$ 4\times 10^{-8}\pm5\times 10^{-8}$  \\
  gSLOPE &$ 1625 $&$ 138\pm3 $&$ 203\pm3 $&$ 43\pm1 $&$ 54\pm2 $&$ 54\pm2 $&$ 0.004\pm0.006 $&$ 78\pm2 $&$ 79\pm1 $&$ 1\times 10^{-8}\pm5\times 10^{-9} $\\ 
  gSLOPE &$ 2750 $&$ 987\pm11 $&$ 1641\pm16 $&$ 72\pm2 $&$ 95\pm3 $&$ 95\pm3 $&$ 0.008\pm0.013 $&$ 87\pm2 $&$ 89\pm1 $&$ 1\times 10^{-8}\pm5\times 10^{-9}$ \\ 
  gSLOPE &$ 3875 $&$ 1441\pm26 $&$ 2398\pm31 $&$ 98\pm3 $&$ 135\pm3 $&$ 135\pm3 $&$ 0.031\pm0.054 $&$ 95\pm2 $&$ 98\pm1 $&$ 7\times 10^{-9}\pm3\times 10^{-9}$  \\
  gSLOPE &$ 5000 $&$ 1454\pm29 $&$ 2727\pm40 $&$ 118\pm3 $&$ 168\pm4 $&$ 168\pm4 $&$ 0.022\pm0.041 $&$ 101\pm2 $&$ 109\pm1 $&$ 4\times 10^{-9}\pm1\times 10^{-9}$  \\
SGS &$ 500 $&$ 118\pm1 $&$ 113\pm1 $&$ 28\pm1 $&$ 33\pm1 $&$ 38\pm1 $&-&$ 6\pm0.3 $&$ 29\pm1 $&$ 8\times 10^{-9}\pm7\times 10^{-10} $ \\
  SGS &$ 1625 $&$ 248\pm2 $&$ 538\pm9 $&$ 50\pm2 $&$ 59\pm2 $&$ 64\pm2 $&-&$ 7\pm1 $&$ 63\pm3 $&$ 5\times 10^{-9}\pm1\times 10^{-9}$  \\
  SGS &$ 2750 $&$ 374\pm2 $&$ 868\pm12 $&$ 85\pm3 $&$ 104\pm2 $&$ 115\pm3 $&-&$ 7\pm0.4 $&$ 57\pm2 $&$ 3\times 10^{-9}\pm5\times 10^{-10} $ \\
  SGS &$ 3875 $&$ 558\pm4 $&$ 1280\pm19 $&$ 116\pm3 $&$ 148\pm3 $&$ 164\pm3 $&-&$ 8\pm0.4 $&$ 54\pm1 $&$ 2\times 10^{-9}\pm2\times 10^{-10}$  \\
  SGS &$ 5000 $&$ 737\pm5 $&$ 1498\pm19 $&$ 141\pm4 $&$ 188\pm3 $&$ 209\pm4 $&-&$ 8\pm0.4 $&$ 54\pm1 $&$ 1\times 10^{-9}\pm2\times 10^{-10}$  \\
\bottomrule
\end{tabular}
\label{tbl:full_table_grp_log}
\end{sc}
\end{small}
\end{center}
\vskip -0.1in
\end{table}
\end{sidewaystable}

\clearpage
\newpage
\subsection{Data Description}\label{appendix:real_data_description}
\begin{itemize}
    \item carbotax: Carbotax study of ovarian tumor growth.
    \begin{itemize}
        \item Response (continuous): Relative tumor volume ($\log_2$ scale).
        \item Data matrix: Gene expression measurements. $10000$ factors were randomly sampled from a collection of $34964$.
        \item Grouping structure: Variables are grouped using k-means clustering \citep{1056489}.
    \end{itemize}
    \item scheetz: Gene expression data in the mammalian eye. 
    \begin{itemize}
        \item Response (continuous): Gene expression measurements for the Trim32 gene.
        \item Data matrix: Gene expression measurements for other genes.
        \item Grouping structure: Variables are grouped using k-means clustering \citep{1056489}.
    \end{itemize}
  \item adenoma: Transcriptome profile data to identify formation of colorectal adenomas. 
    \begin{itemize}
        \item Response (binary): Binary labels for whether sample came from adenoma or normal mucosa.
        \item Data matrix: Transcriptome profile measurements.
        \item Grouping structure: Genes are assigned to pathways (groups) using the C3 regulatory target gene sets.\footnote{\label{note1}\url{gsea-msigdb.org/gsea/msigdb/human/collections.jsp}. Accessed 08/2024.}
    \end{itemize}
    \item cancer: Breast cancer patients treated with tamoxifen for 5 years.
    \begin{itemize}
        \item Response (binary): Binary labels classifying whether the cancer had recurred.
        \item Data matrix: Gene expression data.
        \item Grouping structure: Genes are assigned to pathways (groups) using the C3 regulatory target gene sets.\footnotemark[1]
    \end{itemize}
    \item celiac: Gene expression data of primary leucocytes to classify celiac disease.
    \begin{itemize}
        \item Response (binary): Binary labels as to whether a patient has celiac disease.
        \item Data matrix: Gene expression measurements from the primary leucocytes.
        \item Grouping structure: Genes are assigned to pathways (groups) using the C3 regulatory target gene sets.\footnotemark[1]
    \end{itemize}
    \item colitis: Blood cells data for classifying whether a patient has colitis.
    \begin{itemize}
        \item Response (binary): Binary labels classifying whether a patient has colitis.
        \item Data matrix: Gene expression measurements.
        \item Grouping structure: Genes are assigned to pathways (groups) using the C3 regulatory target gene sets.\footnotemark[1]
    \end{itemize}
\item tumour: Gene expression data of pancreative cancer samples to identify tumorous tissue. 
    \begin{itemize}
        \item Response (binary): Binary labels indicating if a sample is from tumour tissue.
        \item Data matrix: Gene expression measurements.
        \item Grouping structure: Genes are assigned to pathways (groups) using the C3 regulatory target gene sets.\footnotemark[1]
    \end{itemize}
\end{itemize}
\begin{table}[H]
  \caption{Dataset information for the six datasets used in the real data analysis.}
  \label{tbl:appendix_real_dataset_info}
  \centering
 \resizebox{\textwidth}{!}{ \begin{tabular}{llllllll}
    \toprule
     Dataset & $p$ & $n$ & $m$ & Group sizes & Type & Source \\
    \midrule
carbotax& $10000$ & $101$ & $100$&$[1,126]$& Linear&\cite{carbotax}\tablefootnote{downloaded from \url{https://iowabiostat.github.io/data-sets/}\label{reference footnote}. Accessed 08/2024.} \\
scheetz&  $18975$ & $120$ & $379$&$[1,165]$& Linear & \cite{scheetz2006RegulationDisease}\textsuperscript{\getrefnumber{reference footnote}}  \\
adenoma& $17661$ & $64$ & $1849$&$[1,646]$& Logistic&\cite{SabatesBellver2007TranscriptomeAdenomas}\tablefootnote{downloaded from \url{https://www.ncbi.nlm.nih.gov/}\label{genetics footnote}. Accessed 09/2024.} \\
cancer&$7057$ & $60$ & $1277$&$[1,292]$& Logistic&\cite{Ma2004ATamoxifen}\textsuperscript{\getrefnumber{genetics footnote}} \\
celiac&$14294$ & $132$ & $1666$ & $[1,570]$ & Logistic&\cite{Heap2009ComplexLeucocytes}\textsuperscript{\getrefnumber{genetics footnote}}\\
colitis&$11999$ & $127$ & $1528$ & $[1,497]$ & Logistic&\cite{Burczynski2006MolecularCells}\textsuperscript{\getrefnumber{genetics footnote}}\\
tumour&$17661$ & $52$ & $1849$ & $[1,646]$ & Logistic&\cite{Pei2009FKBP51Akt,Ellsworth2013ContributionAdenocarcinoma,Li2016GeneticCancer}\textsuperscript{\getrefnumber{genetics footnote}}\\
    \bottomrule
  \end{tabular}
  }
\end{table}

\subsection{Additional Results from the Real Data Analysis}
\begin{figure}[H]
 \centering
  \includegraphics[width=.6\linewidth,valign=t]{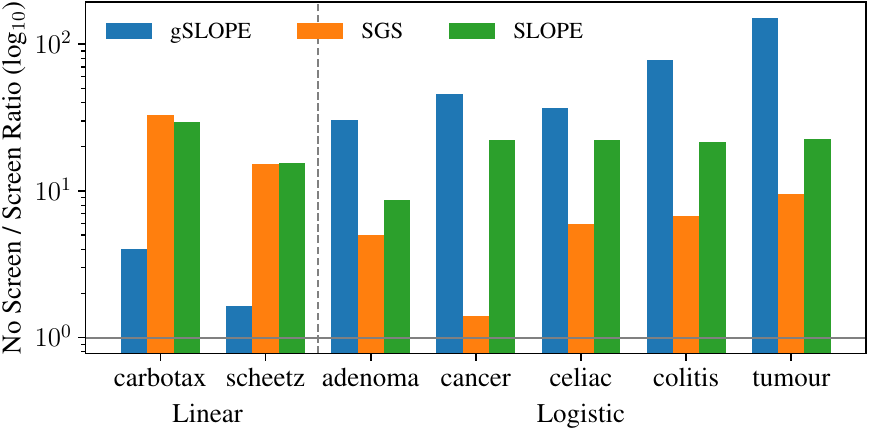}
  \caption{The ratio of no screen time to screen time ($\uparrow$) of SLOPE, gSLOPE, and SGS applied to the real datasets, for fitting $100$ path models, split into response type. The horizontal grey line represents no screening improvement.}
  \label{fig:real_bar_chart_slope}

\end{figure}

\begin{figure}[H]
 \centering
  \includegraphics[width=.6\linewidth,valign=t]{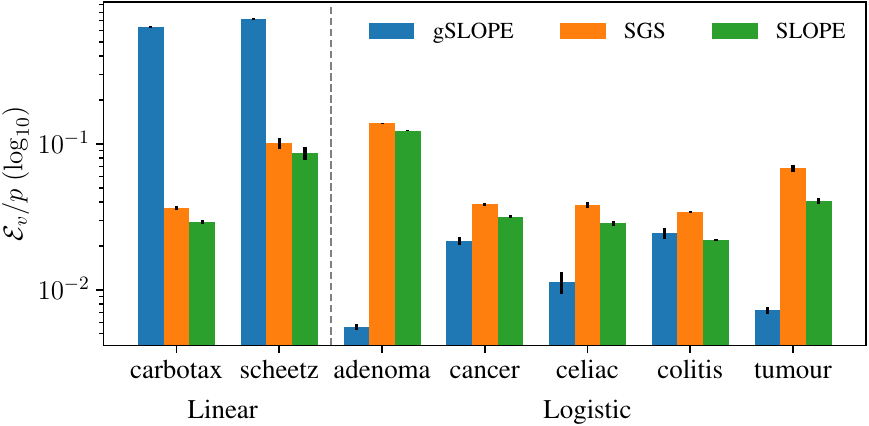}
  \caption{The ratio of the fitting set ($\mathcal{E}_v$) to the input dimensionality ($p$) ($\downarrow$) of SLOPE, gSLOPE, and SGS applied to the real datasets, for fitting $100$ path models, split into response type.}
  \label{fig:real_data_propn}

\end{figure}

\begin{figure}[H]
 \centering
\includegraphics[width=.6\linewidth,valign=t]{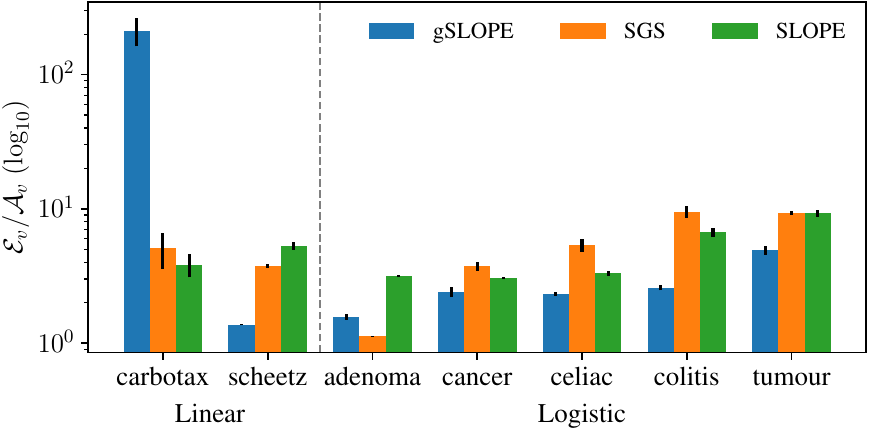}
  \caption{The ratio of the fitting set ($\mathcal{E}_v$) to the active set ($\mathcal{A}_v$) ($\downarrow$) of SLOPE, gSLOPE, and SGS applied to the real datasets, for fitting $100$ path models, split into response type.}
  \label{fig:real_data_propn_active}

\end{figure}
\label{appendix:real_data_results}

\begin{figure}[H]
    \centering
    \includegraphics[width=.6\linewidth]{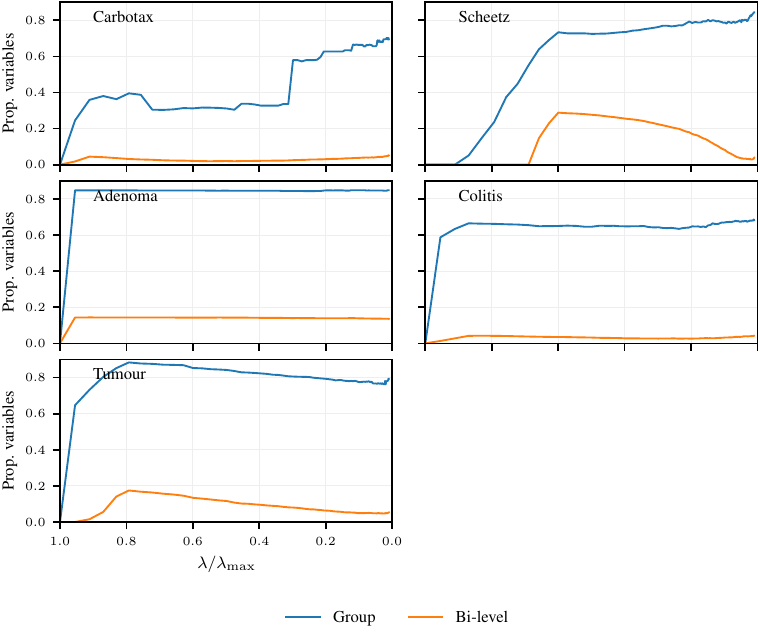}
    \captionsetup{width=.95\linewidth}
    \caption{The proportion of variables in $\mathcal{S}_v$ relative to $p$ for group-only and bi-level screening applied to SGS, plotted along the regularization path for the carbotax, scheetz, adenoma, colitis, and tumour datasets.}
\label{fig:bilevelcomparison_all}
\end{figure}
\begin{sidewaystable}
\begin{table}[H]
\caption{Variable screening metrics for SGS applied to real data in Section \ref{section:results_real}. The number of variables in $\mathcal{A}_v, \mathcal{S}_v, \mathcal{E}_v$, and $\mathcal{K}_v$ are shown. The results are averaged across the nine pathway collections, with standard errors shown.}
\vskip 0.15in
\begin{center}
\begin{small}
\begin{sc}
\begin{tabular}{llrrrr}
\toprule
Method & Dataset & $\card(\mathcal{A}_v)$ & $\card(\mathcal{S}_v)$ &  $\card(\mathcal{E}_v)$& $\card(\mathcal{K}_v)$\\
\midrule
SGS&carbotax &   $144\pm6$& $361\pm10$  & $364\pm10$ & $2\pm0.50$\\
SGS&scheetz &  $612\pm74$& $1908\pm174$  & $1915\pm175$ & $7\pm1.28$\\
SGS&adenoma &  $2174\pm23$& $2451\pm4$  & $2463\pm5$ & $12\pm0.22$\\
SGS&cancer &   $86\pm3$& $273\pm7$  & $273\pm7$ & $1\pm0.05$\\
SGS&celiac &   $175\pm10$& $545\pm22$  & $546\pm22$ & $1\pm0.07$\\
SGS&colitis &   $68\pm4$& $409\pm7$  & $411\pm7$ & $2\pm0.18$\\
SGS&tumour &   $128\pm5$& $1189\pm59$  & $1200\pm59$ & $12\pm0.54$\\

\bottomrule
\end{tabular}
\label{tbl:full_table_var_real_data}
\end{sc}
\end{small}
\end{center}
\vskip -0.1in
\end{table}
\vspace{20pt}
\begin{table}[H]
\caption{General and group screening metrics for SGS and gSLOPE applied to real data in Section \ref{section:results_real}. General metrics: the runtime (in seconds) for screening against no screening, the number of fitting iterations for screening against no screening (with the number of occasions of failed convergence given in brackets), and the $\ell_2$ distance between the fitted values obtained with screening and no screening. Group screening metrics: the number of groups in $\mathcal{A}_g, \mathcal{S}_g, \mathcal{E}_g$, and $\mathcal{K}_g$. The results are averaged across the nine pathway collections, with standard errors shown.}
\vskip 0.15in
\begin{center}
\begin{small}
\begin{sc}
\begin{tabular}{llrrrrrrrrrr}
\toprule
Method & Dataset &\multicolumn{1}{p{1.5cm}}{\raggedleft Runtime \\ screen (s)} & \multicolumn{1}{p{1.5cm}}{\raggedleft Runtime \\ no screen (s)} & $\card(\mathcal{A}_g)$ & $\card(\mathcal{S}_g)$ &  $\card(\mathcal{E}_g)$& $\card(\mathcal{K}_g)$ &\multicolumn{1}{p{1.5cm}}{\raggedleft Num it \\ screen (num failed)} & \multicolumn{1}{p{1.5cm}}{\raggedleft  Num it \\ no screen (num failed)} & \multicolumn{1}{p{1.5cm}}{\raggedleft $\ell_2$ dist \\ to no screen} \\
\midrule
gSLOPE&carbotax&$865$&$3456$&$ 71\pm3 $&$ 124\pm2 $&$124\pm2 $&$0.05\pm 0.05$& $2367\pm176(1)$&$6433\pm303(10)$&$1\times10^{-9}\pm2\times10^{-10}$\\
gSLOPE&scheetz&$4731$&$ 7735$&$ 153\pm6$&$ 212\pm6 $&$212\pm6 $&$0\pm 0$ &$6252\pm313(20)$&$7183\pm332(38)$&$8\times10^{-10}\pm8\times10^{-10}$\\
gSLOPE&adenoma& $415$&$ 12610$&$71\pm6$ &$77\pm4$&$77\pm4$&$0\pm 0$ &$3956\pm378(14)$&$7864\pm394(78)$&$9\times10^{-6}\pm1\times10^{-6}$\\
gSLOPE&cancer&$ 109$&$ 5003$&$ 46\pm3$&$81\pm5$&$81\pm5$&$0\pm 0$ &$764\pm74(0)$&$6250\pm404(40)$&$4\times10^{-7}\pm8\times10^{-8}$\\
gSLOPE&celiac&$ 205$&$ 7530$&$ 34\pm75$&$62\pm9$&$62\pm9$&$0\pm 0$ &$834\pm177(0)$&$4764\pm428(27)$&$4\times10^{-6}\pm1\times10^{-5}$\\
gSLOPE&colitis&$148$&$11567$&$69\pm5 $&$114\pm8$&$114\pm8$&$0\pm 0$ &$749\pm62(0)$&$9090\pm206(72)$&$4\times10^{-6}\pm7\times10^{-7}$\\
gSLOPE&tumour&$222$&$33405$&$ 24\pm1$&$83\pm4$&$83\pm4$&$0\pm 0$ &$892\pm67(0)$&$9224\pm218(86)$&$4\times10^{-7}\pm5\times10^{-8}$\\
SGS&carbotax&$63$&$2067$&$ 40\pm1 $&$ 138\pm1 $&$119\pm1 $&- & $168\pm60(0)$&$3572\pm350 (3)$&$8\times10^{-10}\pm3\times10^{-10}$\\
SGS&scheetz&$358$&$ 5476$&$ 105\pm4 $&$ 203\pm5 $&$186\pm6 $&- &$10\pm4(0)$&$5039\pm360 (28)$&$7\times10^{-8}\pm2\times10^{-8}$\\
SGS&adenoma& $1234$&$ 6143$&$601\pm6$ &$789\pm1$&$735\pm1$&- &$1172\pm69(0)$&$4224\pm338 (20)$&$6\times10^{-7}\pm5\times10^{-8}$\\
SGS&cancer&$ 151$&$ 211$&$ 70\pm2$&$295\pm3$&$182\pm4$&- &$124\pm15(0)$&$147\pm14(0)$&$2\times10^{-8}\pm2\times10^{-9}$\\
SGS&celiac&$ 339$&$ 2035$&$ 120\pm7$&$460\pm10$&$288\pm10$&- &$55\pm12(0)$&$1259\pm112(0)$&$9\times10^{-5}\pm2\times10^{-5}$\\
SGS&colitis&$271$&$1821$&$52\pm3 $&$368\pm3$&$230\pm3$&- &$54\pm10(0)$&$1411\pm188 (0)$&$1\times10^{-6}\pm3\times10^{-7}$\\
SGS&tumour&$781$&$7451$&$ 94\pm3$&$733\pm10$&$479\pm14$&- &$33\pm7(0)$&$6006\pm287 (23)$&$7\times10^{-7}\pm2\times10^{-7}$\\
\bottomrule
\end{tabular}
\label{tbl:full_table_grp_real_data}
\end{sc}
\end{small}
\end{center}
\vskip -0.1in
\end{table}
\end{sidewaystable}

\end{document}